\documentclass[twoside]{article}

\usepackage[accepted]{aistats2025}
\usepackage{array}
\usepackage{dsfont}
\usepackage{amsthm}
\usepackage{amsmath}
\usepackage{amssymb}
\usepackage{mathtools}
\usepackage{bm}
\usepackage{pifont}
\usepackage{bbding}
\usepackage{hyperref}
\usepackage[ruled, vlined]{algorithm2e}
\usepackage{algpseudocode}
\usepackage{thmtools}
\usepackage{hyperref}
\usepackage[dvipsnames]{xcolor}
\usepackage[unq]{unique}
\definecolor{Bleu}{RGB}{30,144,245}
\definecolor{Red}{HTML}{FF617B}
\hypersetup{colorlinks,citecolor=Bleu,linkcolor=red}
\usepackage{todonotes}

\theoremstyle{plain}

\newtheorem{remark}{Remark}

\newtheorem{theorem}{Theorem}
\newtheorem{lemma}{Lemma}
\newtheorem{definition}{Definition}


\newcommand{\todoc}[1]{\todo[color=red!40, inline,caption={}]{CK: \small #1}}
\newcommand{\todoe}[1]{\todo[color=green!40, inline]{EK: \small #1}}

%
\renewcommand{\omega}{w}
\let\leq\leqslant
\let\geq\geqslant
\newcommand{\C}{\subset}
\newcommand{\T}{\intercal}
\DeclareMathOperator{\m}{m}
\DeclareMathOperator{\M}{M}
\newcommand{\eps}{\varepsilon}
\newcommand{\bP}{\mathbb{P}}

\newcommand{\bE}{\mathbb{E}}

\newcommand{\bR}{\mathbb{R}}

\newcommand{\bN}{\mathbb{N}}

\newcommand{\cN}{\mathcal{N}}

\newcommand{\cS}{{S}} 
\newcommand{\cA}{\mathcal{A}}
\newcommand{\cB}{\mathcal{B}}

\newcommand{\cD}{\mathcal{D}}
\newcommand{\cO}{\mathcal{O}}
\newcommand{\cP}{\mathcal{P}}
\newcommand{\cX}{\mathcal{X}}
\newcommand{\cI}{\mathcal{I}}

\newcommand{\cE}{\mathcal{E}}

\newcommand{\veps}{\varepsilon}

\newcommand{\iid}{{ \it i.i.d }}

\DeclareMathOperator*{\argmax}{argmax}
\DeclareMathOperator*{\argmin}{argmin}

\newcommand{\equi}{\Longleftrightarrow}
\DeclareMathOperator{\mh}{{{m}}}
\DeclareMathOperator{\Mh}{{{M}}}

\newcommand{\muh}{\ensuremath{{\widehat{\mu}}}} 

\newcommand{\vmu}{\ensuremath{{\mu}}} 

\newcommand{\vmuh}{\ensuremath{{\widehat{\mu}}}} 


\newcommand{\lp}{\ensuremath{\left(}}
\newcommand{\rp}{\ensuremath{\right)}}

\renewcommand{\complement}{\mathsf{c}}

\newcommand{\dimvec}{\ensuremath{d}}
\newcommand{\dimfeat}{\ensuremath{h}}

\newcommand{\gege}{{{GEGE}}}
\newcommand{\dimspan}[1]{\text{dim(span(}\{ x_i: i \in #1\}))}

\newcommand{\gegeFb}{{{GEGE}}}
\newcommand{\gegeFc}{{{GEGE}}}

\newcommand{\optDesign}{\hyperref[alg:est_gaps, citecolor=.]{$\mathrm{OptEstimator}$}}

\newcommand{\myeqref}[1]{Eq.~\eqref{#1}}

\newcommand{\myfig}[1]{Fig.\ref{#1}}
\newcommand{\myalg}[1]{Algorithm~\ref{#1}}

\newcommand{\algauername}{APE}
\newcommand{\pal}{PAL} 
\newcommand{\rank}[1]{\ensuremath{\text{rank}(#1)}}

\usepackage{float}
\usepackage{enumerate}
\usepackage{subcaption}

\usepackage{pgfplots}
\usepackage{tikz}
\usetikzlibrary{calc,intersections,arrows.meta, plotmarks}
\tikzstyle{every picture}+=[remember picture]
\tikzstyle{na} = [baseline=-.5ex]
\usepackage[font=small,labelfont=bf]{caption}
\definecolor{yellow2}{rgb}{0.8862745 , 0.84313726, 0.}
\definecolor{red2}{rgb}{0.9607843 , 0.3137255 , 0.07450981}
\definecolor{green3}{rgb}{0.        , 0.43529412, 0.52156866}
\definecolor{blue}{rgb}{0.08627451211214066, 0.125490203499794, 0.23529411852359772}
\usetikzlibrary{shapes.misc}
\usepackage{booktabs}
\usepackage{stmaryrd}
\usepackage[dvipsnames]{xcolor}
%
%


\usepackage[round]{natbib}


\begin{document}

%

%
\renewcommand{\todoe}[1]{}
\renewcommand{\todoc}[1]{}
\twocolumn[
\aistatstitle{Bandit Pareto Set Identification in a Multi-Output Linear Model}
\aistatsauthor{Cyrille Kone$^1$ \And Emilie Kaufmann$^1$ \And  Laura Richert$^2$}
\aistatsaddress{$^1$ Univ. Lille, Inria, CNRS, Centrale Lille, UMR 9198-CRIStAL, F-59000 Lille, France \\ 
	$^2$ Univ. Bordeaux, Inserm, Inria, BPH, U1219, Sistm, F-33000 Bordeaux, France} ]
\runningauthor{Cyrille Kone, \;Emilie Kaufmann, \;Laura Richert} 
\begin{abstract}
We study the Pareto Set Identification (PSI) problem in a structured multi-output linear bandit model. In this setting, each arm is associated a feature vector belonging to $\bR^\dimfeat$, and its mean vector in $\bR^d$ linearly depends on this feature vector through a common unknown matrix $\Theta \in \bR^{\dimfeat \times \dimvec}$. The goal is to identify the set of non-dominated arms by adaptively collecting samples from the arms. We introduce and analyze the first optimal design-based algorithms for PSI, providing nearly optimal guarantees in both the fixed-budget and the fixed-confidence settings. Notably, we show that the difficulty of these tasks mainly depends on the sub-optimality gaps of $\dimfeat$ arms only. 
Our theoretical results are supported by an extensive benchmark on synthetic and real-world datasets. 
\end{abstract}

\section{INTRODUCTION}
\label{sec:intro}

A multi-armed bandit is a stochastic game where an agent faces $K$ distributions (or arms) whose means are unknown to her. When the distributions are scalar-valued, the agent faces two main tasks: regret minimization and pure exploration. In the former, the agent aims at maximizing the sum of observations collected along its trajectory \citep{lattimore_bandit_2020}. In pure exploration, the agent has to solve a stochastic optimization problem after some steps of exploration, and it does not suffer any loss during exploration \citep{bubeck_pure_2008}. Examples of pure exploration tasks include best arm identification in which the goal is to find the arm with the largest mean \citep{audibert_best_2010}, thresholding bandit \citep{locatelli_optimal_2016}, or combinatorial bandits \citep{combBandit}, to name a few. 

In this paper, we are interested in the less common setting where the rewards are $\bR^d$-valued, with $d>1$. Different pure exploration tasks have been considered in this context, e.g., finding the set of feasible arms, i.e., arms whose mean satisfies some constraints~\citep{katz-samuels_feasible_2018}, or a feasible arm maximizing a linear combination of the different criteria~\citep{katz-samuels_top_2019,faizal2022constrained}. Finding appropriate constraints is not always possible in practical problems, and our focus is on the identification of the Pareto set, that is, the set of arms whose means are not uniformly dominated by that of any other arm, a setting first studied by~\citep{zuluaga_active_201,auer_pareto_2016}. We note that a regret minimization counterpart of this problem has been considered by~\citep{drugan_designing_2013}. 


Pareto set identification can be relevant in many real-world problems where there are multiple, possibly conflicting objectives to optimize simultaneously. Examples include monitoring the energy consumption and runtime of different algorithms (see our use case in Section~\ref{sec:expe}) or identifying a set of interesting vaccines by observing different immunogenicity criteria (antibodies, cellular response, that are not always correlated, as exemplified by \cite{kone2023adaptive}). In both cases, there could be many arms with a few descriptors of the different arms (e.g., vaccine technology, doses, injection times). By incorporating such arm features in the model, we expect to reduce substantially the number of samples needed to identify the Pareto set.

In this work, we incorporate some structure in the PSI identification problem through a multi-output linear model, formally described in Section~\ref{sec:setting}. In this model, each of the $K$ arms whose means are in $\bR^d$ is described by a feature vector in $\bR^h$, $h>1$. We propose the \gege{} algorithm, which combines a G-optimal design exploration mechanism with an accept/reject mechanism based on the estimation of some notion of sub-optimality gap. \gege{} can be instantiated in both the fixed-budget setting (given at most $T$ samples, output a guess of the Pareto set minimizing the error probability) and the fixed-confidence setting (minimize the number of samples used to guarantee an error probability smaller than some prescribed $\delta$). Through a unified analysis, we show that in both cases, the sample complexity of \gege{}, that is, the number of samples needed to guarantee a certain probability of error, scales only with the $h$ smallest sub-optimality gaps. This yields a reduction in sample complexity due to the structural assumption. 
Finally, we empirically evaluate our algorithms with extensive synthetic and real-world datasets and compare their performance with other state-of-the-art algorithms.

\noindent{\bf Related work} When $d=1$ and the feature vectors are the canonical basis of $\bR^K$, PSI coincides with the best arm identification problem, that has been extensively studied in the literature both in the fixed-budget \citep{audibert_best_2010,karnin_almost_2013,carpentier_tight_2016} and the fixed-confidence settings \cite{kalyanakrishnan_pac_2012,jamieson_lil_2013}. For sub-Gaussian distributions, the sample complexity is known to be essentially characterized (up to a $\log(K)$ factor in the fixed-budget setting) by a sum over the $K$ arms of the inverse squared value of their \emph{sub-optimality gap}, which is their distance to the (unique) optimal arm. In the fixed-confidence setting and for Gaussian distributions, there are even algorithms matching the minimal sample complexity when $\delta$ goes to zero, which takes a more complex, non-explicit form (e.g., \cite{garivier_optimal_2016,you23a}).    

Still, when $d=1$ but for general features in $\bR^h$, our model coincides with the well-studied linear bandit model (with finitely many arms), in which the best arm identification task has also received some attention.  It was first studied by \cite{soare} in the fixed-confidence setting, who established the link with optimal designs of experiments \citep{pukelsheim_opt_design}, showing that the minimal sample complexity can be expressed as an optimal (XY) design. The authors proposed the first elimination algorithms where, in each round the surviving arms are pulled according to some optimal designs and obtained a sample complexity scaling in $(h/\Delta_{\min}^2)\log(1/\delta)$ where $\Delta_{\min}$ is the smallest gap in the model. 

\cite{tao} further proposed an elimination algorithm using a novel estimator of the regression parameter based on a G-optimal design, with an improved sample complexity in $\sum_{i=1}^{h}\Delta_{(i)}^{-2}\log(1/\delta)$ where $\Delta_{(1)}\leq\dots\leq \Delta_{(h)}$ are the $h$ smallest gaps. This bound improves upon the complexity of the unstructured setting when $K \gg\dimfeat$. Some algorithms even match the minimal sample complexity either in the asymptotic regime $\delta\rightarrow 0$ \citep{degenne20gamification,jedra} or within multiplicative factors \cite{tanner}. Some adaptive algorithms, such as LinGapE \cite{xu} are also very effective in practice but without provably improving over unstructured algorithms in all instances.

The fixed-budget setting has been studied by~\citet{azizi, yang}, who propose algorithms based on Sequential~Halving \citep{karnin_almost_2013} where in each round, the active arms are sampled according to a G-optimal design. The best guarantees are those obtained by \cite{yang} who show that a budget $T$ of order $\log_2(h)\sum_{i=1}^{h}\Delta_{(i)}^{-2} \log(1/\delta)$ is sufficient to get an error smaller than $\delta$.
\cite{katz_samuels_empirical_process} propose an elimination algorithm that can be instantiated both in the fixed confidence and fixed budget settings and is close in spirit to our algorithm. However, unlike prior work, their optimal design aims at minimizing a new complexity measure called the Gaussian width that may better characterize the non asymptotic regime of the error. Extending this notion, or that of minimal (asymptotic) sample complexity to linear PSI is challenging due to the complex structure of the set of alternative models with a different Pareto set. In this work, our focus is on obtaining refined gap-based guarantees for the structured PSI problem.   
 
When $d>1$, the PSI identification problem has been mostly studied in the unstructured setting ($h=K$, canonical basis features). \citet{auer_pareto_2016} introduced some appropriate (non-trivial) notions of sub-optimality gaps for the PSI problem, which we recall in the next section. They proposed an elimination-based fixed-confidence algorithm whose sample complexity scales in $\sum_{i=1}^{K}\Delta_i^{-2}\log(1/\delta)$, which is proved to be near-optimal. A fully sequential algorithm with some slightly smaller bound was later given by \cite{kone2023adaptive}, who can further address different relaxations of the PSI problem. \cite{kone2023bandit} proposed the first fixed-budget PSI algorithm: a generic round-based elimination algorithm that estimates the sub-optimality gaps of \cite{auer_pareto_2016} and discard and classify some arms at the end of each round, with a sample complexity in  $\sum_{i=1}^{K}\Delta_i^{-2}\log(K)\log(1/\delta)$.

The multi-output linear setting that we consider in this paper was first studied by \cite{lu_multi_objective_generalized} from the Pareto regret minimization perspective. 
This model may also be viewed as a special case of the multi-output kernel regression model considered by \cite{zuluaga_e-pal_2016} when a linear kernel is chosen. This work provides guarantees for approximate identification of the Pareto set, scaling with the information gain. Choosing appropriately the approximation parameter in $\varepsilon$-PAL as a function of the smallest gap $\Delta_{\min}$ yields a fixed-confidence PSI algorithm with sample complexity of order $(h^2/\Delta_{\min}^2)\log(1/\delta)$. More recently, the preliminary work of \cite{kim2023pareto} proposed an extension of the fixed-confidence algorithm of \cite{auer_pareto_2016} with a robust estimator to simultaneously minimize the Pareto regret and identify the Pareto set. Their claimed sample complexity bound is in $(h/\Delta_{\min}^2)\log(1/\delta)$. 
\paragraph{Contributions}
We propose \gege{}, the first algorithm for PSI that relies on an optimal design to estimate the PSI gaps. In the fixed-confidence setting, \gege{} only uses $O(\log(1/\Delta_{(1)}))$ adaptive rounds to identify the
Pareto set, and we prove an improved sample complexity bounds in which $(h/\Delta_{\min}^2)$ is replaced by the sum $\sum_{i=1}^{h}\Delta_{(i)}^{-2}$.
Moreover, to the best of our knowledge, the fixed-budget variant of \gege{} is the first algorithm for fixed-budget PSI in a multi-output linear bandit model and enjoys near-optimal performance. Our experiments confirm these good theoretical properties and illustrate the impact of the structural assumption.
\section{SETTING}\label{sec:setting}
We formalize the linear PSI problem. Let $d,h\in \bN^\star$ and $h\leq K$. $\nu_1,\dots, \nu_K$ are distributions over $\bR^d$ with means (resp.) $\vmu_1, \dots, \vmu_K \in \bR^d$. We assume there are known feature vectors $x_1,\dots, x_K \in \bR^h$ associated to each arm and an unknown matrix $\Theta\in \bR^{h\times d}$ such that for any arm $k$, $\vmu_k = \Theta^\T x_k $. Let $\cX := (x_1 \dots x_K)^\T$ and $[K]= \{1,\dots, K\}$. The Pareto set is defined as 
$\cS^\star = \{ i \in [K] : \nexists j \in [K]\backslash \{i\} : \mu_i \preceq \mu_j\}$
in the sense of the following (Pareto) dominance relationship. 

\begin{definition}
For any two arms $i,j \in [K]$, $i$ is weakly 
dominated by $j$ if for any $c\in \{1, \dots, \dimvec\}$, $\mu_{i}(c) \leq \mu_{j}(c)$. An arm $i$ is 
dominated by $j$ ($\vmu_i \preceq \vmu_j$ or simply $i\preceq j$) if $i$ is weakly 
dominated by $j$ and there exists $c \in \{1, \dots, \dimvec \}$ such that $\mu_i(c) < \mu_j(c)$. An arm $i$ is strictly 
dominated by $j$ ($\vmu_i\prec \vmu_j$ or simply $i \prec j$) if for any $c \in \{1, \dots, \dimvec\}$, $\mu_i(c) < \mu_j(c)$.  
\end{definition}

In each round $t$, an agent chooses an action $a_t$ from $[K]$ and observes a response $y_t = \Theta^\T x_{a_t} + \eta_t $ where $(\eta_s)_{s\leq t}$ are\iid centered vectors in $\bR^d$ whose marginal distributions are $\sigma$-subgaussian.\footnote{A centered random variable $X$ is $\sigma$- subgaussian if for any $\lambda\in \bR, \log\bE[\exp(\lambda X)] \leq \lambda^2\sigma^2/2$.} In this stochastic game, the goal of the agent is to identify the Pareto set $\cS^\star$. In the fixed-confidence setting, given $\delta \in (0,1)$, the agent collects samples up to a (random) stopping time $\tau$ and outputs a guess $\widehat S_\tau$ that should satisfy $\bP(\widehat S_\tau \neq \cS^\star)\leq \delta$ while minimizing $\tau$ (either with high-probability or in expectation). In the fixed-budget setting, the agent should output a set $\widehat S_T$ after $T$ (fixed) rounds and minimize $e_T:= \bP(\widehat S_T \neq \cS^\star)$. 

\paragraph{Notation} The following notation is used throughout the paper.
 $\boldsymbol{\Delta}_n$ is the probability simplex of $\bR^n$ and if $A\in \bR^{n\times n}$ is positive semidefinite, for $x\in \bR^n$, $\|x\|_{A}^2 = x^\T A x$ and $x(i)$ denotes the $i$-th component of $x$. For $a,b\in \bR$, $a\land b := \min(a, b)$, and $(a)_+ := \max(a,0)$.

\subsection{Complexity Measures for Pareto Set Identification} 
\label{subsec:complexity_measure}
Choosing the features vectors to be the canonical basis of $\bR^K$ and $\Theta = (\mu_1,\dots,\mu_K)^\T$, we recover the unstructured multi-dimensional bandit model, in which the complexity of Pareto set identification is known to depend on some notion of sub-optimality gaps, first introduced by \citet{auer_pareto_2016}. These gaps can be expressed with the quantities
\[\m(i,j) := \min_{c \in [d]}\left[\mu_j({c}) - \mu_{i}(c)\right] \ \text{and} \  \M(i,j) := -\m(i,j) 
.\] We can observe that $\m(i,j) >0$ iff $i\prec j$ and represents the amount by which $j$ dominates $i$ when positive. Similarly, $\M(i,j)>0$ iff $i\npreceq j$ and when positive represents the quantity that should be added component-wise to $j$ for it to dominate $i$. The sub-optimality gap $\Delta_i$ measures the difficulty of classifying arm $i$ as optimal or sub-optimal and can be written (Lemma 1 of \citet{kone2023bandit})
\begin{equation}
\label{eq:def-gap}
	\Delta_i := \left\{\begin{array}{ll}
	               \Delta_i^\star := \max_{j\in [K]}\m(i, j) & \text{ if } i\notin \cS^\star \\
	                \delta_i^\star & \text{ else, }     \\
	                    \end{array}\right.
\end{equation}		
where $\delta_i^\star := \min_{j\neq i} [\M(i, j)\land (\M(j, i)_+ +(\Delta_j^\star)_+)]$. For a sub-optimal arm $i$, $\Delta_i$ is the smallest quantity by which $\vmu_i$ should be increased to make $i$ non-dominated. For an optimal arm $i$,  $\Delta_i$ is the minimum between some notion of "distance'' to the other optimal arms, $\min_{j\in \cS^\star\backslash\{i\}} [\M(i,j) \land \M(j,i)]$ and the smallest margin to the sub-optimal arms $\min_{j\notin \cS^\star} [\M(j,i)_+ + (\Delta_j^\star)_+]$. These quantities are illustrated in Appendix~\ref{sec:lower_bounds}. 

We assume without loss of generality that $\Delta_1\leq \dots\leq \Delta_{K}$ and we recall the quantities 
$H_1 = \sum_{i=1}^K {\Delta_i^{-2}}$ and $\quad H_2 := \max_{i\in [K]} {i}{\Delta_i^{-2}}$ which have been used to measure the difficulty of Pareto set identification respectively in fixed-confidence \citep{auer_pareto_2016} and fixed-budget \citep{kone2023bandit} settings. In this work, we introduce two analog quantities for linear PSI, namely 
\begin{equation}
\label{eq:eq-jj1}
H_{1, \text{lin}} = \sum_{i=1}^\dimfeat \frac{1}{\Delta_i^2} \quad \text{and} \quad H_{2, \text{lin}} := \max_{i\in [\dimfeat]} \frac{i}{\Delta_i^2}	
\end{equation}
and we will show that the hardness of linear PSI can be characterized by $H_{1, \text{lin}}$ and $H_{2, \text{lin}}$ respectively in the fixed-confidence and fixed-budget regimes. These complexity measures are smaller than $H_1$ and $H_2$, respectively, as they only feature the $h$ smallest gaps.
 To obtain this reduction in complexity, it is crucial to estimate the underlying parameter $\Theta \in \bR^{h\times d}$ instead of the $K$ mean vectors. 
 \subsection{Least Square Estimation and Optimal Designs}
 \label{sec:ls}
 Given $n$ arm choices in the model, $a_1,\dots,a_n$, we define $X_n := ({x_{a_1}} \dots {x_{a_{n}}})^\T \in \bR^{n\times h}$ and we denote by $Y_n := (y_1 \dots{y_{n}})^\T \in \bR^{n\times d}$ the matrix gathering the vector of responses collected. We define the information matrix as $V_n := X_n^\T X_n =   \sum_{i=1}^{K} {T_n(i)} { x_{i}} x_{i}^\T \in \bR^{h \times h}$ where ${T_i(n)}$ denotes the number of observations from arm $i$ among the $n$ samples. More generally, given $\omega \in \bR^K$, we define $V^{\omega} := \sum_{i=1}^K \omega(i) x_i x_i^\T$.  
 
 The multi-output regression model can be written in matrix form as $Y_n = X_n\Theta + H_n$ where $H_n =(\eta_1 \dots {\eta_{n}})^\T$ is the noise matrix. The least-square estimate $\widehat{\Theta}_n$ of the matrix $\Theta$ is defined as the matrix minimizing the least-square error $\text{Err}_n(A) := \left\|X_n  A  -  Y_n \right\|_{\text{F}}^2$. Computing the gradient of the loss yields $V_n \widehat{\Theta}_n = X_n^\top Y_n$.
If the matrix $V_n$ is non-singular, the least-square estimator can be written
 \[\widehat{\Theta}_n =  V_{n}^{-1} X_n^\T  Y_n.\]
In the course of our elimination algorithm, we will compute least-square estimates based on observation from a restricted number of arms, and we will face the case in which $V_n$ is singular. In this case, different choices have been made in prior work on linear bandits: \cite{alieva_robust} defines a custom ``pseudo-inverse'' while \cite{yang} define new contexts $\widetilde x_i $ that are projections of the $x_i$ onto a sub-space of dimension $\text{rank}(\cX_S)$ where $\cX_S:= (x_i: i\in S)^\T$ and $S$ is the set of arms that are active. We adopt an approach close to the latter, which is described below. Let the singular-value decomposition of $(\cX_S)^\T$ be $USV^\T$ where $U, V$ are orthogonal matrices and $B:= (u_1, \dots, u_m)$ is formed with the first $m$ columns of $U$ where $m = \rank{\cX_S}$. We then define 
\begin{equation}
\label{eq:defVN}
V_n^{\dagger} := B (B^\T V_nB)^{-1}B^\T 	\quad \text{and}\quad \widehat{\Theta}_n = V_n^{\dagger}X_n^\T Y_n.
\end{equation}
The following result addresses the statistical uncertainty of this estimator. 

\begin{restatable}{lemma}{lemPseudoInv}
\label{lem:lem-pseudo-inv}
If the noise $\eta_t$ has covariance $\Sigma \in \bR^{d\times d}$ and $a_1,\dots, a_{n}$ are deterministically chosen then for any $x_i \in \{x_{a_1},\dots,x_{a_n}\}$, $\emph{Cov}(\widehat{\Theta}_n ^\T x_i) =   \| x_i\|_{ V_n^{\dagger}}^2\Sigma.$
\end{restatable}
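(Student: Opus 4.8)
The plan is to split $\widehat{\Theta}_n^\T x_i$ into a non-random bias term and a noise term, and then collapse the covariance of the noise term onto a single quadratic form through the defining identity of $V_n^{\dagger}$.

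First I would use that $V_n^{\dagger}$ is symmetric to transpose the definition $\widehat{\Theta}_n = V_n^{\dagger} X_n^\T Y_n$, obtaining $\widehat{\Theta}_n^\T x_i = Y_n^\T X_n V_n^{\dagger} x_i$. Substituting the model $Y_n = X_n \Theta + H_n$ and using $X_n^\T X_n = V_n$ yields
\[
\widehat{\Theta}_n^\T x_i = \Theta^\T V_n V_n^{\dagger} x_i + H_n^\T X_n V_n^{\dagger} x_i.
\]
Since $\Theta$ is fixed and $a_1,\dots,a_n$ are deterministic, the first term is non-random and drops out of the covariance, so everything reduces to the noise term.

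Next I would set $z := X_n V_n^{\dagger} x_i \in \bR^n$ and note that the columns of $H_n^\T$ are exactly $\eta_1,\dots,\eta_n$, so the noise term equals $\sum_{t=1}^n z(t)\,\eta_t$. Independence and the common covariance $\Sigma$ of the $\eta_t$ kill the cross-terms, giving
\[
\mathrm{Cov}\Big(\textstyle\sum_{t=1}^n z(t)\,\eta_t\Big) = \Big(\textstyle\sum_{t=1}^n z(t)^2\Big)\Sigma = \|z\|^2\,\Sigma.
\]

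Finally I would expand $\|z\|^2 = x_i^\T V_n^{\dagger} X_n^\T X_n V_n^{\dagger} x_i = x_i^\T V_n^{\dagger} V_n V_n^{\dagger} x_i$, once more using the symmetry of $V_n^{\dagger}$. The one genuinely structural step, and the only point I expect to require care, is the generalized-inverse identity $V_n^{\dagger} V_n V_n^{\dagger} = V_n^{\dagger}$. Writing $M := B^\T V_n B$, it follows mechanically from \myeqref{eq:defVN}:
\[
V_n^{\dagger} V_n V_n^{\dagger} = B M^{-1}(B^\T V_n B) M^{-1} B^\T = B M^{-1} B^\T = V_n^{\dagger},
\]
which requires $M$ to be invertible, i.e. that the deterministic design spans the active subspace $\mathrm{col}(B)$ (the standing condition that makes $\widehat{\Theta}_n$ well-defined). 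Plugging this in gives $\|z\|^2 = x_i^\T V_n^{\dagger} x_i = \|x_i\|_{V_n^{\dagger}}^2$, which is the claim; the hypothesis $x_i \in \{x_{a_1},\dots,x_{a_n}\}$ merely places $x_i$ in $\mathrm{col}(B)$, keeping the construction consistent, but is not otherwise needed for the covariance computation itself.
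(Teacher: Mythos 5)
Your proof is correct and follows essentially the same route as the paper's: substitute $Y_n = X_n\Theta + H_n$ into the estimator, isolate the term linear in the noise matrix $H_n$, compute its covariance using independence of the $\eta_t$, and collapse the resulting quadratic form $x_i^\T V_n^{\dagger} V_n V_n^{\dagger} x_i$ via the generalized-inverse identity $V_n^{\dagger} V_n V_n^{\dagger} = V_n^{\dagger}$, which is exactly the algebra the paper performs when simplifying $\tilde x^\T \tilde x$. The only cosmetic difference is that you discard the deterministic term by translation-invariance of the covariance, whereas the paper additionally identifies it as $BB^\T\Theta$ (hence unbiasedness of $\widehat{\Theta}_n^\T x$ for $x$ in the span) --- a fact the paper reuses for the centering in the sub-Gaussian bound of Lemma~\ref{lem:lem-concentr-design}, but which, as you correctly observe, is not needed for the covariance claim itself.
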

Therefore, estimating all arms'mean uniformly efficiently amounts to pull $\{a_1,\dots,a_n\}$ to minimize $\max_{i\in S} \| x_i\|_{ V_n^{\dagger}}^2$. The continuous relaxation of this problem is equivalent to computing an allocation 
\begin{equation}
\label{eq:designg}
\omega^\star_S \in \argmin_{\omega \in \boldsymbol{\Delta}_{\lvert S \rvert}} \max_{i \in S} \|\widetilde x_i\|_{(\widetilde V^{{w}})^{-1}}^2 	
\end{equation}
where $\widetilde x_i := B^\T x_i$, $\widetilde V^\omega := \sum_{i\in S} \omega(s_i) \widetilde x_i \widetilde x_i^\T $ and $i\mapsto s_i$ maps $S$ to $\{1,\dots, \lvert S \rvert\}$. \eqref{eq:designg} is a G-optimal design over the features $(B^\T x_i, i\in S)$ and it can be interpreted as a distribution over $S$ that yields a uniform estimation of the mean responses for \eqref{eq:defVN}. This is formalized in Appendix~\ref{sec:compte_round}.  
\section{OPTIMAL DESIGN ALGORITHMS FOR LINEAR PSI}
\label{sec:algo}
Our elimination algorithms operate in rounds. They progressively eliminate a portion of arms and classify them as optimal or sub-optimal based on empirical estimation of their gaps. In each round, a sampling budget is allocated among the surviving arms based on a G-optimal design. 

\subsection{Optimal Designs and Gap Estimation} 

At round $r$, we denote by $A_r$ the set of arms that are still active. To estimate the means and, henceforth, the gaps, we first compute an estimate of the regression matrix denoted $\widehat{\Theta}_r$. This estimate is obtained by carefully sampling the arms using the integral rounding of a G-optimal design. 

\begin{algorithm}
\caption{\text{OptEstimator$(S, N, \kappa)$}}
   \label{alg:est_gaps}

   {\bfseries Input:} {$S \subset [K]$, sample size $N$, precision $\kappa$}
   
   {Compute}  the transformed features $\widetilde{\cX}_S = (B^\T x_{i}, i \in S)$ with $B$ as defined in Section~\ref{sec:ls}
   
    {Compute a G-optimal design $w^\star_S$ over the set $\widetilde{\cX}_S$}
   
  {Pull} $(a_1,\dots, a_N) \gets \texttt{ROUND}(N, \widetilde{\cX}_S, \omega^\star_S, \kappa)$  and collect responses $y_1, \dots, y_N$ 

   {Compute} $V_N^{\dagger}$ as in \myeqref{eq:defVN} and compute the OLS estimator on the samples collected
   $$\widehat\Theta \gets  V_N^{\dagger} \sum_{t=1}^N x_{a_t}^\T y_t $$ 
 
 {\bfseries return:} {$\widehat \Theta$}

\end{algorithm}

Algorithm~\ref{alg:est_gaps} takes as input a set of arms $S$, a budget $N$ and chooses some $N$ arms to pull (with repetitions) based on an integer rounding of $w_S^\star$, a continuous G-optimal design over the set $\{\widetilde{x}_i, i\in S\}$ of (transformed) features associated to that arms. Several rounding procedures have been proposed in the literature, and we use that of \cite{zhu}, henceforth referred to as \texttt{ROUND}. In Appendix~\ref{sec:compte_round}, we show that $\texttt{ROUND}(N,\widetilde{\cX}_S,w_S^\star, \kappa)$ outputs a sequence of arms $a_1,\dots,a_N \in S$  such that $\max_{i \in S} ||x_i||_{V_N^\dagger}^2 \leq (1+6\kappa)\tfrac{F_S(w_S^\star)}{N}$, where $F_S(w_S^\star)$ is the optimal value of \eqref{eq:designg}. Using the Kiefer-Wolfowitz theorem \citep{kiefer_wolfowitz_1960}, we further prove that $F_S(w_S^\star) = {h}_S$, the dimension of $\text{span}(\{x_i, i\in S\})$. This observation is crucial to prove the following concentration result at the heart of our analysis. 
 
\begin{restatable}{lemma}{lemConcentrDesign}
\label{lem:lem-concentr-design}
Let $S\subset [K]$, $\kappa \in (0, 1/3]$ and $N\geq 5\dimfeat_S/\kappa^2$ where 
$\dimfeat_S = \emph{dim(span($\{x_i: i\in S\}$))}$. The output $\widehat\Theta$ of \optDesign($S$, $N$, $\kappa$) satisfies for all $\eps>0$ and $i\in S$
	$$ \bP\left(\|(\Theta - \widehat\Theta)^\T x_i\|_\infty \geq \veps\right) \leq 2 \dimvec \exp\left( - \frac{N\eps^2}{2(1+6\kappa)\sigma^2 \dimfeat_S}\right).$$
\end{restatable}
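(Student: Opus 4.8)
The plan is to reduce the statement to a scalar subgaussian tail bound applied coordinate-wise, after showing that the error $(\widehat\Theta-\Theta)^\T x_i$ is an \emph{unbiased} linear functional of the noise whose variance proxy is exactly $\|x_i\|_{V_N^\dagger}^2$. First I would expand the estimator using $Y_N = X_N\Theta + H_N$ and the symmetry of $V_N^\dagger = B(B^\T V_N B)^{-1}B^\T$, giving
\[
\widehat\Theta^\T x_i = Y_N^\T X_N V_N^\dagger x_i = \Theta^\T V_N V_N^\dagger x_i + H_N^\T X_N V_N^\dagger x_i .
\]
The crucial observation is that the first (bias) term collapses to $\Theta^\T x_i$. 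Indeed, since $B$ is an orthonormal basis of $\text{span}\{x_j:j\in S\}$ and $x_i$ lies in that span, I can write $x_i = B\widetilde x_i$ with $\widetilde x_i = B^\T x_i$, and factor $V_N = B\widetilde V_N B^\T$ with $\widetilde V_N = \sum_t \widetilde x_{a_t}\widetilde x_{a_t}^\T$, so that $B^\T V_N B = \widetilde V_N$ and $V_N^\dagger = B\widetilde V_N^{-1}B^\T$. A one-line computation then yields $V_N V_N^\dagger x_i = B\widetilde V_N \widetilde V_N^{-1}\widetilde x_i = x_i$, whence
\[
(\widehat\Theta - \Theta)^\T x_i = H_N^\T z, \qquad z := X_N V_N^\dagger x_i \in \bR^N .
\]
This is the same linear-in-noise decomposition underlying Lemma~\ref{lem:lem-pseudo-inv}; here I use it to control tails rather than just the covariance.

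Next I would read off the subgaussian behaviour of each coordinate. The $c$-th component equals $\sum_{t=1}^N z(t)\,\eta_t(c)$, which is a \emph{fixed} linear combination (the $a_t$, hence $z$, are deterministic) of the independent $\sigma$-subgaussian variables $\eta_1(c),\dots,\eta_N(c)$, and is therefore $\sigma\|z\|_2$-subgaussian. I then compute the variance proxy explicitly: $\|z\|_2^2 = x_i^\T V_N^\dagger V_N V_N^\dagger x_i = x_i^\T V_N^\dagger x_i = \|x_i\|_{V_N^\dagger}^2$, again using $V_N V_N^\dagger x_i = x_i$, which matches the covariance formula of Lemma~\ref{lem:lem-pseudo-inv}.

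Finally I would invoke the two facts recalled just before the statement: under $N\geq 5h_S/\kappa^2$ and $\kappa\in(0,1/3]$, the \texttt{ROUND} guarantee gives $\|x_i\|_{V_N^\dagger}^2 \leq (1+6\kappa)F_S(w_S^\star)/N$, and Kiefer--Wolfowitz gives $F_S(w_S^\star) = h_S$, so $\|z\|_2^2 \leq (1+6\kappa)h_S/N$. Plugging this into the scalar tail bound $\bP(|\sum_t z(t)\eta_t(c)|\geq\eps)\leq 2\exp(-\eps^2/(2\sigma^2\|z\|_2^2))$ for each $c\in[d]$ and taking a union bound over the $d$ coordinates yields the claimed $2d\exp(-N\eps^2/(2(1+6\kappa)\sigma^2 h_S))$. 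The only delicate point is the exact cancellation of the bias, which relies on $x_i\in\text{span}\{x_j:j\in S\}$ and on the precise $B$-factorization of $V_N^\dagger$; everything after that is routine subgaussian concentration, and the hypothesis $N\geq 5h_S/\kappa^2$ enters solely to validate the \texttt{ROUND} approximation factor.
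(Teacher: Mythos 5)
Your proof is correct and follows essentially the same route as the paper's: both reduce the error to the linear-in-noise term $H_N^\T X_N V_N^\dagger x_i$ (you re-derive the bias cancellation via the factorization $V_N = B\widetilde V_N B^\T$, while the paper imports it from the proof of Lemma~\ref{lem:lem-psinv-gene}), both identify the coordinate-wise variance proxy as $\|x_i\|_{V_N^\dagger}^2$ via $V_N^\dagger V_N V_N^\dagger = V_N^\dagger$, and both conclude with the \texttt{ROUND}/Kiefer--Wolfowitz bound $\|x_i\|_{V_N^\dagger}^2 \leq (1+6\kappa)\dimfeat_S/N$ followed by a subgaussian tail bound and a union bound over the $\dimvec$ coordinates.
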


Once the parameter $\widehat{\Theta}_r$ has been obtained as an output of Algorithm~\ref{alg:est_gaps} with $S=A_r$ and an appropriate value of the budget $N$, we compute estimates of the mean vectors as
 $\vmuh_{i,r} := \widehat\Theta_r^\T x_i$
and the empirical Pareto set of active arms, \[S_r := \{i \in A_r : \nexists j \in A_r : \muh_{i,r} \prec \muh_{j,r} \}.\] 
 In both the fixed-confidence and fixed-budget settings, at round $r$, after collecting new samples from the surviving arms, \gege{} discards a fraction of the arms based on the empirical estimation of their gaps. We first introduce the empirical quantities used to compute the gaps: 
 \begin{eqnarray*}
 	\Mh(i,j; r) &:=& \max_{c\in [\dimvec]} [\muh_{i,r}(c) - \muh_{j,r}(c)]  \quad \text{and} \\ \mh(i,j; r) &:=& \min_{c \in [\dimvec]}[\muh_{j,r}(c) -  \muh_{i,r}(c)].
 \end{eqnarray*}
We define for any arm $i \in A_r$,  the empirical estimates of the PSI gaps  as: 
\begin{equation}
\label{eq:eq-def-gap}
\widehat \Delta_{i,r} := 
\begin{cases}
\widehat\Delta_{i, r}^\star :=  \max_{j\in A_r}\mh(i,j;r) &\text{ if } i\in A_r\backslash S_r
\\ \widehat \delta_{i,r}^\star  &\text{ if } i \in S_r
	\end{cases}
\end{equation}
with $\widehat \delta_{i,r}^\star := \min_{j \in A_r \backslash\{i\}} [\Mh(i,j;r) \!\land \! (\Mh(j,i;r)_+ + (\widehat \Delta_{i, r}^\star)_+) ]$; the empirical estimates of the gaps introduced earlier in Section~\ref{subsec:complexity_measure}. 
Differently from BAI, as the size of the Pareto set is unknown, we need an accept/reject mechanism to classify any discarded arm. This mechanism is described in detail in the next sections for the fixed-budget and fixed-confidence versions. 

\paragraph{Final output} In both cases, letting $A_r$ be the set of active arms and $B_r$ be the set of arms already classified as optimal at the beginning of round $r$, \gege{} outputs $B_{\tau+1} \cup A_{\tau+1}$ as the candidate Pareto optimal set, where $\tau$ denotes the final round. And $A_{\tau+1}$ contains at most one arm. 
\subsection{Fixed-budget algorithm}
 
Algorithm~\ref{alg:gegefb}, operates over $\lceil \log_2(\dimfeat)\rceil$ rounds, with an equal budget of $T/\lceil \log_2(\dimfeat)\rceil$ allocated per round. By construction $\lvert A_{\lceil \log_2(\dimfeat)\rceil+1} \lvert = 1$. 
At the end of round $r$, the $\lceil \dimfeat/2^r\rceil$ arms with the smallest empirical gaps are kept active while the remaining arms are discarded and classified as Pareto optimal (added to $B_{r+1}$) if they are empirically optimal (belonging to set $S_r$) and deemed sub-optimal otherwise. 
If a tie occurs, we break it to eliminate arms that are empirically sub-optimal. This is crucial to prove the guarantees on the algorithm, as sketched in Section~\ref{sec:analysis}. 
  
  \begin{algorithm}
\caption{GEGE: G-optimal Empirical Gap Elimination [\textcolor{red}{fixed-budget}]}
   \label{alg:gegefb}

 {\bfseries Input:} {budget $T$}
 
 {\bfseries Initialize:} {let $A_1 \gets [K], B_1 \gets \emptyset$, $D_1 \gets \emptyset$ }
 
   \For{$r=1$ {\bfseries to} $\lceil \log_2(\dimfeat)\rceil$}{
      
     {Compute} {$\widehat\Theta_r  \gets \text{\optDesign$(A_r, T/\log_2(\dimfeat), 1/3)$}$}
      
   {Compute} $S_r$ the empirical Pareto set  and the empirical gaps $\widehat\Delta_{i,r}$ with Eq.\eqref{eq:eq-def-gap}
   
    {Compute} $A_{r+1}$ the set of $\left\lceil \frac{\dimfeat}{2^r}\right\rceil$ arms in $A_{r}$ with the smallest empirical gaps 
    \tcp{ties broken by keeping arms of $S_r$}
    Update $B_{r+1} \gets B_r \cup \left\{ S_r \cap (A_r \backslash A_{r+1})\right\}$ and $D_{r+1} \gets D_r \cup \left\{  (A_r \backslash A_{r+1}) \backslash S_r \right\}$
   }
    {\bfseries return:} {$B_{\lceil \log_2(\dimfeat) \rceil +1 } \bigcup A_{\lceil \log_2(\dimfeat) \rceil +1 }$}
\end{algorithm}
\begin{restatable}{theorem}{thmProbGege}
\label{thm:thm-prob-gege}
The probability of error of Algorithm~\ref{alg:gegefb} run with budget $T\geq 45\dimfeat\log_2\dimfeat$ is at most 
$$\exp\left(-\frac{T}{1200\sigma^2H_{2, \text{lin}}\lceil \log_2\dimfeat\rceil}+\log C(\dimfeat, \dimvec, K)\right)$$
where $C(\dimfeat, \dimvec, K) = 2d\left({K} + {\dimfeat} + {\lceil \log_2\dimfeat\rceil }\right)$. 
\end{restatable}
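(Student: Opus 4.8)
The plan is to run a sequential-halving-style analysis: I define a single favorable event on which all the empirical gaps computed by \gege{} are accurate enough that every eliminated arm is correctly classified, prove that on this event the output equals $\cS^\star$, and bound the probability of its complement via Lemma~\ref{lem:lem-concentr-design}. Write $R:=\lceil\log_2 h\rceil$, $n_r:=\lvert A_r\rvert$ (so $n_1=K$ and $n_{r+1}=\lceil h/2^r\rceil$), let $N:=T/\log_2 h$ be the per-round budget, and set $h_{A_r}:=\dim(\mathrm{span}(\{x_i:i\in A_r\}))\le h$. I choose the target accuracy $\varepsilon_r:=\sqrt{h_{A_r}/(200\,H_{2,\text{lin}})}$ and define $\cE_r:=\{\forall i\in A_r:\ \norm{(\Theta-\widehat\Theta_r)^\T x_i}_\infty<\varepsilon_r\}$ and $\cE:=\bigcap_{r=1}^R\cE_r$. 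The hypothesis $T\ge 45h\log_2 h$ gives $N\ge 45h\ge 45\,h_{A_r}=5h_{A_r}/(1/3)^2$, so Lemma~\ref{lem:lem-concentr-design} applies at every round with $\kappa=1/3$ (hence $2(1+6\kappa)=6$); a union bound over $i\in A_r$ yields, with my choice of $\varepsilon_r$,
\[
\bP(\cE_r^\complement)\le 2d\,n_r\exp\!\left(-\frac{N\varepsilon_r^2}{6\sigma^2 h_{A_r}}\right)=2d\,n_r\exp\!\left(-\frac{N}{1200\,\sigma^2 H_{2,\text{lin}}}\right).
\]
Summing over rounds, using $\sum_{r=1}^R n_r\le K+h+R$ and $N\ge T/\lceil\log_2 h\rceil$, gives
\[
\bP(\cE^\complement)\le 2d\,(K+h+R)\exp\!\left(-\frac{T}{1200\,\sigma^2 H_{2,\text{lin}}\lceil\log_2 h\rceil}\right)=\exp\!\left(-\frac{T}{1200\,\sigma^2 H_{2,\text{lin}}\lceil\log_2 h\rceil}+\log C(h,d,K)\right),
\]
which is exactly the claimed bound once I show $\bP(\widehat S_T\neq\cS^\star)\le\bP(\cE^\complement)$.

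It therefore remains to prove that \emph{on $\cE$ the algorithm is correct}. For a subset $S$, let $\Delta_i^{(S)}$ be the gap \eqref{eq:def-gap} computed with $[K]$ replaced by $S$. The first ingredient is gap concentration: on $\cE_r$ the differences $\mh(i,j;r),\Mh(i,j;r)$ are within $2\varepsilon_r$ of $\m(i,j),\M(i,j)$, so $\lvert\widehat\Delta_{i,r}-\Delta_i^{(A_r)}\rvert\le 4\varepsilon_r$; moreover every $i\in A_r$ with $\Delta_i^{(A_r)}>2\varepsilon_r$ lands on the correct side of $S_r$ (using that $\M(i,j)\ge\Delta_i$ for every $j$ when $i$ is optimal, and that a sub-optimal $i$ is strictly dominated by a witness active in $A_r$). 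The second, crucial ingredient is the halving count. From $h_{A_r}\le 2n_{r+1}$ (since $\lceil x\rceil\le 2\lceil x/2\rceil$) and $\Delta_{(m)}\ge\sqrt{m/H_{2,\text{lin}}}$ for $m\le h$, one gets $\varepsilon_r\le\tfrac1{10}\sqrt{n_{r+1}/H_{2,\text{lin}}}\le\tfrac1{10}\Delta_{(n_{r+1})}$. Consequently every arm with $\Delta_i\le 2\varepsilon_r$ has index at most $n_{r+1}/25$, and (granting $\Delta_i^{(A_r)}=\Delta_i$ for active arms, see below) any arm whose empirical gap falls below such an arm's must have index $<n_{r+1}$. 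Hence all arms with $\Delta_i\le 2\varepsilon_r$ are retained in $A_{r+1}$, so every arm \emph{eliminated} at round $r$ satisfies $\Delta_i>2\varepsilon_r$ and is correctly classified (added to $B_{r+1}$ iff Pareto optimal). After $R$ rounds $\lvert A_{R+1}\rvert=1$, and $B_{R+1}\cup A_{R+1}=\cS^\star$.

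The step I expect to be the main obstacle is the witness-preservation invariant that I used implicitly above, namely that $\Delta_i^{(A_r)}=\Delta_i$ for all active arms at every round — equivalently, that $A_r$ stays \emph{downward-closed in the gap order}, so that no sub-optimal arm is ever stripped of all of its dominators before it is classified. The structural input is that for sub-optimal $i$ a Pareto-optimal witness $j^\star$ with $\m(i,j^\star)=\Delta_i$ satisfies $\Delta_{j^\star}\le\Delta_i$ (take $l=i$ in $\delta_{j^\star}^\star$ and note $\M(i,j^\star)_+=0$ because $j^\star$ strictly dominates $i$), so $j^\star$ is at least as hard and should survive at least as long as $i$. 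The difficulty is that near the keep/eliminate threshold an arm and its witness can have gaps differing by only $O(\varepsilon_r)$, and concentration alone does not forbid an inversion of their empirical order; this is precisely where the tie-breaking rule (retain arms of $S_r$) must be invoked, since among equal-gap arms it keeps the empirically Pareto-optimal witness. The same argument shows the single surviving arm of $A_{R+1}$ is optimal: if the minimal-gap arm were sub-optimal, its witness would be a tied minimal-gap optimal arm, which the tie-break retains.

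Carrying out the induction for the invariant is the technical heart: assuming downward-closure of $A_r$ and correct classification through round $r-1$ on $\cE$, the halving count keeps all arms of index $\le n_{r+1}/25$, the inequality $\Delta_{j^\star}\le\Delta_i$ together with $\lvert\widehat\Delta_{i,r}-\Delta_i\rvert\le 4\varepsilon_r$ and the tie-breaking rule rules out the harmful boundary inversions, and one concludes downward-closure of $A_{r+1}$ and correctness of all classifications made at round $r$. Combining this deterministic correctness on $\cE$ with the probability bound on $\cE^\complement$ from the first paragraph completes the proof.
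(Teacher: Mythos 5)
Your probability computation is correct and, with the choice $\varepsilon_r=\sqrt{h_{A_r}/(200 H_{2,\text{lin}})}$, reproduces the paper's constant $1200$ exactly (the paper instead uses thresholds $\lambda\Delta_{n_{r+1}+1}$ and lets $\lambda\to 1/5$, with $\Delta_{n_{r+1}+1}^2/\min(h,n_r)\geq 1/(2H_{2,\text{lin}})$); the pigeonhole count and the inequality $\Delta_{j^\star}\leq\Delta_j$ for a witness of a sub-optimal arm are also sound. The genuine gap is in the part you defer: the correctness of the algorithm on the good event, i.e.\ the induction establishing witness preservation, which is the actual content of the theorem (Proposition~\ref{prop:propFbFc} via Lemma~\ref{lem:lem-gap-fb} in the paper). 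Your invariant is moreover the wrong one: requiring $\Delta_i^{(A_r)}=\Delta_i$ (the gap recomputed within $A_r$) for \emph{all} active arms is strictly stronger than $\cP_r$. For a Pareto-optimal arm $i$ it demands that a minimizer of $\delta_i^\star$ stay active, a property you never address and which can fail: once an optimal arm's closest competitor is (correctly) discarded, its within-$A_r$ gap can exceed its global gap, so it may be eliminated while arms of smaller global gap survive. This is harmless for correctness (such an arm still lies in $S_r$ and goes to $B_{r+1}$), but it falsifies your counting claim ``every eliminated arm satisfies $\Delta_i>2\varepsilon_r$'' and with it your induction as stated. Likewise ``downward-closure in the gap order'' is not equivalent to witness preservation and is false in general. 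What is provable and sufficient is the weaker $\cP_r$ together with \emph{one-sided} gap bounds (Lemma~\ref{lem:lem-gap-ineq}).

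More importantly, the mechanism you invoke to kill the ``harmful boundary inversions'' cannot work as described. The tie-breaking rule resolves only \emph{exact empirical ties}, whereas concentration together with $\Delta_{i}\leq\Delta_j$ still permits a strict inversion $\widehat\Delta_{i,r}>\widehat\Delta_{j,r}$ of size up to $6\varepsilon_r$ between a sub-optimal arm $j$ and its witness $i=j^\star$. The missing steps are precisely the paper's: (i) if the active witness $i\in S_r$ \emph{empirically} dominates $j$, then deterministically $\widehat\Delta_{i,r}=\widehat\delta^\star_{i,r}\leq \Mh(j,i;r)_+ +(\widehat\Delta^\star_{j,r})_+=\widehat\Delta_{j,r}$, so any inversion is in fact an exact tie, and only then does the tie-break apply; (ii) if $i$ does \emph{not} empirically dominate $j$, Lemma~\ref{lem:lem_dom} forces $\Delta_j\leq\|(\widehat\Theta_r-\Theta)^\T(x_j-x_i)\|_\infty\leq 2\varepsilon_r$, and then the elimination count (some arm of $A_{r+1}\cup\{i\}$ has gap at least $\Delta_{n_{r+1}+1}\geq 10\varepsilon_r$) applied to $\widehat\delta^\star_{i,r}\geq\widehat\Delta_{k_r,r}$ yields $\M(j,i)_+>0$, contradicting $j\prec i$; (iii) the same count is needed to exclude that the optimal witness is eliminated while \emph{empirically dominated} ($i\notin S_r$), since otherwise an optimal arm is placed in $D_{r+1}$ --- an error your sketch rules out only through the counting claim that itself rests on the unproven strong invariant. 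Items (i)--(iii) are exactly Lemma~\ref{lem:lem-gap-fb} and Lemma~\ref{lem:lem_dom} of the paper; without them your argument does not close, so the proposal, while structurally parallel to the paper's proof and correct on the concentration side, leaves the theorem's central step unproven.
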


To the best of our knowledge, \gegeFb{} is the first algorithm with theoretical guarantees for fixed-budget linear PSI. Our result shows that in this setting, the probability of error scales only with the first $\dimfeat$ gaps. \citet{kone2023bandit} proposed EGE-SH, an algorithm for fixed-budget PSI in the unstructured setting whose probability of error is essentially upper-bounded by  
$$ \exp\left(- \frac{T}{288\sigma^2 H_2\log_2K }+ \log(2\dimvec(K-1)\lvert \cS^\star \lvert\log_2K) \right).$$
 Therefore, \gegeFb{}  largely improves upon EGE-SH when $K\gg \dimfeat$. 
Moreover, when $K=\dimfeat$ and $x_1,\dots, x_K$ is the canonical $\bR^\dimfeat$-basis, both algorithms coincide,
thus, \gegeFb{} can be seen as a generalization of EGE-SH.

We state below a lower bound for linear PSI in the fixed-budget setting, showing that \gegeFb{} is optimal in the worst case, up to constants and a $\log_2(h)$ factor. 
\begin{restatable}{theorem}{thmLbdFb}
 \label{thm:thm-lbd-fb} 
Let $\mathbb{W}_H$ be the set of instances with complexity $H_{2, \text{lin}}$ smaller than $H$.
For any budget $T$, letting $\widehat{S}_T^\cA$ be the output of an algorithm $\cA$, it holds that 
	$$\inf_{\cA} \sup_{\nu \in \mathbb{W}_H} \bP_\nu(\widehat{S}_T^\cA \neq \cS^\star(\nu))\geq \frac{1}{4}\exp\left( - \frac{2T}{H\sigma^2}\right).$$  
\end{restatable}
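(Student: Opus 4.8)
The plan is to prove this minimax lower bound by a standard two-point change-of-measure argument, so the entire difficulty lies in exhibiting two instances in $\mathbb{W}_H$ whose Pareto sets differ but that a budget-$T$ learner cannot tell apart. I would take the noise to be Gaussian $\cN(0,\sigma^2 I_\dimvec)$ (which is $\sigma$-subgaussian, hence admissible) and reduce everything to the Bretagnolle--Huber inequality: for any event $\cE$ and any two trajectory laws $\bP_\nu,\bP_{\nu'}$ induced by the same algorithm $\cA$, one has $\bP_\nu(\cE)+\bP_{\nu'}(\cE^\complement)\geq \tfrac12\exp(-\KL(\bP_\nu\|\bP_{\nu'}))$.

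For the construction I would use only $K=2$ arms and $\dimfeat=1$, with features $x_1=1$ and $x_2=0$ (so arm $2$ always has mean $0$), and set $\Delta := 2/\sqrt{H}$. Under $\nu$ (parameter $\Theta=\theta^\T$ with $\theta=(\tfrac{\Delta}{2},-\tfrac{\Delta}{2},\dots,-\tfrac{\Delta}{2})$), arm $1$ beats arm $2$ on the first coordinate but is worse on the others, so the two arms are incomparable and $\cS^\star(\nu)=\{1,2\}$; under $\nu'$ (flip the first coordinate of $\theta$ to $-\tfrac{\Delta}{2}$), arm $1$ is worse on every coordinate, hence strictly dominated, and $\cS^\star(\nu')=\{2\}$. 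The key point to verify — and the one place where the PSI gap definition \eqref{eq:def-gap} must be handled carefully — is that in both instances every sub-optimality gap equals $\Delta/2$: one checks directly from $\m,\M$ that $\m(1,2)$ and $\m(2,1)$ are nonpositive under $\nu$ while $\delta_1^\star=\delta_2^\star=\Delta/2$, and that under $\nu'$ one has $\Delta_1^\star=\Delta/2$ and $\delta_2^\star=\Delta/2$. Since $\dimfeat=1$, this gives $H_{2,\text{lin}}=1/(\Delta/2)^2=4/\Delta^2=H$, so both $\nu,\nu'\in\mathbb{W}_H$ (if a strict inequality is required, enlarge $\Delta$ infinitesimally).

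It then remains to bound the divergence and assemble the pieces. Because the two parameters differ only through arm $1$, and only in its first coordinate (by $\Delta$), the chain-rule decomposition of the trajectory divergence gives $\KL(\bP_\nu\|\bP_{\nu'})=\bE_\nu[T_1(T)]\,\tfrac{\Delta^2}{2\sigma^2}\leq T\tfrac{\Delta^2}{2\sigma^2}=\tfrac{2T}{H\sigma^2}$, using $\bE_\nu[T_1(T)]\leq T$ and $\Delta^2=4/H$. Taking $\cE=\{\widehat S_T^\cA\neq\cS^\star(\nu)\}$ and noting that $\cS^\star(\nu)\neq\cS^\star(\nu')$ forces $\cE^\complement\subseteq\{\widehat S_T^\cA\neq\cS^\star(\nu')\}$, Bretagnolle--Huber yields $\bP_\nu(\widehat S_T^\cA\neq\cS^\star(\nu))+\bP_{\nu'}(\widehat S_T^\cA\neq\cS^\star(\nu'))\geq\tfrac12\exp(-\tfrac{2T}{H\sigma^2})$. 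Since both instances lie in $\mathbb{W}_H$, the supremum over $\mathbb{W}_H$ is at least the maximum of these two error probabilities, which is at least their average $\tfrac12\big(\cdots\big)\geq\tfrac14\exp(-\tfrac{2T}{H\sigma^2})$; as the right-hand side does not depend on $\cA$, taking $\inf_\cA$ preserves it, giving the claim.

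The main obstacle I anticipate is not the change-of-measure machinery (which is routine) but engineering the two instances so that all three desiderata hold at once: the Pareto sets must genuinely differ, both instances must stay inside $\mathbb{W}_H$ with the \emph{same} gap value, and the perturbation must touch as few coordinates and arms as possible so the divergence stays small. The subtlety specific to PSI is that naively flipping one coordinate while leaving the others tied makes some $\m(\cdot,\cdot)$ margin vanish, collapsing a gap to $0$ and ejecting the instance from $\mathbb{W}_H$; this is exactly why I keep arm $1$ strictly below arm $2$ on all coordinates $\geq 2$ in both instances, so that flipping coordinate $1$ alone toggles strict domination while every gap stays pinned at $\Delta/2$. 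Choosing $\dimfeat=1$ (collinear means) is what collapses $H_{2,\text{lin}}$ to the single term $1/(\Delta/2)^2$ and produces the exact constant $2$ in the exponent.
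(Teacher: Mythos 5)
Your proof is correct, but it follows a genuinely different route from the paper's. The paper proves this theorem by \emph{reduction}: it embeds an unstructured multi-dimensional bandit instance into the linear model (features $x_i = e_i$ for $i \leq \dimfeat$, $x_i = \boldsymbol{0}$ for $i > \dimfeat$), argues that the linear problem is then equivalent to an unstructured PSI problem with $\dimfeat$ arms, and invokes as a black box the fixed-budget lower bound of \cite{kone2023bandit} (their Theorem~5, which itself rests on a change-of-distribution argument over a structured class $\cB$ of instances and a family of single-arm perturbations $\nu^{(i)}$). You instead give a self-contained two-point argument: an explicit rank-one construction ($K=2$, $\dimfeat=1$, $x_1 = 1$, $x_2 = 0$, $d \geq 2$) in which flipping one coordinate of $\Theta$ toggles arm $1$ between "incomparable to arm $2$" and "strictly dominated", with every gap pinned at $\Delta/2$ in both instances (your gap computations check out against \eqref{eq:def-gap}, and indeed $H_{2,\text{lin}} = 4/\Delta^2 = H$), followed by the divergence decomposition and Bretagnolle--Huber, which reproduce the exact constant $\frac14\exp(-2T/(H\sigma^2))$. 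What the paper's route buys is brevity given the external theorem and an explicit bridge between $H_{2,\text{lin}}$ and the unstructured complexity $H_2$, so the lower bound is exhibited on a richer family of instances inherited from $\cB$; what your route buys is transparency and independence from the machinery of \cite{kone2023bandit} — the constant $2$ in the exponent is traced directly to the Gaussian KL, and the worst case is shown to be realized already by a two-arm, one-dimensional-feature instance. Two minor caveats, neither fatal: your construction needs $d \geq 2$ (for $d=1$ one would fall back on a BAI-type two-point instance), and it pins the class $\mathbb{W}_H$ at specific small dimensions $(K,d,\dimfeat)=(2,d,1)$ — but since the theorem statement places no dimension constraints on $\mathbb{W}_H$ (and the paper's own proof likewise just exhibits particular instances), a two-instance witness suffices for the stated $\inf\sup$ bound.
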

\subsection{Fixed-confidence algorithm}
In round $r$, Algorithm~\ref{alg:gegeFc}, allocates a budget $t_r$ to compute an estimator $\widehat \Theta_r$ of $\Theta^\star$ by calling \myalg{alg:est_gaps}. $t_r$ is computed so that through $\widehat \Theta_r$, the mean of each arm is estimated with precision $\veps_r/4$ with probability larger than $1-\delta_r$ (using Lemma~\ref{lem:lem-concentr-design}). Then, the empirical Pareto set $S_r$ of the active arms is computed, and the empirical gaps are updated following \eqref{eq:eq-def-gap}. 
 
At the end of round $r$, empirically optimal arms (those in $S_r$) whose empirical gap is larger than $\varepsilon_r$ are discarded and classified as optimal (added to $B_{r+1}$). Empirically sub-optimal arms whose empirical gap is larger than $\varepsilon_r/2$ are also discarded and classified as sub-optimal (added to $D_{r+1}$). Just like its fixed-budget version, the algorithm tends to favor elimination of seemingly sub-optimal arms. 

\begin{algorithm}
\caption{GEGE: G-optimal Empirical Gap Elimination [\textcolor{red}{fixed-confidence}]}
   \label{alg:gegeFc}
 {\bfseries Initialize:} {$A_1\gets [K]$, $B_1 \gets \emptyset$, $D_1 \gets \emptyset$, $r\gets 1$}
 
   \While{$\lvert A_r\lvert>1$}{
  
  {Let} {$\veps_r \gets 1/(2\cdot 2^r)$ and $\delta_r \gets 6\delta/\pi^2 r^2$ and $h_r \gets \text{dim(span($\{x_i: i\in A_r\}$))}$}

   {Update}{ $t_r := \left\lceil \frac{32(1+3\veps_r)\sigma^2 h_r}{\eps_r^2} \log(\frac{\lvert A_r\lvert \dimvec}{2\delta_r})\right\rceil $}
   
   {Compute} {$\widehat\Theta_r  \gets \text{\optDesign$(A_r, t_r,  \veps_r)$}$}

   {Compute} $S_r$ and the empirical gaps $\widehat\Delta_{i,r}$ with \myeqref{eq:eq-def-gap}
   
    {Update} $B_{r+1} \gets B_r \cup \{ i \in S_r: \widehat{\Delta}_{i, r} \geq  \veps_r \}$ and $D_{r+1} \gets D_r \cup \{ i \in A_r\backslash S_r: \widehat{\Delta}_{i, r} \geq \veps_r/2\}$

  {Update} $A_{r+1} \gets A_r\backslash\left(D_{r+1} \cup B_{r+1}\right)$
   
   $r\gets r+1$ 
   }
   {\bfseries return:} {$B_{r} \cup A_{r}$}
\end{algorithm}

\begin{restatable}{theorem}{thmGegeFcSc}
\label{thm:thm-gege-fc-sc} 
The following statement holds with probability at least $1-\delta$: 
	Algorithm~\ref{alg:gegeFc} identifies the Pareto set using at most $$ \log_2(2/\Delta_1) + O \lp \sum_{i=2}^\dimfeat \frac{\sigma^2}{\Delta_{i}^2} \log\left(\frac{K\dimvec}{\delta} \log_2\left(\frac2{\Delta_{i}}\right)\right) \rp$$
	samples and $\lceil \log_2(1/\Delta_1)\rceil$  rounds, 
	where $O(\cdot)$ hides universal multiplicative constant (explicit in Appendix~\ref{sec:full_proof}).   
\end{restatable}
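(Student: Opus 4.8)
The plan is to run a ``good event'' analysis: define an event on which every per-round least-squares estimate is accurate, show it holds with probability at least $1-\delta$, and then argue \emph{deterministically} on this event that (i) the algorithm never misclassifies an arm and (ii) its round and sample counts obey the stated bounds. First I would fix, for each round $r$, the event
\[
\cE_r := \Big\{ \forall i \in A_r,\ \|(\Theta - \widehat\Theta_r)^\T x_i\|_\infty < \veps_r/4 \Big\},
\]
and set $\cE := \bigcap_{r\geq 1}\cE_r$. The budget $t_r$ is tuned so that Lemma~\ref{lem:lem-concentr-design} applied with $S=A_r$, $N=t_r$, $\kappa=\veps_r$ and threshold $\veps_r/4$ (note $t_r \geq 5 h_r/\veps_r^2$, so the lemma applies), followed by a union bound over the at most $|A_r|$ active arms, gives $\bP(\cE_r^\complement)\leq \delta_r$. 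Since $\delta_r = 6\delta/(\pi^2 r^2)$ and $\sum_{r\geq1} r^{-2}=\pi^2/6$, a union bound over rounds yields $\bP(\cE)\geq 1-\delta$. On $\cE_r$, as each mean coordinate is recovered within $\veps_r/4$, every pairwise quantity concentrates, $|\Mh(i,j;r)-\M(i,j)|<\veps_r/2$ and $|\mh(i,j;r)-\m(i,j)|<\veps_r/2$ for $i,j\in A_r$, whence the empirical gaps satisfy $|\widehat\Delta_{i,r}-\Delta_i^{(A_r)}|<\veps_r$, where $\Delta_i^{(A_r)}$ denotes the gap of \eqref{eq:def-gap} computed using only $A_r$.

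The heart of the proof — and the main obstacle — is the deterministic correctness argument on $\cE$, which I would phrase as an induction over rounds maintaining the invariant: (I1) $B_r\subseteq\cS^\star$ and $D_r\cap\cS^\star=\emptyset$; (I2) the Pareto set of the sub-instance $\{\vmu_i:i\in A_r\}$ equals $\cS^\star\cap A_r$, so that $\Delta_i^{(A_r)}=\Delta_i$ for all $i\in A_r$. These two facts immediately give the final guarantee: no optimal arm is ever in $D$, no sub-optimal arm is ever in $B$, and by (I2) the lone survivor when $|A_\tau|\leq 1$ must be Pareto optimal, so $B_\tau\cup A_\tau=\cS^\star$. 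The engine for the inductive step is the structural inequality that a dominator never has a larger gap than the arm it dominates: if sub-optimal $i$ is (weakly) dominated by an \emph{optimal} arm $j^o$, then $\M(i,j^o)\leq0$, and expanding $\delta_{j^o}^\star$ at the index $i$ gives $\Delta_{j^o}=\delta_{j^o}^\star\leq \M(j^o,i)\wedge\Delta_i^\star\leq\Delta_i$. Combined with the asymmetric thresholds ($\veps_r$ to accept an empirically optimal arm versus $\veps_r/2$ to reject an empirically sub-optimal one) and the tie-breaking rule that favours removing empirically sub-optimal arms, this forces a dominated arm into $D_{r+1}$ no later than the round at which any of its optimal dominators enters $B_{r+1}$, so no active arm is ever ``orphaned'' into a spurious Pareto optimum and (I2) persists. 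The delicate cases to handle are an optimal arm that is only \emph{empirically} sub-optimal (then $\widehat\Delta_{i,r}=\widehat\Delta_{i,r}^\star<\veps_r/2$, so it is never rejected) and a sub-optimal arm that is \emph{empirically} optimal (possible only when its active dominators have true margin $\lesssim\veps_r$, in which case $\widehat\delta_{i,r}^\star<\veps_r$ and the acceptance threshold cannot be met); propagating these along dominance chains, using that every dominated arm is dominated by some optimal arm, is the technical crux.

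Finally I would read off the complexity from the gap concentration. On $\cE$, arm $i$ meets its elimination threshold as soon as $\Delta_i-\veps_r\geq\veps_r$; since $\veps_r=2^{-(r+1)}$, it leaves $A_r$ by round $r_i=O(\log_2(1/\Delta_i))$, and as $r_i$ is nondecreasing in $\Delta_i$ the loop terminates within $\lceil\log_2(1/\Delta_1)\rceil$ rounds. For the samples, $\sum_r t_r \lesssim \sum_r \frac{\sigma^2 h_r}{\veps_r^2}\log\!\big(\tfrac{|A_r|d}{\delta_r}\big)+(\text{number of rounds})$, where the additive round term (from the ceilings $\lceil\cdot\rceil$) contributes the $\log_2(2/\Delta_1)$ summand. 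To obtain $\sum_{i=2}^{\dimfeat}\Delta_i^{-2}$ rather than $\sum_{i=1}^{K}\Delta_i^{-2}$, I would exploit $h_r\leq|A_r|$ and $h_r\leq\dimfeat$ jointly. Let $r^\dagger$ be the last round with $|A_r|>\dimfeat$. Because survivors form a prefix in gap order, $r\leq r^\dagger$ implies arms $1,\dots,\dimfeat$ are all active, giving $\Delta_\dimfeat\lesssim\veps_{r^\dagger}$; thus the rounds $r\leq r^\dagger$ contribute $\lesssim\sum_{r\leq r^\dagger}\dimfeat\,\veps_r^{-2}\lesssim \dimfeat\,\veps_{r^\dagger}^{-2}\lesssim \dimfeat/\Delta_\dimfeat^2\leq H_{1,\text{lin}}$ by geometric summation. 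For $r>r^\dagger$ we have $h_r\leq|A_r|$, so their cost is $\sum_i\sum_{r:i\in A_r}\veps_r^{-2}\lesssim\sum_i\Delta_i^{-2}$ over the $\leq\dimfeat$ arms surviving past $r^\dagger$, again at most $H_{1,\text{lin}}$. Charging round $r$'s logarithmic factor $\log(Kdr^2/\delta)$ to the arms with $\Delta_i\approx2^{-r}$ turns $r$ into $\log_2(2/\Delta_i)$, producing the stated $\log\!\big(\tfrac{K\dimvec}{\delta}\log_2(\tfrac2{\Delta_i})\big)$ factors and hence the claimed bound.
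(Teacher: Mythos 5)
Your proposal follows the same overall skeleton as the paper's proof: a good event built from Lemma~\ref{lem:lem-concentr-design} with the $\delta_r$ union bound, a deterministic induction showing that active sub-optimal arms always retain an active dominator (the paper's event $\cP_r$ in Proposition~\ref{prop:propFbFc}), correctness and the round bound from one-sided gap concentration (Lemmas~\ref{lem:lem-correctness-fc} and~\ref{lem:lem-card-active}), and a summation of the per-round budgets $t_r$. One part is genuinely different and works: your accounting of $\sum_r t_r$ via a single split at $r^\dagger$ (last round with $\lvert A_r\rvert > \dimfeat$) followed by per-arm charging through $h_r\leq \lvert A_r\rvert$ is a valid and arguably cleaner alternative to the paper's dyadic checkpoints $\alpha_s=\lceil\log_2(1/\Delta_{\lfloor \dimfeat/2^s\rfloor})\rceil$ and the covering argument in which each index lies in at most two intervals $\cI_s$.

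The genuine gap is in your inductive engine. You infer from the true-gap inequality $\Delta_{j^o}\leq\Delta_i$ that a dominated arm enters $D$ no later than its optimal dominator enters $B$. This inference does not go through: acceptance of $j^o$ is triggered by $\widehat\delta^\star_{j^o,r}\geq\veps_r$, and $\widehat\delta^\star_{j^o,r}$ concentrates around the gap of $j^o$ \emph{restricted to $A_r$}, not around $\Delta_{j^o}$. The restricted gap can exceed $\Delta_{j^o}$ arbitrarily (the minimizer defining $\delta^\star_{j^o}$, e.g.\ another optimal arm, may already have been correctly removed), so there is no upper bound on $\widehat\delta^\star_{j^o,r}$ in terms of $\Delta_{j^o}$, and acceptance at round $r$ tells you nothing about $\veps_r$ versus $\Delta_i$. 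For the same reason your invariant (I2) is overstated: $\Delta_i^{(A_r)}=\Delta_i$ is false in general for optimal arms; only $\Delta_i^{(A_r)}\geq\Delta_i$ holds (which is the direction you need). The correct engine — the one the paper uses in Lemma~\ref{lem:lem-gap} — is the \emph{empirical} counterpart of your structural inequality: by definition of the min, $\widehat\delta^\star_{j^o,r}\leq\Mh(i,j^o;r)_+ + (\widehat\Delta^\star_{i,r})_+$, and on the good event $\Mh(i,j^o;r)_+\leq\veps_r/2$ because $\M(i,j^o)\leq 0$; hence acceptance of $j^o$ forces $(\widehat\Delta^\star_{i,r})_+\geq\veps_r/2$, which both rules out $i\in S_r$ (membership in $S_r$ means $\widehat\Delta^\star_{i,r}\leq 0$) and places $i$ in $D_{r+1}$ in the same round. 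Replacing your true-gap argument by this two-line empirical one closes the induction; note also that Algorithm~\ref{alg:gegeFc} has no tie-breaking rule (that exists only in the fixed-budget variant) — here the threshold asymmetry $\veps_r$ versus $\veps_r/2$ plays that role.

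A secondary issue: your per-arm charging, applied blindly, charges arm $1$ for rounds up to $\lceil\log_2(1/\Delta_1)\rceil$, producing a $\Delta_1^{-2}$ term, i.e.\ the bound $H_{1,\text{lin}}=\sum_{i=1}^{\dimfeat}\Delta_i^{-2}$ rather than the claimed $\sum_{i=2}^{\dimfeat}\Delta_i^{-2}$ (these differ substantially when $\Delta_1\ll\Delta_2$). To recover the stated bound you must invoke the stopping condition $\lvert A_r\rvert\leq 1$: by Lemma~\ref{lem:lem-card-active} with $p=2$ the loop exits by round $\lceil\log_2(1/\Delta_2)\rceil$, so arm $1$ is only ever sampled up to that round and its charge is $O(\Delta_2^{-2})$, leaving only the additive $\log_2(2/\Delta_1)$ term from the ceilings.
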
 
This result shows that the complexity of \myalg{alg:gegeFc} scales only with the first $\dimfeat$ gaps. 
In particular, when $K\gg \dimfeat$ using our algorithm substantially reduces the sample complexity of PSI. 

In Table~\ref{tab:tab-comp}, we compare the sample complexity of \gege{} to that of existing fixed-confidence PSI algorithms, showing that \gege{} enjoys stronger guarantees than its competitors. We emphasize that both \cite{kim2023pareto} and \cite{zuluaga_e-pal_2016} use uniform sampling and do not exploit an optimal design, which prevents them from reaching the guarantees given in Theorem~\ref{thm:thm-gege-fc-sc}. 
\begin{table}[htb]
\caption{Sample complexity up to constant multiplicative terms of different algorithms for PSI in the fixed-confidence setting.}
\label{tab:tab-comp}
\centering
\begin{tabular}{|p{3.cm}|p{3.2cm}|p{0.74cm}|} 
  \hline 
  Algorithm & {Upper-bound on $\tau_\delta$}& Linear PSI \\  

\hline \hline 
$\!\!$\cite{zuluaga_e-pal_2016} $\!\!\!\!$ & $\!\!\left(\frac{h^2}{\Delta_{\min}^2}\right)\!\log^3\left(\frac{Kd}{\delta}\right)$  & \color{OliveGreen}{\ding{52}} \\ 
 \hline 

 $\!\!$\cite{kone2023adaptive} & $\!\!\sum_{i=1}^K\frac{1}{\Delta_i^2}\!\log(\frac{K\dimvec}{\delta}\!\log(\frac{1}{\Delta_i})\!)$ & \color{BrickRed}{\ding{56}}  \\
 \hline 
 $\!\!$\cite{kim2023pareto}  & $\!\!\frac{h}{\Delta_{\text{min}}^2}\!\log(\frac{d(h\vee K)}{\delta\Delta_\text{min}^2})$ & \color{OliveGreen}{\ding{52}} \\ 
 \hline 
 $\!\!$\gege{} ({Ours}) &  $\!\!\sum_{i=1}^\dimfeat \frac{1}{\Delta_{i}^2}\! \log(\frac{K\dimvec}{\delta} \!\log(\frac1{\Delta_{i}})\!)$&\color{OliveGreen}{\ding{52}} \\\hline  \end{tabular}
\end{table}

We state a lower bound, showing that our algorithm is essentially minimax optimal for linear PSI. 

\begin{restatable}{theorem}{thmLbdFc}
\label{thm:thm-lbd-fc}
For any $K, \dimvec, \dimfeat \in \bN$, there exists a set $\cB(K,\dimvec, \dimfeat)$ of linear PSI instances s.t for $\nu\in \cB(K,\dimvec, \dimfeat)$ and for any $\delta$-correct algorithm for PSI, with probability at least $1-\delta$, 
	$$\tau_\delta^\cA \geq \Omega\left(H_{1, \text{lin}}(\nu) \log(\delta^{-1})\right).$$
\end{restatable}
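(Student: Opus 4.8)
The plan is to use the standard change-of-measure (transportation) lower bound for $\delta$-correct identification, specialized to the linear structure, and to design a family of instances for which a \emph{localized} perturbation of $\Theta$ suffices to change the Pareto set. Recall that for any $\delta$-correct algorithm and any two instances $\nu,\nu'$ with $\cS^\star(\nu)\neq\cS^\star(\nu')$, choosing the event $E=\{\widehat{S}_\tau=\cS^\star(\nu)\}$ (which has probability at least $1-\delta$ under $\nu$ and at most $\delta$ under $\nu'$) in the transportation inequality of the change-of-measure method yields
\begin{equation*}
\sum_{i=1}^K \bE_\nu[T_i(\tau)]\,\KL(\nu_i,\nu'_i)\;\geq\;\kl(1-\delta,\delta)\;\geq\;\log\tfrac{1}{2.4\delta},
\end{equation*}
where $T_i(\tau)$ counts the pulls of arm $i$ before stopping. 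Taking the instances to be Gaussian with covariance $\sigma^2\mathrm{Id}$ (which are $\sigma$-subgaussian, so the bound transfers to the whole class), the per-arm divergence is $\KL(\nu_i,\nu'_i)=\|(\Theta-\Theta')^\T x_i\|^2/(2\sigma^2)$.

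The crux is to build $\cB(K,\dimvec,\dimfeat)$ so that each of the $\dimfeat$ hardest arms can be ``flipped'' by a rank-one update of $\Theta$ whose divergence cost is carried by that single arm. First I would place the $\dimfeat$ critical arms at the canonical basis features $x_i=e_i\in\bR^\dimfeat$ for $i\in[\dimfeat]$, and put the remaining $K-\dimfeat$ arms at features that make them strictly dominated with a large margin, so that they contribute the $K-\dimfeat$ largest gaps and are inert under the perturbations below. I would then choose $\Theta$ (e.g.\ a bi-objective, $\dimvec\geq 2$, instance) so that the prescribed values $\Delta_1\leq\dots\leq\Delta_\dimfeat$ are exactly the PSI gaps of the critical arms, giving $\sum_{i=1}^\dimfeat\Delta_i^{-2}=H_{1,\text{lin}}(\nu)$. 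Because $e_i^\T e_j=\mathbf{1}\{i=j\}$ on the critical block, a perturbation $\Theta^{(i)}=\Theta+e_i v_i^\T$ moves only $\vmu_i=\Theta^\T e_i$ (by $v_i$) while leaving all other critical means unchanged.

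For each $i\in[\dimfeat]$ I would design $v_i\in\bR^\dimvec$ with $\|v_i\|=\Delta_i+o(1)$ so that $\nu^{(i)}:=\nu$ with $\Theta^{(i)}$ has a different Pareto set: if $i$ is sub-optimal in $\nu$, move $\vmu_i$ up by just over $\Delta_i^\star=\max_j\m(i,j)$ to make it non-dominated; if $i$ is optimal, move it by just over $\delta_i^\star$ in the direction that either brings it under another optimal arm or lets a sub-optimal arm overtake it --- mirroring the unstructured PSI constructions of \citet{auer_pareto_2016, kone2023bandit}, but now realized as an update to $\Theta$. By orthogonality and the large margin of the inert arms, $\KL(\nu_j,\nu^{(i)}_j)=0$ for every critical $j\neq i$ and is negligible for the dominated arms, so the transportation inequality collapses to $\bE_\nu[T_i(\tau)]\,\Delta_i^2/(2\sigma^2)\gtrsim\log(1/(2.4\delta))$, i.e.\ $\bE_\nu[T_i(\tau)]\gtrsim(2\sigma^2/\Delta_i^2)\log(1/(2.4\delta))$. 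Summing over $i\in[\dimfeat]$ and using $\tau\geq\sum_{i=1}^\dimfeat T_i(\tau)$ gives $\bE_\nu[\tau]\gtrsim\sigma^2 H_{1,\text{lin}}(\nu)\log(1/\delta)$, which is the claimed $\Omega(H_{1,\text{lin}}\log(\delta^{-1}))$; the high-probability form then follows by the standard conversion (a self-bounding / Markov argument on $\tau$, or a high-probability transportation lemma).

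The main obstacle I expect is the third step: crafting the single-arm perturbations so that they genuinely change $\cS^\star$ at cost exactly $\Delta_i$ while respecting the intricate PSI gap definition \eqref{eq:def-gap}. The sub-optimal case is clean, but for an optimal arm the gap $\delta_i^\star$ is a minimum over two heterogeneous terms (distance to other optimal arms and margins to sub-optimal arms), and a perturbation realizing that minimum must be chosen so it flips only arm $i$'s status and does not accidentally alter the Pareto membership of a neighbouring arm or inflate the divergence of the inert arms. Getting the features, the base means, and the $\dimfeat$ perturbation directions mutually consistent --- so that the $\dimfeat$ targeted gaps are simultaneously the $\dimfeat$ smallest and each is attainable by an orthogonal rank-one move --- is the delicate bookkeeping at the heart of the construction.
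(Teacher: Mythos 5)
Your instance design --- canonical-basis features for the $\dimfeat$ critical arms plus inert extra arms --- is the same as the paper's, but your proof route is genuinely different: the paper never re-runs the change-of-measure machinery. It sets $x_i = e_i$ for $i\leq \dimfeat$ and $x_i = \boldsymbol{0}$ for $i>\dimfeat$, argues that the zero-feature arms are irrelevant (their observations are pure noise and their means are dominated), and concludes that the linear instance is \emph{equivalent} to an unstructured $\dimfeat$-armed PSI instance, to which it applies the known unstructured lower bound of Auer et al.\ (restated as Theorem~\ref{thm:thm-lbd-auer}). This reduction buys exactly the piece your proposal leaves open: the construction, for each \emph{optimal} arm $i$, of a perturbation of cost $\delta_i^\star$ that flips the Pareto set without side effects. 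You correctly identify this as ``the delicate bookkeeping at the heart of the construction,'' but you do not carry it out --- and that construction is the substance of Auer et al.'s proof. By reducing, the paper imports it wholesale; your plan requires redoing it inside the linear parametrization, so as written the proposal is incomplete precisely where the real difficulty lies.

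There is also a concrete gap in your treatment of the inert arms. You claim $\KL(\nu_j,\nu_j^{(i)})$ is ``negligible'' for the dominated arms, but the transportation inequality involves the products $\bE_\nu[T_j(\tau)]\,\KL(\nu_j,\nu_j^{(i)})$, and the pull counts $\bE_\nu[T_j(\tau)]$ are chosen by the algorithm, not by you; a small-but-nonzero KL multiplied by an arbitrarily large pull count is not negligible, so the inequality does not collapse to the single term $\bE_\nu[T_i(\tau)]\Delta_i^2/(2\sigma^2)$. Since your $\dimfeat$ perturbation directions span $\bR^\dimfeat$, the only way to make every inert KL vanish is to give the inert arms the zero feature vector --- which is the paper's choice, and which also makes pulling them uninformative. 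Alternatively, you can keep inert features of norm at most $1$ and repair the argument by summing the $\dimfeat$ transportation inequalities, which yields $\sum_{j}\bE_\nu[T_j(\tau)]\,\|x_j\|_2^2 \gtrsim \sigma^2 H_{1,\text{lin}}\log(1/\delta)$ and hence the claim after bounding $\|x_j\|_2\leq 1$; but this step must be made explicit, it is not automatic. Finally, the theorem is a high-probability statement: a Markov argument cannot convert $\bE_\nu[\tau]\geq B$ into $\bP_\nu(\tau\geq cB)\geq 1-\delta$ (Markov bounds the wrong tail), so you need the high-probability version of the change-of-measure lemma --- which the paper gets for free because Theorem~\ref{thm:thm-lbd-auer} is already stated in high-probability form.
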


\begin{remark}
When $K=\dimfeat$ and $x_1,\dots, x_K$ form the canonical $\bR^\dimfeat$ basis, we recover the classical PSI problem. We note that, unlike its fixed-budget version, \gege{} does not coincide with an existing PSI identification algorithm. Instead, it matches the optimal guarantees of \cite{kone2023adaptive} while needing only $\lceil \log(1/\Delta_1)\rceil$ rounds of adaptivity, which is the first fixed-confidence PSI algorithm having this property. Such a batched algorithm may be desirable in some applications, e.g., in clinical trials where measuring different biological indicators of efficacy can take time. 
\end{remark}

\paragraph{\gegeFc{} for $\veps$-PSI} Algorithm~\ref{alg:gegeFc} can be easily modified to identify an $\varepsilon$-Pareto Set. As introduced in \citet{auer_pareto_2016}, an $\veps$-Pareto Set $S_\veps$ is such that $S^\star \C S_\veps $ and for any arm $i \in S_\veps$, $\Delta_i^\star \leq \veps$: it contains the Pareto Set and possibly some sub-optimal arms that are $(\veps)$-close to be optimal. Such a relaxation is particularly useful in instances with small gaps or when identifying the exact Pareto Set may be unnecessarily restrictive. To identify an $\veps$-Pareto Set, we relax the stopping condition: instead of stopping when it remains only one active arm (i.e., $|A_r| \leqslant 1$), we stop when $(|A_r|\leqslant 1\; \text{ or } \; \varepsilon_r \leqslant \varepsilon/4 $) holds. After stopping, the same set is recommended, namely $A_\tau \cup B_\tau$. The guarantees of \gegeFc{} under this modification are discussed in Section~\ref{ssec:varpsi}.  
\section{UNFIED ANALYSIS OF \gege}
\label{sec:analysis}

Before sketching our proof strategy, we highlight a key property of PSI that makes the analysis different from classical BAI settings. Let $a$ be a (Pareto) sub-optimal arm. From \eqref{eq:def-gap}, there exits $a^\star \in \cS^\star$ such that $\Delta_a = \m(a, a^\star)$ and importantly, $a^\star$ could be the unique arm dominating $a$. 
 Therefore, discarding $a^\star$ before $a$ may result in the latter appearing as optimal in the remaining rounds, thus leading to the misidentification of the Pareto set. 
 
 To avoid this, an elimination algorithm for PSI should guarantee that if a sub-optimal arm $a$ is active, then $a^\star$ is also active. 
 We introduce the following event 
$$ \cP_r := \{ \forall\; s \leq r: \forall i\in (\cS^\star)^\complement, \; i\in A_s \Rightarrow i^\star \in A_r \}.$$
An important aspect of our proofs is to control the occurrence of $\cP_\infty$ (by convention, if $\cP_t$ holds and $A_s = \emptyset$ for any $s\geq t$, then $\cP_\infty$ holds). 
The first step of the proof is to show that when $\cP_r$ holds, we can control the deviations of the empirical gaps, which is essential to guarantee the correctness of \gege{} and to control its sample complexity in fixed-confidence. We now define for $\eta>0$, the good event 
\begin{equation}
 \cE^r(\eta) = \left\{ \forall\;  i,j \in A_r: \|(\widehat\Theta_r - \Theta)^\T (x_i-x_j)\|_\infty \leq \eta \right\}.
\end{equation} 
Letting $n_r = |A_r|$ and $\lambda$ a constant to be specified, we introduce $\cE_{\text{fb}}^\lambda:= \cap_{r=1}^{\lceil\log_2(h)\rceil} \cE^{r}(\lambda \Delta_{n_{r+1}+1})$ and $\cE_{\text{fc}}:= \cap_{r=1}^\infty \cE^r(\veps_r/2)$. 
We then prove by concentration and induction the following key result. 

\begin{restatable}{proposition}{propFbFc}
\label{prop:propFbFc}
	Let $\lambda \in (0,1/5)$ and assume $\cE_{\mathrm{fc}}$  (resp. $\cE_{\mathrm{fb}}^\lambda$ in fixed-budget) holds. Then at any round $r$, $\cP_r$ holds and for all arm $i\in A_r$, 
	$$\widehat{\Delta}_{i,r} - \Delta_i \geq  \begin{cases}
		-\eta_r & \text{ if } i\in \cS^\star \\
		-\eta_r/2 & \text{ else,} 
		\end{cases} \quad \text{ where } 
			$$
			$$ \eta_r :=  \begin{cases}
		2\lambda \Delta_{n_{r+1}+1} & \text{(fixed-budget)}\\
		\veps_r & \text{(fixed-confidence).}
		\end{cases}$$
\end{restatable}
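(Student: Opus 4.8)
The plan is to prove both assertions simultaneously by induction on the round index $r$, because the gap bound at round $r$ requires the invariant $\cP_r$ (so that the relevant dominating arms are still active), while establishing $\cP_{r+1}$ in turn requires the gap bound at round $r$ to analyze which arms the elimination step removes. First I would record how the randomness enters: on $\cE^r(\eta)$, with $\eta=\veps_r/2$ in fixed-confidence and $\eta=\lambda\Delta_{n_{r+1}+1}$ in fixed-budget (so $\eta=\eta_r/2$ in both cases), the identity $(\widehat\Theta_r-\Theta)^\T(x_i-x_j)=(\muh_{i,r}-\muh_{j,r})-(\mu_i-\mu_j)$ together with the $1$-Lipschitzness of $\max$ and $\min$ gives $|\Mh(i,j;r)-\M(i,j)|\le\eta$ and $|\mh(i,j;r)-\m(i,j)|\le\eta$ for every pair $i,j\in A_r$. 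This is the deterministic engine of the proof. I would also isolate once and for all the structural inequality $\Delta_{i^\star}\le\Delta_i$ for a sub-optimal $i$ and its dominator $i^\star$ (the arm realizing $\Delta_i=\m(i,i^\star)$): in the definition of $\delta_{i^\star}^\star$ the $j=i$ term has $\M(i,i^\star)_+=0$ and $(\Delta_i^\star)_+=\Delta_i$, so it equals $\Delta_i$, and the defining minimum is thus at most $\Delta_i$.

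For the gap bound at round $r$ (assuming $\cP_r$), I would split on optimality. For sub-optimal $i$, $\cP_r$ guarantees $i^\star\in A_r$, so $\widehat\Delta_{i,r}^\star=\max_{j\in A_r}\mh(i,j;r)\ge\mh(i,i^\star;r)\ge\m(i,i^\star)-\eta=\Delta_i-\eta$; if instead $i\in S_r$, then $\widehat\delta_{i,r}^\star$ is a minimum of nonnegative quantities (no active arm empirically strictly dominates $i$), hence $\ge0\ge\Delta_i-\eta$, a situation that can occur only when $\Delta_i\le\eta$. Either way the bound with constant $\eta=\eta_r/2$ holds. For optimal $i$, the point is that $\Delta_i\le\M(i,j)$ for all $j$ forces $i\in S_r$ whenever $\Delta_i\ge\eta$, so $i$ is scored through $\widehat\delta_{i,r}^\star$; I would then bound $\widehat\delta_{i,r}^\star$ term by term against $\delta_i^\star$, where the branch $\Mh(j,i;r)_++(\widehat\Delta_{j,r}^\star)_+$ loses $2\eta$ because it incurs one $\eta$ from $\Mh(j,i;r)$ and a second from lower-bounding $(\widehat\Delta_{j,r}^\star)_+$ by $(\Delta_j^\star)_+-\eta$ — the latter again using $\cP_r$ (via $j^\star\in A_r$) for sub-optimal $j$ and triviality, since $\Delta_j^\star\le0$, for optimal $j$. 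Minimizing over $j$ yields $\widehat\Delta_{i,r}\ge\Delta_i-2\eta=\Delta_i-\eta_r$. The residual regime $\Delta_i<\eta$ with $i\notin S_r$ is immediate, as the target is then negative while $\widehat\Delta_{i,r}\ge0$.

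The inductive step $\cP_r\Rightarrow\cP_{r+1}$ — showing no dominator $i^\star$ is discarded while an arm $i$ it dominates survives — is where the real work lies, and I expect it to be the main obstacle. I would combine $\Delta_{i^\star}\le\Delta_i$ with the two-sided companions of the gap estimates (the upper bounds $\widehat\Delta_{i,r}\le\Delta_i+\eta$ for empirically sub-optimal and $\widehat\Delta_{i,r}\le\Delta_i+2\eta$ for empirically optimal arms, proved symmetrically) and the asymmetry of the elimination rule. In fixed-confidence, the thresholds $\veps_r$ (optimal) versus $\veps_r/2$ (sub-optimal) are calibrated so that when $i^\star\in S_r$ is removed, the $j=i$ branch of $\widehat\delta_{i^\star,r}^\star$ forces $(\widehat\Delta_{i,r}^\star)_+\ge\veps_r-(\eta-\Delta_i)_+$, pushing $\widehat\Delta_{i,r}^\star$ over the sub-optimal threshold so that $i$ is removed too; removal of $i^\star$ as empirically sub-optimal is impossible since then $\widehat\Delta_{i^\star,r}^\star\le\Delta_{i^\star}^\star+\eta<\veps_r/2$.

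The only delicate case is $\Delta_i<\veps_r/2$ with both arms empirically optimal, which I would rule out by showing $\widehat\delta_{i^\star,r}^\star\le(\eta-\Delta_i)+(\Delta_i+\eta)=2\eta=\veps_r$, so that $i^\star$ is — up to the boundary, resolved by the tie-breaking convention favoring elimination of seemingly sub-optimal arms — not eliminated at all. In fixed-budget there are no thresholds: I would instead work at the rank-$n_{r+1}$ cut, using $\eta=\lambda\Delta_{n_{r+1}+1}$ with $\lambda<1/5$ to ensure the empirical gap order is fine enough near the cut that $i^\star$ (smaller true gap) cannot be ranked strictly above $i$, and invoking the rule ``keep arms of $S_r$'' to settle equalities in favor of the dominator. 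The base case $r=1$ is trivial, since $A_1=[K]$ makes $\cP_1$ vacuous and the gap bound then follows from $\cE^1$ as above.
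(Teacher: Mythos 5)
Your first two paragraphs are sound and essentially reproduce the paper's own route: the Lipschitz deviation bounds on $\Mh,\mh$ (the paper's Lemma~\ref{lem:lem-ineq}), the term-by-term lower bound on $\widehat\delta^\star_{i,r}$ losing $2\eta$ under $\cP_r$ (Lemma~\ref{lem:lem-gap-ineq}), and the fixed-confidence induction exploiting the $\veps_r$ versus $\veps_r/2$ threshold asymmetry (Lemma~\ref{lem:lem-gap}). Two caveats there. First, the fixed-confidence algorithm has no tie-breaking rule, so your boundary case cannot be ``resolved by the tie-breaking convention''; instead sharpen the bound: for $i\in S_r$ one has $(\widehat\Delta^\star_{i,r})_+=0$, hence $\widehat\delta^\star_{i^\star,r}\le \Mh(i,i^\star;r)_+\le(\eta-\Delta_i)_+\le\veps_r/2<\veps_r$ strictly, and no tie-breaking is needed. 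Second, your blanket claim that $\widehat\Delta_{i,r}\le\Delta_i+2\eta$ for empirically optimal arms is false in general: $\widehat\delta^\star_{i,r}$ is a minimum over the \emph{active} arms only, and for a Pareto-optimal $i$ the competitor achieving $\delta_i^\star$ in the minimum over $[K]$ may already have been eliminated, in which case the empirical gap can be much larger than $\Delta_i+2\eta$. The only valid upper bounds are those routed through a competitor still in $A_r$ --- e.g.\ the $j=i$ branch of $\widehat\delta^\star_{i^\star,r}$, which is what you in fact use in the fixed-confidence case.

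The genuine gap is the fixed-budget inductive step, which you flag as the main obstacle but then dispatch with a mechanism that does not work. You claim that $\lambda<1/5$ ensures ``the empirical gap order is fine enough near the cut that $i^\star$ cannot be ranked strictly above $i$''. Your own estimates refute this: they give at best $\widehat\Delta_{i^\star,r}\le\Delta_i+2\eta$ and $\widehat\Delta_{i,r}\ge\Delta_i-\eta$, so the empirical order can flip by up to $3\eta>0$, and tie-breaking only arbitrates exact equality; nothing you have written excludes $\widehat\Delta_{i^\star,r}>\widehat\Delta_{i,r}$, i.e.\ $i^\star$ eliminated while $i$ survives. What closes this in the paper is a pigeonhole anchor absent from your plan: if $i^\star$ is eliminated, then among the $n_{r+1}+1$ distinct arms of $A_{r+1}\cup\{i^\star\}$ some $k_r$ has \emph{true} gap at least $\Delta_{n_{r+1}+1}$, and since eliminated arms carry the largest empirical gaps, $\widehat\Delta_{i^\star,r}\ge\widehat\Delta_{k_r,r}\ge(1-2\lambda)\Delta_{n_{r+1}+1}$. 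This pins the eliminated dominator's empirical gap to the scale $\Delta_{n_{r+1}+1}$ itself, not merely to $\widehat\Delta_{i,r}$, and the contradiction is then obtained by playing it against an upper bound through the $j=i$ branch, split on whether $i^\star$ empirically dominates $i$: if it does, then $\widehat\Delta_{i^\star,r}\le\widehat\Delta_{i,r}$ and the fixed-budget tie-break rule kills the configuration; if it does not, the paper's Lemma~\ref{lem:lem_dom} forces $\Delta_i\le\lambda\Delta_{n_{r+1}+1}$, and combining the two bounds yields $\M(i,i^\star)_+\ge(1-5\lambda)\Delta_{n_{r+1}+1}>0$, contradicting $i\preceq i^\star$ --- this is exactly where $\lambda<1/5$ enters. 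Without the pigeonhole step, your ``rank-$n_{r+1}$ cut'' gives no quantitative handle at all, because the cut value is controlled only relative to $\widehat\Delta_{i,r}$, which at that point of the induction has no useful upper bound.
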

Building on this result, we show that the recommendation of
Algorithm~\ref{alg:gegefb} is correct on $\cE_{\mathrm{fb}}^{\lambda}$, so its probability of error 
is upper-bounded by $\inf_{\lambda\in(0, 1/5)} \bP(\cE_{\mathrm{fb}}^{\lambda})$. We conclude the proof of Theorem~\ref{thm:thm-prob-gege} by upper bounding this probability (see Appendix~\ref{sec:analysis_fb}).

Similarly, using Proposition~\ref{prop:propFbFc} we prove the correctness of \myalg{alg:gegeFc}
on $\cE_{\text{fc}}$:  at any round $r$, $B_r \subset \cS^\star$ and $D_r\subset (\cS^\star)^\complement$. 

To further upper bound its sample complexity, we need an additional result to control the size of $A_r$. 
\begin{restatable}{lemma}{lemCardActive} 
\label{lem:lem-card-active}
	The following statement holds for \myalg{alg:gegeFc} on the event $\cE_{\emph{fc}}$: for all $p\in [K]$, after $\lceil \log(1/\Delta_p)\rceil$ rounds it remains less than $p$ active arms. In particular, \gegeFc{} stops after at most $\lceil \log(1/\Delta_1)\rceil$ rounds. 
\end{restatable}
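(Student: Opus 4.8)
The plan is to translate the elimination rule of Algorithm~\ref{alg:gegeFc} into an upper bound on the gap of any arm that survives a round, and then run a counting argument against the sorted gaps $\Delta_1\leq\dots\leq\Delta_K$.

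First I would unfold the survival condition. For $i\in A_r$, membership in $A_{r+1}$ means $i$ is added neither to $B_{r+1}$ nor to $D_{r+1}$. Since $B_{r+1}$ only collects arms of $S_r$ and $D_{r+1}$ only collects arms of $A_r\backslash S_r$, and these two sets partition $A_r$, the survival condition splits cleanly: if $i\in S_r$ then $i\in A_{r+1}$ iff $\widehat\Delta_{i,r}<\veps_r$, and if $i\in A_r\backslash S_r$ then $i\in A_{r+1}$ iff $\widehat\Delta_{i,r}<\veps_r/2$.

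Next, working on the good event $\cE_{\text{fc}}$, I would invoke Proposition~\ref{prop:propFbFc} with $\eta_r=\veps_r$, which yields $\widehat\Delta_{i,r}\geq\Delta_i-\veps_r$ when $i\in\cS^\star$ and $\widehat\Delta_{i,r}\geq\Delta_i-\veps_r/2$ otherwise. Combining each of these two one-sided deviations with each of the two survival thresholds gives four sub-cases; in each one, rearranging produces $\Delta_i<\widehat\Delta_{i,r}+\veps_r$, and the largest resulting value is $2\veps_r$ (attained for a truly optimal arm in $S_r$). Hence every surviving arm satisfies $\Delta_i<2\veps_r=2^{-r}$; equivalently, after $r$ rounds every active arm has gap strictly below $2^{-r}$. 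I emphasize that only the one-sided gap bounds of Proposition~\ref{prop:propFbFc} are needed here, not the correctness of the empirical classification $S_r$.

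Finally I would count. Fixing $p$ and setting $r=\lceil\log_2(1/\Delta_p)\rceil$ gives $2^{-r}\leq\Delta_p$, so every $i\in A_{r+1}$ obeys $\Delta_i<\Delta_p$; because the gaps are sorted, this forces $i\leq p-1$, so at most $p-1<p$ arms remain active after $r$ rounds. The final claim follows by taking $p=1$ (or $p=2$): the active set is emptied, resp. reduced to at most one arm, by round $\lceil\log_2(1/\Delta_1)\rceil$, and since the \textbf{while} loop exits as soon as $|A_r|\leq 1$, the algorithm terminates within $\lceil\log_2(1/\Delta_1)\rceil$ rounds. The step requiring the most care is the four-way case analysis, where I must verify that the uniform bound $\Delta_i<2^{-r}$ holds for every combination of empirically/truly optimal and sub-optimal arms, rather than only for a single configuration; the counting and the indexing convention (``after $r$ rounds'' being the set $A_{r+1}$) are then routine.
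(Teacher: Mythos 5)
Your proof is correct and follows essentially the same route as the paper: both rest on the one-sided deviation bounds of Proposition~\ref{prop:propFbFc} on $\cE_{\text{fc}}$, the elimination thresholds of Algorithm~\ref{alg:gegeFc}, and a counting argument against the sorted gaps at round $r=\lceil\log_2(1/\Delta_p)\rceil$ using $2\veps_r=2^{-r}\leq\Delta_p$. The only (cosmetic) difference is direction: the paper argues directly that any arm with $\Delta_i\geq\Delta_p$ satisfies $\widehat\Delta_{i,r}\geq\veps_r$ and is thus discarded under either classification, which collapses your four-way case analysis into one inequality by using the weaker deviation bound and the weaker threshold uniformly.
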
 
The proof of this lemma is given in Appendix~\ref{sec:proof_lem_card}.  
To get the sample complexity bound of Theorem~\ref{thm:thm-gege-fc-sc}, some extra arguments are needed. We sketch some elements below (the full proof is given in Appendix~\ref{sec:full_proof}). Assume $\cE_{\text{fc}}$ holds and let $\tau_\delta$ be the sample complexity of \myalg{alg:gegeFc}. Lemma~\ref{lem:lem-card-active} yields $\tau_\delta \leq \sum_{r=1}^{\lceil \log(1/\Delta_1)\rceil} \Omega(h_r/\veps_r^2)$ with $h_r\leq \lvert A_r \lvert$. 

Using Lemma~\ref{lem:lem-card-active}, we introduce "checkpoints rounds'' between which we control $\lvert A_r \rvert$ and thus $h_r$. Let the sequence $(\alpha_s)_{s\geq 0}$ defined 
as $\alpha_0=0$ and $\alpha_s = \lceil \log_2(1/\Delta_{\lfloor \dimfeat/2^s\rfloor})\rceil$, for $s\geq 1$. 
Simple calculation yields $\alpha_{\lfloor\log_2(\dimfeat)\rfloor} = \lceil \log_2(1/\Delta_{1})\rceil$ and $ \{1,\dots, \lceil \log_2(1/\Delta_{1})\rceil\} = \cup_{s=1}^{\lfloor \log_2(\dimfeat)\rfloor} \llbracket 1+\alpha_{s-1}, \alpha_s \rrbracket.$ Therefore 
$$\tau_\delta \leq  \sum_{s=1}^{\lfloor \log_2(\dimfeat)\rfloor} \sum_{r=\alpha_{s-1}+1}^{\alpha_s} \Omega(\lvert A_r \lvert/\veps_r^2).$$ 

Now by Lemma~\ref{lem:lem-card-active}, for $r>\alpha_s$, $\lvert A_r \lvert\leq \lfloor \dimfeat/2^s\rfloor$, so essentially 
$\tau_\delta \leq \sum_{s=1}^{\lfloor \log_2(\dimfeat)\rfloor} \Omega(4^{\alpha_s}\lfloor \dimfeat/2^s\rfloor )$. 

Carefully re-indexing this sum and addressing a few more technicalities, we obtain the result in Theorem~\ref{thm:thm-gege-fc-sc}. Showing that $\bP(\cE_{\mathrm{fc}})\geq 1-\delta$, from Lemma~\ref{lem:lem-concentr-design} completes the proof. 

\section{EXPERIMENTS}
\label{sec:expe}

We evaluate GEGE in real-world and synthetic instances. In the fixed-budget setting, we compare against EGE-SH and EGE-SR \citep{kone2023bandit}, two algorithms for unstructured PSI in the fixed-budget setting, and a uniform sampling baseline.
 
In the fixed-confidence setting, we compare to APE \citep{kone2023adaptive}, a fully adaptive algorithm for unstructured PSI, and PAL \citep{zuluaga_active_201}, an algorithm that uses Gaussian process modeling for PSI, instantiated with a linear kernel. 

\subsection{Experimental protocol}
We describe below the datasets in our experiments, and we detail our experimental setup. 

\paragraph{Synthetic instances} 
We fix features $x_1, \dots, x_\dimfeat$ and $\Theta$ common to the instances described below. For any $K \geq h$ we define a linear PSI instance $\nu_K$ augmented with arms $x_{\dimfeat +1},\dots, x_K$ chosen so that arms $1,\dots, \dimfeat$ have the same lowest gaps in $\nu_K$. This implies that the complexity terms $H_{1, \text{lin}}$ and  $H_{2, \text{lin}}$ are constant for such instances, irrespective of the number of arms. We set $h=8, d=2$.

\paragraph{Real-world dataset} NoC \citep{noc} is a bi-objective optimization dataset for hardware design. The goal is to optimize $\dimvec=2$ performance criteria: energy consumption and runtime of the implementation of a Network on Chip (NoC). The dataset contains $K=259$ implementations, each of them described by $\dimfeat = 4$ features. 
 
In each instance, we report, for different algorithms,
the empirical error probability (fixed-budget) and the empirical distribution of the sample complexity (fixed-confidence) averaged over 500 seeded runs. We set $\delta=0.01$ for the fixed-confidence experiments and $T = H_{2,\text{lin}}$ for fixed-budget.

\begin{figure}[htb]
      \centering
      \begin{minipage}{0.46\linewidth}
              \includegraphics[width=\linewidth]{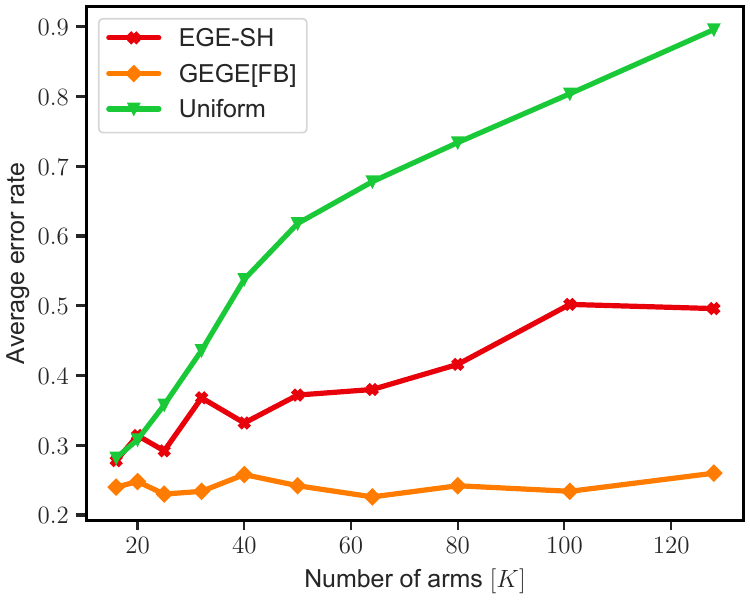}	
              \captionof{figure}{Average misidentification rate w.r.t $K$ on the synthetic dataset}
              \label{fig:fig-synth-fb}
      \end{minipage}
\hspace{0.1cm}
      \begin{minipage}{0.46\linewidth}
              \includegraphics[width=\linewidth]{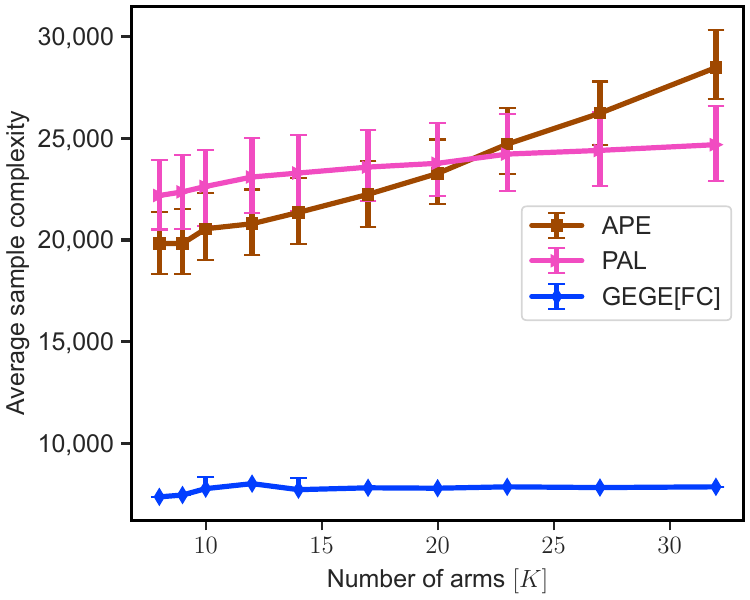}	
              \captionof{figure}{Average sample complexity w.r.t $K$ in the synthetic experiment}
              \label{fig:fig-synth-fc}
      \end{minipage}
      \end{figure}
      
\begin{figure}[H]
\centering
\hspace{0.1cm}
        \begin{minipage}{0.46\linewidth}
              \includegraphics[width=\linewidth]{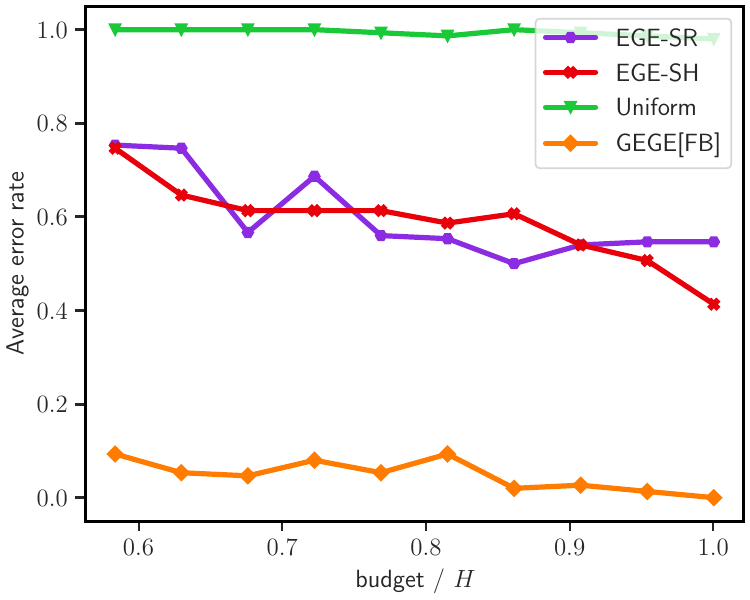}	
              \captionof{figure}{Average misidentification rate w.r.t $T$ on NoC experiment} 
              \label{fig:fig-noc-fb}
      \end{minipage}
\hspace{0.1cm}
            \begin{minipage}{0.46\linewidth}
              \includegraphics[width=\linewidth]{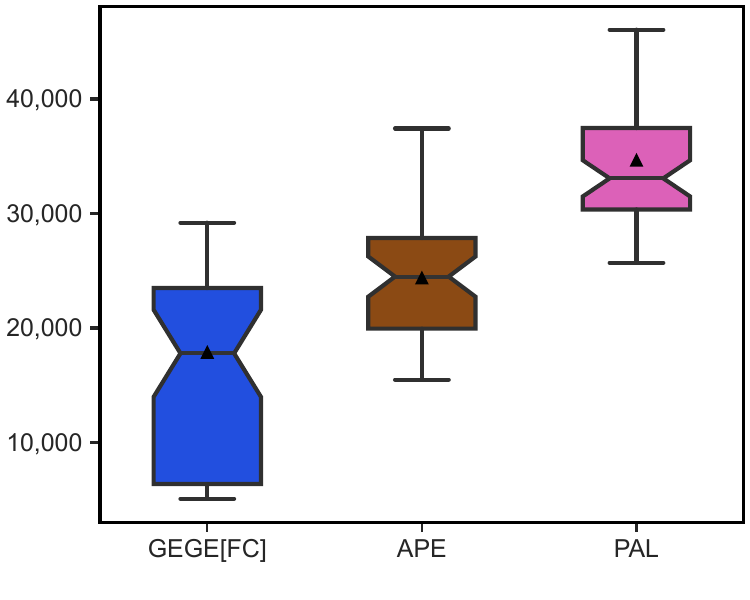}
              \captionof{figure}{Empirical sample complexity in the NoC experiment}
              \label{fig:fig-noc-fc}
      \end{minipage}
 \end{figure}   
 
\subsection{Summary of the results}

By Theorem~\ref{thm:thm-prob-gege} and~\ref{thm:thm-gege-fc-sc}, on the synthetic instance with $K$ arms, the sample complexity of GEGE should be a constant plus a $\log(K)$ term. This is coherent with what we observe: \myfig{fig:fig-synth-fb} shows that the probability of error of \gege{} merely increases with $K$, whereas for EGE-SH/SR, it grows much faster. Similarly, on \myfig{fig:fig-synth-fc}, the sample complexity of \gege{} does not significantly increase with $K$, unlike that of \algauername{}. Therefore, \gege{} only suffers a small cost for the number of arms. 

For the real-world scenario, GEGE significantly outperforms its competitors in both settings.
\myfig{fig:fig-noc-fc} shows that it uses significantly fewer samples to identify the Pareto set compared to both \algauername{} and \pal{}. \myfig{fig:fig-noc-fb} shows that the probability of misidentification of \gege{} is reduced by up to $0.5$ compared to EGE-SH. Moreover, it is worth noting that EGE-SH requires $T\geq K\log_2(K) \approx 2000$ (for NoC) to run on this instance while \gege{} only needs $T\geq \log_2(h)$.
 
We reported runtimes around 10 seconds for single runs on instances with up to $K=500, d=8$ (cf Table~\ref{table:runtime} in Appendix~\ref{sec:impl}). The time and memory complexity is addressed in Appendix~\ref{sec:impl}, and additional details about the implementation are provided. 
 
Appendix~\ref{sec:add_exp} contains additional experimental results on a real-world multi-criteria optimization problem with $K=768$ arms.   
\section{CONCLUSION AND FINAL REMARKS}
\label{sec:conclusion}

We have proposed the first algorithms for PSI in a multi-output linear bandit model that are guaranteed to outperform their unstructured counterparts. They leverage optimal design approaches to estimate the means vector and some sub-optimality gaps for PSI. In the fixed-budget setting, \gegeFb{} is the first algorithm with nearly optimal guarantees for linear PSI. In the fixed-confidence setting, \gegeFc{} provably outperforms its competitors both in theory and in our experiments. It is also the first fixed-confidence PSI algorithm using a limited number of batches. 

While the sample complexity of \gege{} features a complexity term depending only on $\dimfeat$ gaps, we still have $\log(K)$ terms due to union bounds. \cite{katz_samuels_empirical_process} showed that such union bounds can be avoided in linear BAI by using results from supremum of empirical processes. Further work could investigate if these observations would apply in linear PSI. In the alternative situation where $\dimfeat \gg K$, for example, in an RKHS, following the work of \cite{camilleri21_kernel}, we could investigate how to extend this optimal design approach to PSI with high-dimensional features. 

\subsubsection*{Acknowledgements}
Cyrille Kone is funded by an Inria/Inserm PhD grant. This work has been partially supported by the French National Research Agency (ANR) in the framework of the PEPR IA FOUNDRY project (ANR-23-PEIA-0003).

\bibliography{main.bib}
\bibliographystyle{abbrvnat}


 \begin{appendix}
 \onecolumn
 
 \section{OUTLINE}
\label{sec:outline}
In section~\ref{sec:proof_propFbFc}, we prove Proposition~\ref{prop:propFbFc}, which is a crucial result to prove the guarantees of \gege{} in both fixed-confidence and fixed-budget settings. 
 Section~\ref{sec:analysis_fb} proves the fixed-budget guarantees of \gegeFb{}, in particular Theorem~\ref{thm:thm-prob-gege}. In section~\ref{sec:analysis_fc} we prove the fixed-confidence guarantees of \gegeFc{} by proving Theorem~\ref{thm:thm-gege-fc-sc}. 
 Section~\ref{sec:key_lemmas} contains some ingredient concentration lemmas that are used in our proofs. In section~\ref{sec:lower_bounds}, we analyze the lower bounds in both fixed-confidence and fixed-budget settings. In section~\ref{sec:compte_round}, we analyze the properties of Algorithm~\ref{alg:est_gaps} by using some results on G-optimal design. 
 Finally, section~\ref{sec:add_experiments} contains additional experimental results and the detailed experimental setup.

\section{NOTATION}
\label{sec:notation}
We introduce some additional notation used in the following sections.

In the subsequent sections, $r$ will always denote a round of \gege{}, which should be clear from the context. We then denote by $A_r$ active arms at round $r$ and by $\widehat\Theta_r$ the empirical estimate of $\Theta$ at round $r$, computed by a call to Algorithm~\ref{alg:est_gaps}. By convention we let $\max_{\emptyset} = -\infty$. 

For any sub-optimal arm $i$, there exists a Pareto-optimal arm $i^\star$ (not necessarily unique) such that $\Delta_i = \m(i,i^\star)$. More generally given a sub-optimal $i$ we denote by $i^\star$ any arm of $\argmax_{j\in \cS^\star} \m(i,j)$. 

At a round $r$, we let 
\begin{equation}
	\cP_r := \left\{ \forall\; s \in \{1,\dots,r\},\; \forall\; i \in A_s, i\in (\cS^\star)^\complement \cap A_s \Rightarrow i^\star \in A_s \right\},
\end{equation}
with $\cP = \cP_\infty$. In particular, for a sub-optimal arm $i$, $i^\star \in A_s$ should be understood as $A_s \cap (\argmax_{j\in \cS^\star} \m(i,j)) \neq \emptyset$. If for some $\tau$, $\cP_\tau$ is true and $A_{\tau+1} = \emptyset$ then we say that $\cP$ holds.  

\section{PROOF OF PROPOSITION~\ref{prop:propFbFc}}
\label{sec:proof_propFbFc}

We first recall the result.

\propFbFc*

In both the fixed-budget and fixed-confidence setting, the proof proceeds by induction on the round $r$. Before presenting the inductive argument separately in each case, we establish in Appendix~\ref{sec:deviation_gaps} an important result that is used in both cases (Lemma~\ref{lem:lem-gap-ineq}): if $\cP_r$ holds at some round $r$ then, the empirical gaps computed at this round are good estimators of the true PSI gaps.  

To establish this first result, we need the following intermediate lemmas, proved in Appendix~\ref{sec:key_lemmas}.

\begin{restatable}{lemma}{lemIneq}
\label{lem:lem-ineq}
	At any round $r$ and for any arms $i,j \in A_r$ it holds that 
	\begin{align*}
	&\lvert \Mh(i,j;r) - \M(i,j)\lvert \leq \|(\widehat\Theta_r - \Theta)^\T (x_i - x_j)\|_\infty \; \text{and} \\
	 &\lvert \mh(i,j;r) - \m(i,j)\lvert \leq \|(\widehat\Theta_r - \Theta)^\T (x_i - x_j)\|_\infty.
	\end{align*}
\end{restatable}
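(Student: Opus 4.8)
The plan is to reduce both inequalities to one elementary observation: the functionals $u \mapsto \min_{c\in[d]} u(c)$ and $u \mapsto \max_{c\in[d]} u(c)$ are $1$-Lipschitz from $(\bR^d, \|\cdot\|_\infty)$ to $\bR$. Granting this, the statement follows essentially by unfolding the definitions, since every quantity involved is a coordinate-wise min or max of a \emph{linear} image of $x_i - x_j$.

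First I would write the empirical and population quantities in a common form. Because $\muh_{i,r} = \widehat\Theta_r^\T x_i$ and $\mu_i = \Theta^\T x_i$, for each coordinate $c \in [d]$ we have $\muh_{i,r}(c) - \muh_{j,r}(c) = \big(\widehat\Theta_r^\T (x_i - x_j)\big)(c)$ and $\mu_i(c) - \mu_j(c) = \big(\Theta^\T (x_i - x_j)\big)(c)$. Hence, setting $u := \widehat\Theta_r^\T (x_i - x_j)$ and $v := \Theta^\T (x_i - x_j)$, we obtain $\Mh(i,j;r) = \max_c u(c)$, $\M(i,j) = \max_c v(c)$, and $u - v = (\widehat\Theta_r - \Theta)^\T (x_i - x_j)$, so $\|u - v\|_\infty$ is exactly the right-hand side of both claimed inequalities. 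Similarly, with $u' := -u$ and $v' := -v$ (the images of $x_j - x_i$), one has $\mh(i,j;r) = \min_c u'(c)$ and $\m(i,j) = \min_c v'(c)$, while $\|u' - v'\|_\infty = \|u - v\|_\infty$ because $\|\cdot\|_\infty$ is invariant under negation.

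Next I would prove the Lipschitz fact. For any $u, v \in \bR^d$, picking $c^\star \in \argmin_c v(c)$ gives $\min_c u(c) \leq u(c^\star) \leq v(c^\star) + \|u - v\|_\infty = \min_c v(c) + \|u - v\|_\infty$; swapping the roles of $u$ and $v$ yields $|\min_c u(c) - \min_c v(c)| \leq \|u - v\|_\infty$, and the identical argument with an $\argmax$ gives the analogous bound for the maximum. Applying the max-bound to the pair $(u, v)$ proves the inequality for $\Mh$, and applying the min-bound to the pair $(u', v')$ proves the inequality for $\mh$, in both cases with right-hand side $\|(\widehat\Theta_r - \Theta)^\T (x_i - x_j)\|_\infty$.

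There is no substantial obstacle here; the lemma is routine and the whole argument is short. The only point requiring a little care is the bookkeeping of signs and orderings (whether one works with $x_i - x_j$ or $x_j - x_i$, and whether the inner functional is a $\min$ or a $\max$), but this is harmless precisely because $\|w\|_\infty = \|-w\|_\infty$, so the bound on the right-hand side is identical in all four cases.
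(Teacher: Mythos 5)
Your proposal is correct and follows essentially the same route as the paper: both proofs rest on the $1$-Lipschitz property of the coordinate-wise max (the reverse triangle inequality) combined with linearity, $\muh_{i,r}-\muh_{j,r} = \widehat\Theta_r^\T(x_i-x_j)$. The only cosmetic difference is that you handle the $\m$ case by proving the Lipschitz fact for the min directly, whereas the paper deduces it from the max case via $\m(i,j) = -\M(i,j)$ and $\mh(i,j;r) = -\Mh(i,j;r)$; these are interchangeable.
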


\begin{restatable}{lemma}{lemDom}
 \label{lem:lem_dom}
At any round $r$, for any sub-optimal arm $i\in A_r$, if $i^\star \in A_r$ and $i^\star$ does not empirically dominate $i$ then $\Delta_i^\star < \| (\widehat \Theta_r - \Theta)^\T (x_i - x_{i^\star})\|_\infty$.
\end{restatable}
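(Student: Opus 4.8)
The plan is to reduce the claim to the single pair $(i,i^\star)$ and invoke Lemma~\ref{lem:lem-ineq}. Recall that by the choice of $i^\star$ (notation section) together with the gap definition \eqref{eq:def-gap}, the sub-optimal arm $i$ satisfies $\Delta_i^\star = \m(i,i^\star)$, and since $i$ is strictly dominated by $i^\star$ this quantity is strictly positive.

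First I would translate the hypothesis ``$i^\star$ does not empirically dominate $i$'' into a statement about $\mh(i,i^\star;r)$. Exactly as $\m(i,j)>0 \Leftrightarrow i \prec j$ holds in the true model, the empirical analogue gives $\mh(i,i^\star;r) > 0$ if and only if $i$ is \emph{strictly} empirically dominated by $i^\star$, i.e. $\muh_{i,r}(c) < \muh_{i^\star,r}(c)$ for every $c$. Since $i^\star$ does not dominate $i$ (hence does not strictly dominate it), there is a coordinate $c_0$ with $\muh_{i^\star,r}(c_0) \le \muh_{i,r}(c_0)$, so that $\mh(i,i^\star;r) = \min_{c}[\muh_{i^\star,r}(c) - \muh_{i,r}(c)] \le 0$. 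Then applying Lemma~\ref{lem:lem-ineq} to the pair $(i,i^\star)$ yields $\m(i,i^\star) - \mh(i,i^\star;r) \le \|(\widehat\Theta_r - \Theta)^\T(x_i - x_{i^\star})\|_\infty$, and combining the two observations gives
$$\Delta_i^\star = \m(i,i^\star) \le \mh(i,i^\star;r) + \|(\widehat\Theta_r - \Theta)^\T(x_i - x_{i^\star})\|_\infty \le \|(\widehat\Theta_r - \Theta)^\T(x_i - x_{i^\star})\|_\infty,$$
which is the desired bound up to strictness.

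I expect the main obstacle to be upgrading this final $\le$ to the strict $<$ stated in the lemma. Here I would argue coordinate-wise rather than through $\mh$: write $\Delta_i^\star \le \mu_{i^\star}(c_0) - \mu_i(c_0) = [\muh_{i^\star,r}(c_0) - \muh_{i,r}(c_0)] + [(\Theta - \widehat\Theta_r)^\T(x_{i^\star} - x_i)](c_0)$, bounding the first bracket by $0$ (non-domination at $c_0$) and the second by $\|(\widehat\Theta_r - \Theta)^\T(x_i - x_{i^\star})\|_\infty$. Strictness then follows once the non-domination coordinate is strictly reversed, i.e. $\muh_{i^\star,r}(c_0) < \muh_{i,r}(c_0)$; I would therefore make the reading of ``does not empirically dominate'' explicit (as the negation of strict empirical dominance) so that such a strictly-negative coordinate is guaranteed, and use $\Delta_i^\star>0$ to rule out the degenerate equality case. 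The remaining steps — the sign characterisation of $\mh$ and the coordinate decomposition of the estimation error — are routine given Lemma~\ref{lem:lem-ineq}.
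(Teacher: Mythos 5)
Your core reduction is correct and is essentially the paper's own argument in mirror image: the paper works with $\Mh(i,i^\star;r)$ where you work with $\mh(i,i^\star;r)=-\Mh(i,i^\star;r)$, and both proofs conclude by applying Lemma~\ref{lem:lem-ineq} to the single pair $(i,i^\star)$. The genuine gap is exactly where you located the difficulty: the strict inequality. The paper reads ``$i^\star$ does not empirically dominate $i$'' as $\Mh(i,i^\star;r)>0$, i.e.\ there is a coordinate $c_0$ where $\muh_{i,r}(c_0)>\muh_{i^\star,r}(c_0)$ \emph{strictly} (failure of even weak empirical domination); then $\Mh(i,i^\star;r)-\M(i,i^\star)>-\M(i,i^\star)=\m(i,i^\star)=\Delta_i^\star$, and Lemma~\ref{lem:lem-ineq} gives the strict bound at once. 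You instead read the hypothesis as the negation of \emph{strict} empirical dominance, which only yields a coordinate with $\muh_{i^\star,r}(c_0)\leq\muh_{i,r}(c_0)$, hence only $\Delta_i^\star\leq\|(\widehat\Theta_r-\Theta)^\T(x_i-x_{i^\star})\|_\infty$; and neither of your proposed repairs closes this. The negation of strict dominance cannot ``guarantee a strictly-negative coordinate'' --- a tie is precisely what it permits --- and $\Delta_i^\star>0$ does not exclude the equality case, since $\Delta_i^\star>0$ is perfectly compatible with $\Delta_i^\star=\|(\widehat\Theta_r-\Theta)^\T(x_i-x_{i^\star})\|_\infty$.

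In fact, under your reading the strict statement is false. Take $d=2$, $\mu_i=(0,0)$, $\mu_{i^\star}=(1,2)$, so $\Delta_i^\star=\m(i,i^\star)=1$, and suppose the estimate realizes $\muh_{i,r}=(0,0)$, $\muh_{i^\star,r}=(0,2)$. Then $i^\star$ does not strictly dominate $i$ empirically (the first coordinates tie), yet
$(\widehat\Theta_r-\Theta)^\T(x_i-x_{i^\star})=(0,-2)-(-1,-2)=(1,0)$, whose sup-norm equals $1=\Delta_i^\star$: equality, not strict inequality. The fix is simply to adopt the paper's reading of the hypothesis, $\Mh(i,i^\star;r)>0$; with that, your own coordinate-wise decomposition at the strictly reversed coordinate $c_0$ immediately delivers the strict bound, with no extra argument needed. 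Two mitigating remarks: you did flag this interpretive point as the crux rather than sliding over it, and the non-strict bound you do establish is all the paper ever uses downstream (in the proof of Lemma~\ref{lem:lem-gap-fb} the conclusion is invoked as $\Delta_j\leq\|(\widehat\Theta_r-\Theta)^\T(x_j-x_i)\|_\infty$), so the damage is confined to the strictness claim itself.
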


\subsection{Deviations of the gaps when $\cP_r$ holds} \label{sec:deviation_gaps}
In this part, we control the deviations of the empirical gaps when proposition $\cP_r$ holds. 
\begin{lemma}
\label{lem:lem-delta-star}
Assume that the proposition $\cP_r$ holds at some round $r$. Then for any arm $i\in A_r$ it holds that 
$$ \left\lvert (\widehat{\Delta}_{i,r}^\star)_+ - (\Delta_i^\star)_+ \right\lvert \leq \left\lvert \widehat{\Delta}_{i,r}^\star - \Delta_i^\star \right\lvert \leq \gamma_{i,r} $$
where $\gamma_{i, r} := \max_{j\in A_r} \|(\widehat\Theta_r - \Theta)^\T (x_i - x_j)\|_\infty $. 
\end{lemma}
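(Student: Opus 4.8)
The plan is to prove the two inequalities separately. The left one, $|(\widehat{\Delta}_{i,r}^\star)_+ - (\Delta_i^\star)_+| \leq |\widehat{\Delta}_{i,r}^\star - \Delta_i^\star|$, is immediate from the fact that $x\mapsto (x)_+$ is $1$-Lipschitz, so I would dispose of it in one line. All the substance lies in the right inequality $|\widehat{\Delta}_{i,r}^\star - \Delta_i^\star| \leq \gamma_{i,r}$.

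The main structural difficulty is an index-set mismatch: $\widehat{\Delta}_{i,r}^\star = \max_{j\in A_r}\mh(i,j;r)$ is a maximum over the \emph{currently active} arms, whereas $\Delta_i^\star = \max_{j\in[K]}\m(i,j)$ is a maximum over \emph{all} $K$ arms. If both maxima ran over the same index set, the bound would follow at once from the elementary inequality $|\max_j a_j - \max_j b_j| \leq \max_j|a_j - b_j|$ combined with Lemma~\ref{lem:lem-ineq}. So the crux is to show that, under $\cP_r$, restricting the true maximum to active arms loses nothing, i.e. $\Delta_i^\star = \max_{j\in A_r}\m(i,j)$.

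I would establish this equality by cases. If $i\in\cS^\star$, then $\m(i,j)\leq 0$ for every $j$ (no arm strictly dominates a Pareto-optimal arm) while $\m(i,i)=0$; since $i\in A_r$, both $\max_{j\in[K]}\m(i,j)$ and $\max_{j\in A_r}\m(i,j)$ equal $0$. If $i$ is sub-optimal, I would first argue the maximum over $[K]$ is attained at a Pareto-optimal arm: every $j$ is weakly dominated by some $j'\in\cS^\star$ (itself if $j\in\cS^\star$), and weak domination $\mu_j(c)\leq\mu_{j'}(c)$ for all $c$ yields $\m(i,j')\geq\m(i,j)$, whence $\max_{j\in[K]}\m(i,j)=\max_{j\in\cS^\star}\m(i,j)=\m(i,i^\star)$. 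Here $\cP_r$ enters decisively: since $i\in A_r\cap(\cS^\star)^\complement$, the event $\cP_r$ guarantees $i^\star\in A_r$, so the maximizer is itself active and $\max_{j\in A_r}\m(i,j)=\m(i,i^\star)=\Delta_i^\star$.

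With $\Delta_i^\star=\max_{j\in A_r}\m(i,j)$ in hand, I would conclude with
$$\left|\widehat{\Delta}_{i,r}^\star - \Delta_i^\star\right| = \left|\max_{j\in A_r}\mh(i,j;r) - \max_{j\in A_r}\m(i,j)\right| \leq \max_{j\in A_r}\left|\mh(i,j;r) - \m(i,j)\right| \leq \max_{j\in A_r}\|(\widehat\Theta_r-\Theta)^\T(x_i - x_j)\|_\infty = \gamma_{i,r},$$
using the elementary max inequality followed by Lemma~\ref{lem:lem-ineq} and the definition of $\gamma_{i,r}$. The main obstacle is therefore conceptual rather than computational: correctly isolating the active-versus-all index-set mismatch and recognizing that $\cP_r$ is exactly the property that repairs it for sub-optimal arms, while the inclusion of the index $j=i$ (contributing the value $0$) handles Pareto-optimal arms.
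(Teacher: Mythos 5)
Your proof is correct and follows essentially the same route as the paper's: the $1$-Lipschitz property of $x\mapsto x_+$, the key observation that under $\cP_r$ one has $\Delta_i^\star=\max_{j\in A_r}\m(i,j)$ (trivially for Pareto-optimal arms, and via $i^\star\in A_r$ for sub-optimal arms), and then the elementary max-difference inequality combined with Lemma~\ref{lem:lem-ineq}. If anything, your handling of the Pareto-optimal case (noting that the index $j=i$ contributes $0$ so both maxima over $A_r$ and $[K]$ equal $0$) is slightly more explicit than the paper's, which treats that case only at the level of positive parts.
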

\begin{proof}
This inequality is a direct consequence of Lemma~\ref{lem:lem-ineq} and the relation $\lvert x_+ - y_+\lvert \leq \lvert x-y\lvert)$  which holds for any $x,y\in \bR$. 
Note that for a Pareto-optimal arm $i$ we trivially have 
$ (\Delta_i^\star)^+ = 0 = (\max_{j\in A_r} \m(i, j))_+$. And for a sub-optimal arm $i\in A_r$, as $i^\star \in A_r$ (from proposition $\cP_r$) we have $\Delta_i^\star = \m(i,i^\star) = \max_{j\in A_r} \m(i,j)$. Thus for any arm $i\in A_r$ we have 
\begin{eqnarray*}
\left\lvert (\widehat{\Delta}_{i,r}^\star)_+ - (\Delta_i^\star)_+ \right\lvert &=& \left\lvert (\max_{j\in A_r } \mh(i, j; r))_+ -  (\max_{j\in A_r} \mh(i, j))_+\right\lvert, \\
&\leq& \left\lvert (\max_{j\in A_r} \mh(i, j; r)) - (\max_{j\in A_r} \mh(i, j))\right\lvert, \\
&\leq& \max_{j\in A_r} \left\lvert \mh(i, j; r) -  \mh(i, j)\right\lvert, \\
&\leq& \max_{j\in A_r}  \left\|(\widehat\Theta_r - \Theta)^\T (x_i - x_j) \right\|_\infty = \gamma_{i,r}, 
\end{eqnarray*} 
where the last inequality follows from Lemma~\ref{lem:lem-ineq}. 
\end{proof}

\begin{restatable}{lemma}{lemGapIneq}
\label{lem:lem-gap-ineq} If the proposition $\cP_r$ holds at some round $r$ then for any  arm $i\in A_r$,
$$ \widehat\Delta_{i,r} - \Delta_i \geq \begin{cases}
	-2\gamma_{r} &\text{ if } i\in \cS^\star,\\
	-\gamma_{i,r} & \text{ else,}
\end{cases} $$
where $\gamma_{i, r} := \max_{j\in A_r} \|(\widehat\Theta_r - \Theta)^\T (x_i - x_j)\|_\infty $ and $\gamma_r := \max_{i\in A_r} \gamma_{i,r}$. 
\end{restatable}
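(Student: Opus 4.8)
The plan is to prove the two-sided family of bounds by a double case split: first on whether $i$ is Pareto optimal or sub-optimal, and then, within each case, on whether $i$ lies in the empirical Pareto set $S_r$ or in $A_r\setminus S_r$. Throughout I would rely on four ingredients: Lemma~\ref{lem:lem-ineq} to transfer the per-pair estimation error $\|(\widehat\Theta_r-\Theta)^\T(x_i-x_j)\|_\infty\leq\gamma_{i,r}$ onto $\Mh$ and $\mh$; Lemma~\ref{lem:lem-delta-star}, available precisely because $\cP_r$ holds, to control $(\widehat\Delta_{i,r}^\star)_+$ against $(\Delta_i^\star)_+$; Lemma~\ref{lem:lem_dom}; and the two elementary facts $|x_+-y_+|\leq|x-y|$ and $|a\wedge b-c\wedge d|\leq\max(|a-c|,|b-d|)$.

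I would first dispatch a sub-optimal arm $i$, for which $\Delta_i=\Delta_i^\star$, aiming at the slack $-\gamma_{i,r}$. If $i\in A_r\setminus S_r$ then $\widehat\Delta_{i,r}=\widehat\Delta_{i,r}^\star$ and the bound is immediate from Lemma~\ref{lem:lem-delta-star}. If instead $i\in S_r$, then since $\cP_r$ holds we have $i^\star\in A_r$, and membership in $S_r$ means $i^\star$ does not empirically strictly dominate $i$; Lemma~\ref{lem:lem_dom} then yields $\Delta_i^\star<\gamma_{i,r}$, so $\Delta_i-\gamma_{i,r}<0$. On the other hand $i\in S_r$ forces $\Mh(i,j;r)\geq0$ for every $j\in A_r$, so each term defining $\widehat\delta_{i,r}^\star$ is a minimum of two non-negative quantities; hence $\widehat\Delta_{i,r}=\widehat\delta_{i,r}^\star\geq0>\Delta_i-\gamma_{i,r}$.

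Next I would treat a Pareto optimal arm $i$, where $\Delta_i=\delta_i^\star=\min_{j\neq i}[\M(i,j)\wedge(\M(j,i)_++(\Delta_j^\star)_+)]$, targeting the slack $-2\gamma_r$. If $i\in S_r$, I would compare, term by term, the empirical quantity $\widehat T_{ij}:=\Mh(i,j;r)\wedge(\Mh(j,i;r)_++(\widehat\Delta_{j,r}^\star)_+)$ with its population version $T_{ij}$: Lemma~\ref{lem:lem-ineq} bounds $|\Mh(i,j;r)-\M(i,j)|\leq\gamma_{i,r}$ and $|\Mh(j,i;r)_+-\M(j,i)_+|\leq\gamma_{i,r}$, Lemma~\ref{lem:lem-delta-star} bounds $|(\widehat\Delta_{j,r}^\star)_+-(\Delta_j^\star)_+|\leq\gamma_{j,r}$, and the $\wedge$-inequality combines these into $|\widehat T_{ij}-T_{ij}|\leq\gamma_{i,r}+\gamma_{j,r}\leq2\gamma_r$. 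Since minimizing $T_{ij}$ over the subset $A_r\setminus\{i\}$ can only raise it above $\Delta_i=\min_{j\in[K]\setminus\{i\}}T_{ij}$, taking the minimum over $j\in A_r\setminus\{i\}$ gives $\widehat\Delta_{i,r}=\widehat\delta_{i,r}^\star\geq\Delta_i-2\gamma_r$. If instead $i\in A_r\setminus S_r$, some $j_0\in A_r$ strictly dominates $i$ empirically, so $\Mh(i,j_0;r)<0$; but $i$ optimal gives $\M(i,j_0)>0$, and Lemma~\ref{lem:lem-ineq} yields $\M(i,j_0)\leq\Mh(i,j_0;r)+\gamma_{i,r}<\gamma_r$, whence $\Delta_i\leq\M(i,j_0)<\gamma_r$. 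Since $\widehat\Delta_{i,r}=\widehat\Delta_{i,r}^\star\geq\mh(i,j_0;r)>0$, we conclude $\widehat\Delta_{i,r}>0>\Delta_i-2\gamma_r$.

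The main obstacle I anticipate lies not in the ``matching'' cases, where the empirical and true classifications of $i$ agree and the estimate follows from a direct pairwise comparison of the defining formulas, but in the two ``cross'' cases: a truly optimal arm that looks empirically dominated, and a truly sub-optimal arm that looks empirically optimal. There the two branches of the gap definition do not line up, so one cannot compare $\widehat\Delta_{i,r}$ and $\Delta_i$ formula-to-formula; instead the misclassification itself must be exploited — through Lemma~\ref{lem:lem_dom} or the strict-domination witness $j_0$ — to certify that the true gap is already below the estimation error, making the required negative slack automatic. The delicate bookkeeping will be handling the $(\cdot)_+$ truncations, the passage from a minimum over $[K]$ to one over $A_r$, and keeping the $\gamma_{i,r}$ versus $\gamma_r$ distinction exact so that the sub-optimal branch retains the sharper constant.
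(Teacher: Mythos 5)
Your proof is correct, and it reaches the stated bounds by a route that differs from the paper's in its case structure. The paper never splits on the empirical classification of $i$: it first observes (its Eq.~(14), stated as a reformulation of the definition) that $\widehat\Delta_{i,r} = \max(\widehat\Delta^\star_{i,r}, \widehat\delta^\star_{i,r})$, which lets it lower-bound $\widehat\Delta_{i,r}$ by $\widehat\Delta^\star_{i,r}$ when $i$ is truly sub-optimal and by $\widehat\delta^\star_{i,r}$ when $i$ is truly optimal, and then it runs exactly your two ``matching'' arguments (reverse triangle inequality plus Lemma~\ref{lem:lem-ineq} for the sub-optimal branch; the term-by-term comparison via Lemma~\ref{lem:lem-delta-star}, the $\wedge$-inequality, and $\min_{A_r\setminus\{i\}}\geq\min_{[K]\setminus\{i\}}$ for the optimal branch). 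Your two ``cross'' cases are thus absent from the paper's proof of this lemma, and Lemma~\ref{lem:lem_dom} is not used there at all (the paper reserves that mechanism for Lemma~\ref{lem:lem-gap-fb} and the correctness proofs). The trade-off: the paper's max identity makes the proof shorter, but it silently rests on the same sign facts you prove explicitly — $i\in S_r$ forces $\widehat\delta^\star_{i,r}\geq 0\geq\widehat\Delta^\star_{i,r}$, while $i\notin S_r$ forces $\widehat\Delta^\star_{i,r}>0>\widehat\delta^\star_{i,r}$ — so your version is more self-contained, at the cost of length. One hair to split in your sub-optimal-but-empirically-optimal case: $i\in S_r$ only guarantees $\Mh(i,i^\star;r)\geq 0$ (absence of \emph{strict} empirical domination), whereas Lemma~\ref{lem:lem_dom} as stated assumes $\Mh(i,i^\star;r)>0$; in the boundary case $\Mh(i,i^\star;r)=0$ the same one-line argument gives the non-strict bound $\Delta_i\leq\gamma_{i,r}$, which is all you need since you conclude via $\widehat\Delta_{i,r}\geq 0\geq\Delta_i-\gamma_{i,r}$.
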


\begin{proof}
We first prove the result a sub-optimal arm $i$. From the proposition $\cP_r$, as $i\in A_r$ we have $i^\star \in A_r$ so $\Delta_i = \max_{j\in A_r} \m(i,j)$ and we recall that
\begin{equation}
\label{eq:eq-oo11}
\widehat\Delta_{i,r} := \max(\widehat{\Delta}_{i, r}^\star, \widehat{\delta}_{i,r}^\star).	
\end{equation}
Note that by reverse triangle we have for any arm $i\in A_r$ (sub-optimal or not)
\begin{eqnarray}
\left\lvert \left(\max_{j\in A_r} \mh(i, j; r)\right) -  \left(\max_{j\in A_r} \mh(i, j)\right)\right\lvert
 	&{\leq}& \max_{j\in A_r} \lvert \mh(i, j; r) - \m(i, j)\lvert, \\
		&{\leq}& \max_{j\in A_r} \left \| (\widehat\Theta_r - \Theta)^T(x_i - x_j) \right\|_\infty = \gamma_{i,r}. \label{eq:xc1}
\end{eqnarray}
where the last inequality follows from Lemma~\ref{lem:lem-ineq}. 
If $i$ a sub-optimal arm ($i\notin \cS^\star$) then as $\Delta_i = \Delta_i^\star$, it follows $$\widehat\Delta_{i,r} - \Delta_i \geq \widehat\Delta_{i, r}^\star - \Delta_i^\star$$ therefore 
\begin{eqnarray*}
	\widehat\Delta_{i,r} - \Delta_i &\geq& -\lvert \widehat\Delta_{i, r}^\star - \Delta_i^\star \lvert \\
	&=& -\lvert (\max_{j\in A_r} \mh(i, j; r)) -  (\max_{j\in A_r} \mh(i, j))\lvert\\
	&\geq& -\gamma_{i,r} \quad (\text{see } \eqref{eq:xc1}). 
\end{eqnarray*}
Now we assume $i$ is a Pareto-optimal arm ($i\in \cS^\star$) so that $$\Delta_i = \delta_i^\star.$$ 

Combining with \myeqref{eq:eq-oo11} yields  $$\widehat\Delta_{i,r} - \Delta_{i,r} \geq \widehat\delta_{i,r}^\star - \delta^\star_{i,r}, $$ where we recall that 
$$\widehat \delta_{i,r}^\star=\min_{j \in A_r \backslash\{i\}} [\Mh(i,j;r) \;\land\; (\Mh(j,i;r)_+ + (\widehat \Delta_{j, r}^\star)_+)]$$ and $$\delta^\star_i := \min_{j\in [K]\backslash\{i\}}[\M(i,j) \land (\M(j,i)_+ + (\Delta_{j}^\star)_+)].$$
As for any $x,y\in \bR$ we have $\lvert x^+ - y^+\lvert \leq \lvert x-y\lvert$, the following holds for any $i,j\in A_r$
\begin{eqnarray}
	\lvert \Mh(j,i;r)^+ - \M(j,i)^+\lvert &\leq& \lvert \Mh(j,i;r) - \Mh(j,i)\lvert\\
	&\leq& \gamma_{j,r} \label{eq:xc2}. 
\end{eqnarray}
From Lemma~\ref{lem:lem-delta-star} we have for any $j\in A_r$ \begin{equation}
	\label{eq:xc3}
	(\widehat{\Delta}_{j,r}^\star)_+ - (\Delta_j^\star)_+ \geq -\gamma_{j,r}.
\end{equation}
Combining \eqref{eq:xc2} and \eqref{eq:xc3} yields for any $j\in A_r$
\begin{eqnarray}
\label{eq:xc4}
\Mh(j,i;r)_+ + (\widehat \Delta_{j,r}^\star)_+ &\geq&  \M(j,i)_+ + (\Delta_{j}^\star)_+ -2\gamma_{j,r},
\end{eqnarray}
which in addition to $\Mh(j,i;r) \geq  \Mh(j,i)- \gamma_{j,r}$
yields 
$$ [\Mh(i,j;r) \land (\Mh(j,i;r)_+ + (\widehat \Delta_{j,r}^\star)_+)] \geq [\M(i,j) \land (\M(j,i)_+ + (\Delta_{j}^\star)_+)] - 2\gamma_{j,r}$$
for any arm $j\in A_r$. Thus taking the $\min$ over $i \in A_r$ yields 
 
\begin{eqnarray*}
	\widehat \delta_{i,r}^\star &=& \min_{j\in A_r\backslash\{i\}} [\M(i,j;r) \land (\M(j,i;r)_+ + (\widehat\Delta_{j, r}^\star)_+)]\\
	&\geq& \min_{j\in A_r\backslash\{i\}} [\M(i,j) \land (\M(j,i)_+ + (\Delta_{j}^\star)_+)] - 2\gamma_r, \\
	&\geq& \min_{j\in [K]\backslash\{i\}} [\M(i,j) \land (\M(j,i)_+ + (\Delta_{j}^\star)_+)]   - 2\gamma_{r},\\
	&=& \delta_i^\star - 2\gamma_{r}
\end{eqnarray*}
which concludes the proof of this lemma. 
\end{proof}

Building on this result, we  show that $\cP_\infty$ holds in the fixed-confidence and fixed-budget settings.

\subsection{Fixed-budget setting}
We recall the definition of the good event for any $\lambda>0$.
$$ \cE_{\text{fb}}^{r,\lambda} = \left\{ \forall\;  i,j \in A_r: \|(\widehat\Theta_r - \Theta)^\T (x_i-x_j)\|_\infty \leq \lambda \Delta_{n_{r+1}+1} \right\} $$ and $\cE_{\text{fb}}^\lambda:= \cap_{r=1}^{\lceil\log_2(h)\rceil} \cE_{\text{fb}}^{r,\lambda}$. 
We prove that proposition $\cP_\infty$ holds on the event $\cE_\text{fb}^\lambda$ for some any $\lambda \in (0, 1/5)$.

\begin{restatable}{lemma}{lemGapFb}
\label{lem:lem-gap-fb}
	The proposition holds $\cP_\infty$ on the event $\cE_{\mathrm{fb}}^{\lambda}$ for any $\lambda \in (0,1/5)$: at any round $r\in \{ 1, \dots, \lceil \log_2\dimfeat\rceil+1\}$ and for any arm $i\in A_r\cap(\cS^\star)^\complement$, $i^\star \in A_r$. 
	\end{restatable}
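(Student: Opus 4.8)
The plan is to prove $\cP_\infty$ by induction on the round $r$, the goal being that the elimination step never discards a dominator $i^\star$ while the sub-optimal arm $i$ it dominates stays active. The base case $r=1$ is immediate since $A_1=[K]$ contains every $i^\star$. For the inductive step I assume $\cP_r$ holds; then Lemma~\ref{lem:lem-gap-ineq} applies at round $r$ (and the same computation behind it, together with Lemma~\ref{lem:lem-delta-star}, yields matching \emph{upper} deviations), so on $\cE_{\mathrm{fb}}^{\lambda}$ every empirical gap deviates from its true value by at most $2\gamma_r\le 2\lambda\Delta_{n_{r+1}+1}$. It then suffices to show that for any sub-optimal $i\in A_r$ with dominator $i^\star\in A_r$ (present by $\cP_r$), if $i\in A_{r+1}$ then $i^\star\in A_{r+1}$.

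The structural ingredient is the inequality $\Delta_{i^\star}\le\Delta_i$. Indeed, $\Delta_i=\m(i,i^\star)>0$ forces $i\prec i^\star$, so $\M(i,i^\star)=-\Delta_i<0$; hence the $j=i$ term of $\delta_{i^\star}^\star=\min_{j\neq i^\star}[\M(i^\star,j)\land(\M(j,i^\star)_+ +(\Delta_j^\star)_+)]$ equals $\M(i^\star,i)\land(0+\Delta_i)=\Delta_i$, because $\M(i^\star,i)\ge\m(i,i^\star)=\Delta_i$, and taking the minimum over $j$ gives $\Delta_{i^\star}=\delta_{i^\star}^\star\le\Delta_i$. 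Thus the dominator carries the smaller true gap, exactly the direction that favours keeping it, since Algorithm~\ref{alg:gegefb} retains the arms with the \emph{smallest} empirical gaps.

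I would then split on the size of $\Delta_{i^\star}$ relative to the cut value $\Delta_{n_{r+1}+1}$. If $\Delta_i\ge\Delta_{i^\star}\ge\Delta_{n_{r+1}+1}$, I convert $\Delta_{i^\star}\le\Delta_i$ into the empirical comparison $\widehat\Delta_{i^\star,r}\le\widehat\Delta_{i,r}$: here $i^\star\in S_r$ and $i\notin S_r$ on the good event, so the $j=i$ term gives $\widehat\Delta_{i^\star,r}\le\Mh(i,i^\star;r)_+ +(\widehat\Delta_{i,r}^\star)_+$, while $\Mh(i,i^\star;r)\le -\Delta_i+\gamma_{i,r}\le 0$ and $\widehat\Delta_{i,r}=\widehat\Delta_{i,r}^\star\ge 0$ yield $\widehat\Delta_{i^\star,r}\le\widehat\Delta_{i,r}$; hence $i$ cannot survive while $i^\star$ is eliminated, the tie-break (keeping $S_r$, and $i^\star\in S_r$, $i\notin S_r$) settling equality. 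If instead $\Delta_{i^\star}$ is small, the two-sided bound gives $\widehat\Delta_{i^\star,r}\le\Delta_{i^\star}+2\lambda\Delta_{n_{r+1}+1}<(1-2\lambda)\Delta_{n_{r+1}+1}$, which lies strictly below the empirical gap $\ge(1-2\lambda)\Delta_{n_{r+1}+1}$ of every arm whose true gap is at least $\Delta_{n_{r+1}+1}$ (using $\lambda<1/5$); since at most $n_{r+1}$ arms have true gap below $\Delta_{n_{r+1}+1}$, $i^\star$ ranks among the $n_{r+1}$ smallest empirical gaps and is kept, regardless of $i$.

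The main obstacle is the boundary regime where $\Delta_{i^\star}$ sits close to $\Delta_{n_{r+1}+1}$: there neither the clean comparison $\widehat\Delta_{i^\star,r}\le\widehat\Delta_{i,r}$ nor the counting argument applies verbatim, and one must carefully track the empirical-Pareto-set membership of $i$ and $i^\star$ (i.e.\ whether $\widehat\Delta^\star$ or $\widehat\delta^\star$ defines each empirical gap, which also forces $\Delta_{i^\star}$ to be tiny whenever $i^\star\notin S_r$), control the additive slack of order $\lambda\Delta_{n_{r+1}+1}$ so that it cannot flip the relative order of $i$ and $i^\star$ across the threshold, and invoke the tie-breaking rule. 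This is precisely where the choice $\lambda<1/5$ and the indexing by $\Delta_{n_{r+1}+1}$ rather than $\Delta_{n_{r+1}}$ are used.
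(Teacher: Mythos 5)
Your base case, the structural fact $\Delta_{i^\star}\le\Delta_i$, and your Case A (when $\Delta_{i^\star}\ge\Delta_{n_{r+1}+1}$, so that on the good event $i\notin S_r$, $i^\star\in S_r$, and the $j=i$ term of the empirical min gives $\widehat\Delta_{i^\star,r}\le\widehat\Delta_{i,r}$) are all sound. The gap is in Case B, and it is genuine. Your claim that the computation behind Lemma~\ref{lem:lem-gap-ineq} ``yields matching upper deviations'' is false for Pareto-optimal arms: $\widehat\delta^\star_{i^\star,r}$ is a minimum over $A_r\setminus\{i^\star\}$ only, whereas $\delta^\star_{i^\star}$ is a minimum over $[K]\setminus\{i^\star\}$, and the arm achieving the true minimum may have been (legitimately) eliminated at an earlier round; in that case $\widehat\Delta_{i^\star,r}$ can exceed $\Delta_{i^\star}+2\gamma_r$ by an arbitrary amount. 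Lemma~\ref{lem:lem-delta-star} is two-sided only for the max-form quantity $\widehat\Delta^\star_{\cdot,r}$, where $\cP_r$ guarantees the maximizing arm $i^\star$ is still active; no active witness exists in general for the min-form. Since your Case B counting argument rests entirely on $\widehat\Delta_{i^\star,r}\le\Delta_{i^\star}+2\lambda\Delta_{n_{r+1}+1}$, it does not go through. On top of this the arithmetic is off: $\Delta_{i^\star}<\Delta_{n_{r+1}+1}$ only gives $\Delta_{i^\star}+2\lambda\Delta_{n_{r+1}+1}<(1+2\lambda)\Delta_{n_{r+1}+1}$, so you would need the stronger hypothesis $\Delta_{i^\star}<(1-4\lambda)\Delta_{n_{r+1}+1}$; and you explicitly leave the boundary regime unresolved, which is exactly where the lemma has content. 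The induction is therefore not closed.

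For comparison, the paper never upper-bounds $\widehat\Delta_{i^\star,r}$ by $\Delta_{i^\star}$; it argues by contradiction, using the \emph{surviving} arm itself as the witness. Suppose the optimal arm $i$ is discarded at round $r$ while a sub-optimal $j\in A_{r+1}$ with $j^\star=i$ survives (the sub-case where $i$ is discarded as empirically sub-optimal is ruled out separately, for $\lambda<1/3$). First, the tie-breaking rule forces $j$ not to be empirically dominated by $i$: otherwise $\widehat\Delta_{i,r}\le\Mh(j,i;r)_+ +(\widehat\Delta^\star_{j,r})_+=\widehat\Delta_{j,r}$ together with $\widehat\Delta_{i,r}\ge\widehat\Delta_{j,r}$ would give a tie, which the rule resolves by discarding the empirically sub-optimal $j$ first. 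Lemma~\ref{lem:lem_dom} then yields $\Delta_j\le\lambda\Delta_{n_{r+1}+1}$ on $\cE_{\mathrm{fb}}^{\lambda}$. Second, by pigeonhole there exists $k_r\in A_{r+1}\cup\{i\}$ with $\Delta_{k_r}\ge\Delta_{n_{r+1}+1}$, so discarding $i$ gives
$$(1-2\lambda)\Delta_{n_{r+1}+1}\;\le\;\widehat\Delta_{k_r,r}\;\le\;\widehat\delta^\star_{i,r}\;\le\;\Mh(j,i;r)_+ +(\widehat\Delta^\star_{j,r})_+\;\le\;\M(j,i)_+ +\Delta_j+2\lambda\Delta_{n_{r+1}+1},$$
where the last two inequalities are legitimate precisely because $j$ is active (the $j$-term of the min is available, and Lemma~\ref{lem:lem-delta-star} applies to the max-form $\widehat\Delta^\star_{j,r}$). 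Since $j\prec i$ implies $\M(j,i)_+=0$, this forces $\Delta_j\ge(1-4\lambda)\Delta_{n_{r+1}+1}$, contradicting $\Delta_j\le\lambda\Delta_{n_{r+1}+1}$ when $\lambda<1/5$. This is the device your proposal is missing: the upper-bound witness must be the surviving dominated arm, never the possibly-eliminated minimizer of the true gap $\delta_{i^\star}^\star$.
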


\begin{proof}
	We prove $\cP_\infty$ by induction on the round $r$. In the sequel we assume $\cE_\text{fb}^\lambda$ holds. We also assume $\cP_r$ is true until some round $r$. 
As $\cE_\text{fb}^\lambda$ holds, we have by application of Lemma~\ref{lem:lem-gap-ineq}: for any arm $i\in A_r$, 
\begin{equation}
\label{eq:xh1}
	\widehat\Delta_{i,r} - \Delta_i \geq \begin{cases}
		-2\lambda\Delta_{n_{r+1} +1} &\text{ if } i\in \cS^\star \\
		-\lambda\Delta_{n_{r+1} +1} &\text{ else.}
	\end{cases}
\end{equation}
We shall prove that if a Pareto-optimal arm $i$ is discarded at the end of round $r$ then there exists no arm sub-optimal $j\in A_{r+1}$ such that $j^\star =i$. 
Since $i$ is removed and $\lvert A_{r+1} \lvert = n_{r+1}$ there exists $k_r \in A_{r+1}\cup \{ i\}$ such that 
\begin{equation}
\label{eq:xh2}
	\Delta_{k_r} \geq \Delta_{n_{r+1}+1}.
\end{equation}
If $i$ is empirically sub-optimal then as it is discarded we have $$\widehat\Delta_{i,r} = \widehat\Delta_{i,r}^\star \geq \widehat\Delta_{k,r}$$
for any arm $k\in A_{r+1}$. So $\widehat\Delta^\star_{i,r} \geq \widehat\Delta_{k_r, r}$ thus using \eqref{eq:xh1} and \eqref{eq:xh2}  it comes that
\begin{eqnarray*}
	 \max_{q\in A_r \backslash\{i\}} \m(i,q) &\geq& \Delta_{n_{r+1} +1} - 3\lambda\Delta_{n_{r+1} +1}\\&=& (1-3\lambda)\Delta_{n_{r+1} +1}
\end{eqnarray*} 
and the latter inequality is not possible for $\lambda<1/3$ as the LHS of the inequality is negative as $i$ is a Pareto-optimal arm.

Next we assume that $i$ is empirically optimal.
We claim that $j$ is not dominated by $i$. To see this, first note that as $j\in A_{r+1}$ we have 
\begin{equation}
	\label{eq:xh3}
	\widehat\Delta_{i,r}  \geq \widehat\Delta_{j,r}
\end{equation}
so that as $i$ is empirically optimal, if $j$ was empirically dominated by $i$ we would have 
\begin{equation}
\label{eq:xh4}
\widehat\Delta_{i,r} \leq \Mh(j,i;r)_+ +(\widehat\Delta_{j,r}^\star)_+ = \widehat\Delta_{j,r}.	
\end{equation}
Combining \eqref{eq:xh3} and \eqref{eq:xh4} yield $\widehat\Delta_{i,r} = \widehat\Delta_{j,r} $, $i$ is empirically optimal and $j$ is empirically sub-optimal. However our breaking rule ensures that this case cannot occur. Therefore $j$ is not dominated by $i$. But, by assumption, $j$ is such that $j^\star =i$ and we have proved that $i$ does not empirically dominate $j$ so by Lemma~\ref{lem:lem_dom}
$$ \Delta_j \leq \|(\widehat\Theta_r - \Theta)^\T(x_j - x_i)\|_\infty$$
which on the event $\cE_\text{fb}$ yields
\begin{equation}
\label{eq:xh5}
	\Delta_j \leq \lambda\Delta_{n_{r+1} +1}.
\end{equation} 
On the other side, as $i$ is discarded as an empirically optimal arm we have 
$$\widehat\Delta_{i,r} = \widehat\delta_{i,r}^\star \geq \widehat\Delta_{k,r}$$
for any arm $k\in A_{r+1}$. Since $k_r\in A_{r+1}\cup\{i\}$ it comes $\widehat\delta^\star_{i,r} \geq \widehat\Delta_{k_r, r}$ thus using \eqref{eq:xh1} and \eqref{eq:xh2}  yields 

$$ \M(j,i)_+ + \Delta_j \geq \Delta_{n_{r+1}+1} -4\lambda\Delta_{n_{r+1}+1}$$
which further combined with \eqref{eq:xh5} yields 
$$ \M(j,i)_+ \geq (1-5\lambda)\Delta_{n_{r+1}+1}.$$
However, as $j^\star =i$ we have $\M(j,i)_+=0$ so the latter inequality is not possible as long as $\lambda<1/5$. 
Put together, we have proved proved that if $\cP_r$ holds then for any Pareto-optimal arm $i$ which is removed at the end of round $r$, there does not exist an arm $j\in A_{r+1}$ such that $j^\star =i$. So $\cP_{r+1}$ holds. Finally noting that $\cP_r$ trivially holds for $r=1$ we conclude that $\cP_\infty$ holds on the event $\cE_\text{fb}^\lambda$ for any $\lambda<1/5$. 
 \end{proof}
Combining this result with Lemma~\ref{lem:lem-gap-ineq} and assuming $\cE_{\text{fb}}^\lambda$ holds then yields at any round $r\in \{ 1, \dots, \lceil \log_2\dimfeat\rceil\}$ and for any arm $i\in A_r$: 
\begin{equation}
	\widehat{\Delta}_{i,r} - \Delta_i \geq  \begin{cases}
		-2\lambda \Delta_{n_{r+1}+1} & \text{ if } i\in \cS^\star \\
		-\lambda \Delta_{n_{r+1}+1} & \text{ else}, 
	\end{cases} 
\end{equation}
which proves Proposition~\ref{prop:propFbFc} in the fixed-budget setting.

\subsection{Fixed-confidence setting}
We recall below the good events we study in the fixed-confidence setting: 
$$\cE_{\text{fc}}^r = \left\{ \forall\; i,j \in A_r: \|(\widehat\Theta_r - \Theta)^\T (x_i-x_j)\|_\infty \leq \veps_r/2 \right\} $$
 and $\cE_{\text{fc}}:= \cap_{r=1}^\infty \cE_{\text{fc}}^r$. 
\begin{restatable}{lemma}{lemGap}
\label{lem:lem-gap}
	The proposition $\cP_\infty$ holds on the event $\cE_\text{fc}$: at any round $r$ for any arm $i\in A_r\cap(\cS^\star)^\complement$, $i^\star \in A_r$. \end{restatable}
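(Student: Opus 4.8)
The plan is to mirror the inductive structure of the fixed-budget proof (Lemma~\ref{lem:lem-gap-fb}) but to exploit the absolute accept/reject thresholds $\veps_r$ and $\veps_r/2$ of Algorithm~\ref{alg:gegeFc} in place of the rank-based selection rule. I argue by induction on $r$ on the event $\cE_\text{fc}$, on which $\gamma_r := \max_{i\in A_r}\gamma_{i,r} \leq \veps_r/2$ by the definition of $\cE_\text{fc}^r$ and Lemma~\ref{lem:lem-ineq}. The base case $\cP_1$ is trivial since $A_1 = [K]$. For the inductive step I assume $\cP_r$ holds and suppose, towards a contradiction, that $\cP_{r+1}$ fails: there is a sub-optimal arm $j \in A_{r+1}$ whose associated optimal arm $i := j^\star$ is not in $A_{r+1}$. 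Since $j \in A_{r+1} \subseteq A_r$ and $\cP_r$ holds, $i \in A_r$, so $i$ must have been discarded at the end of round $r$, i.e.\ $i \in B_{r+1}$ or $i \in D_{r+1}$.

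The rejection case ($i \in D_{r+1}$) is ruled out just as the empirically-sub-optimal case of Lemma~\ref{lem:lem-gap-fb}: since $i$ is truly Pareto optimal, $\Delta_i^\star \leq 0$, so Lemma~\ref{lem:lem-delta-star} gives $\widehat\Delta_{i,r} = \widehat\Delta_{i,r}^\star \leq \Delta_i^\star + \gamma_{i,r} \leq \veps_r/2$, contradicting the rejection threshold $\widehat\Delta_{i,r} \geq \veps_r/2$ (the strict contradiction coming from the concentration event of Lemma~\ref{lem:lem-concentr-design}). The crux is therefore the acceptance case $i \in B_{r+1}$, where $i \in S_r$ and $\widehat\Delta_{i,r} = \widehat\delta_{i,r}^\star \geq \veps_r$. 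Rather than comparing empirical gaps, I plug the feasible index $k=j$ into the minimum defining $\widehat\delta_{i,r}^\star$ (legitimate since $j \in A_r\setminus\{i\}$), which yields
$$\veps_r \leq \widehat\delta_{i,r}^\star \leq \Mh(j,i;r)_+ + (\widehat\Delta_{j,r}^\star)_+.$$
Because $j^\star = i$ forces $j \prec i$, we have $\M(j,i)_+ = 0$, so Lemma~\ref{lem:lem-ineq} together with $|x_+ - y_+| \leq |x-y|$ gives $\Mh(j,i;r)_+ \leq \gamma_{j,r} \leq \veps_r/2$. Combining the two bounds forces $(\widehat\Delta_{j,r}^\star)_+ \geq \veps_r/2 > 0$, hence $j$ is empirically sub-optimal ($j \notin S_r$) with $\widehat\Delta_{j,r} = \widehat\Delta_{j,r}^\star \geq \veps_r/2$. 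But then $j$ meets the rejection condition and is added to $D_{r+1}$, contradicting $j \in A_{r+1}$.

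This establishes $\cP_{r+1}$ and closes the induction, giving $\cP_\infty$ on $\cE_\text{fc}$; combined with the termination guarantee of Lemma~\ref{lem:lem-card-active}, the stated property holds at every realized round. Notably, and unlike the fixed-budget argument, this proof needs neither the tie-breaking rule nor Lemma~\ref{lem:lem_dom}, because the absolute thresholds let me directly show that discarding an optimal arm $i$ drags its dominated companion $j$ below the survival threshold. I expect the main subtlety to be the bookkeeping of the positive-part operators and the constants, namely verifying $\Mh(j,i;r)_+ \leq \veps_r/2$ and that the resulting $(\widehat\Delta_{j,r}^\star)_+ \geq \veps_r/2$ indeed triggers rejection, together with the boundary treatment of the case $i \in D_{r+1}$, which hinges on the concentration inequality holding strictly on $\cE_\text{fc}$.
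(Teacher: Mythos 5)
Your proof is correct and takes essentially the same route as the paper's: induction on $r$, the same case split on whether the discarded optimal arm $i$ was classified as sub-optimal ($i\in D_{r+1}$, ruled out because $\widehat\Delta^\star_{i,r}\leq \gamma_{i,r}\leq \veps_r/2$) or optimal ($i\in B_{r+1}$), and in the latter case the same two ingredients, namely $\M(j,i)_+=0$ for $j^\star=i$ and the deviation bound of Lemma~\ref{lem:lem-ineq}. Your only deviation is organizational: where the paper splits on $j$'s empirical status and contradicts $\M(j,i)_+=0$, you run the identical inequalities in contrapositive form to conclude that $j$ itself would have been rejected, which merges the paper's two subcases into one; note also that you inherit the paper's own boundary sloppiness at the thresholds (e.g.\ the non-strict case $\widehat\Delta_{i,r}=\veps_r/2$), which your appeal to the concentration event does not actually resolve since $\cE_{\text{fc}}$ is defined with non-strict inequalities.
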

\begin{proof}[Proof of Lemma~\ref{lem:lem-gap}]
We prove the proposition by induction on the round $r$. Note that the proposition $\cP_r$ trivially holds for $r=1$. 

Assume the property holds until the beginning of some round $r$. Let $i\in \cS^\star$ be an optimal arm and assume $i$ is discarded at the end of round $r$. We will prove that there exists no sub-optimal arm $j\in A_{r+1}$ such that $j^\star=i$. Recall that when $i$ is discarded, we have either $i\in S_r$ (empirically optimal) or $i\notin S_r$ (empirically sub-optimal). We analyze both cases below. If $i\notin S_r$ then it holds that 

	$$ \widehat\Delta_{i,r} \geq \veps_r/2,$$ then, as $i\notin S_r$ it follows that $\widehat\Delta_{i,r} = \widehat\Delta^\star_i := \max_{j\in A_r \backslash\{i\}}\mh(i,j;r)$, so
	$$ \max_{j\in A_r \backslash\{i\}}\mh(i,j;r) \geq \veps_r/2$$ which using Lemma~\ref{lem:lem-ineq} and assuming event $\cE_{\text{fc}}^r$ holds would yield $$\max_{j\in A_r\backslash\{i\}}\m(i,j)>0.$$ The latter inequality is not possible as $i\in \cS^\star$ is a Pareto-optimal arm. 
	Therefore, on  $\cE_{\text{fc}}^r$, when $i\in \cS^\star$ is discarded we have $i\in S_r$. 
	
Next, we analyze the case  $i\in S_r$: that is $i$ is discarded and classified as optimal.
In this case it follows from the definition of $\widehat\Delta_{i,r}$ that 
\begin{equation}
\label{eq:msw}
	\min_{j\in A_r\backslash\{i\}} [ \Mh(j, i; r)_+ +(\widehat\Delta_{j,r}^\star)_+] \geq \veps_r.
\end{equation} 
	 Let  $j\in A_{r+1}\cap (\cS^\star)^c$ be such that $j^\star = i$. If $j$ is empirically optimal then $(\widehat\Delta_{j,r}^\star)_+ = 0$ thus 
	$\Mh(j, i; r)_+ \geq \veps_r$. On the contrary, if $j$ is empirically sub-optimal then because it has not been removed at the end of round $r$ it holds that 
	$$\widehat\Delta_{j,r}^\star < \veps_r / 2, $$ which combined with \eqref{eq:msw} yields $\Mh(j, i; r)_+ > \veps_r/2$. Thus, in both cases we have $\Mh(j, i; r)_+ > \veps_r/2$ which using Lemma~\ref{lem:lem-ineq} and assuming event $\cE_{\text{fc}}^r$
	would imply that 
	$$\M(j, i)_+ >0,$$ which is impossible as, by assumption $j^\star=i$, so $j$ is dominated by $i$. 
	
	Put together with what precedes, on $\cE_\text{fc}$, if $\cP_r$ holds then $\cP_{r+1}$ holds.
	Since the property trivially holds for $r=1$ we have proved that the property $\cP_r$ holds at any round when $\cE_\text{fc}$ holds.
\end{proof}

Combining this result with Lemma~\ref{lem:lem-gap-ineq} proves that, on the event $\cE_{\text{fc}}$, for any round $r$ and for any arm $i\in A_r$   
\begin{equation}
\widehat{\Delta}_{i,r} - \Delta_i \geq  \begin{cases}
		-\veps_r & \text{ if } i\in \cS^\star \\
		-\veps_r/2 & \text{ else},
	\end{cases} 	
\end{equation}
which proves Proposition~\ref{prop:propFbFc} in the fixed-confidence setting.

\section{UPPER BOUND ON THE PROBABILITY OF ERROR}
\label{sec:analysis_fb}
In this section, we prove the theoretical guarantees of \gege{} in the fixed-budget setting. We prove Theorem~\ref{thm:thm-prob-gege} and some ingredient lemmas. 
\thmProbGege* 
\begin{proof}[Proof of Theorem~\ref{thm:thm-prob-gege}]
We first prove the correctness of \gegeFb{} on the event $\cE_\text{fb}^\lambda$ for some $\lambda$ small enough.
Let us assume $\cE_\text{fb}^\lambda$ holds which by Proposition~\ref{prop:propFbFc} implies that $\cP_\infty$ holds and at round $r$, we have for any arm $i\in A_r$
\begin{equation}
\label{eq:xo1}
	\widehat\Delta_{i,r} - \Delta_i \geq \begin{cases}
		-2\lambda\Delta_{n_{r+1} +1} &\text{ if } i\in \cS^\star \\
		-\lambda\Delta_{n_{r+1} +1} &\text{ else.}
	\end{cases}
\end{equation} 
We recall the definition of the good event for any $\lambda>0$, 
$$ \cE_{\text{fb}}^{r,\lambda} = \left\{ \forall\;  i,j \in A_r: \|(\widehat\Theta_r - \Theta)^\T (x_i-x_j)\|_\infty \leq \lambda \Delta_{n_{r+1}+1} \right\} $$ and $\cE_{\text{fb}}:= \cap_{r=1}^{\lceil\log_2(h)\rceil} \cE_{\text{fb}}^{r,\lambda}$. 
Applying Lemma~\ref{lem:lem-ineq} on this event then yields for all arms $i,j\in A_r$, 
\begin{align}
	&\lvert \Mh(i,j;r) - \M(i,j)\lvert \leq \lambda \Delta_{n_{r+1}+1} \; \text{and} \label{eq:zqw1}\\
	 &\lvert \mh(i,j;r) - \m(i,j)\lvert \leq \lambda \Delta_{n_{r+1}+1}. \label{eq:zqw2}
	\end{align}
	
Let $i$ be an arm discarded at the end of round $r$. Since $i$ is discarded and $\lvert A_{r+1} \lvert = n_{r+1}$ there exists $k_r \in A_{r+1}\cup \{ i\}$ such that 
\begin{equation}
\label{eq:xo2}
	\Delta_{k_r} \geq \Delta_{n_{r+1}+1}.
\end{equation}
If $i\notin S_r$ that is $i$ is empirically sub-optimal then 
$$\widehat\Delta_{i,r}=\widehat\Delta_{i,r}^\star \geq \widehat\Delta_{k_r, r},$$
then, recalling that 
$$ \widehat\Delta_{i,r}^\star := \max_{j\in A_r \backslash\{ i\}} \m(i,j; r) $$
and further applying \eqref{eq:xo1} to $k_r$ and using \eqref{eq:zqw2} yields 
$$ \max_{j\in A_r \backslash\{ i\}} \m(i,j) \geq (1-3\lambda)\Delta_{n_{r+1}+1}$$  
which for $\lambda<1/3$ implies that $\max_{j\in A_r} \m(i,j)>0$, that is there exists $j\in A_r$ such that $\vmu_i \prec \vmu_j$ so $i$ is a sub-optimal arm. 

Next, assume $i\in S_r$ (i.e, $i$ is empirically Pareto-optimal). 
In this case we have $\widehat\Delta_{i,r} = \widehat\delta^\star_{i,r} \geq \widehat\Delta_{k_r, r}$. We recall that 

$$\widehat\delta^\star_{i,r} = \min_{j \in A_r \backslash\{i\}} [\Mh(i,j;r) \land (\Mh(j,i;r)_+ + (\widehat \Delta_{i, r}^\star)_+) ].$$ 
 Applying  \eqref{eq:xo1} to $k_r$ and using \eqref{eq:zqw1}, it follows that 
$$ \min_{j\in A_r\backslash\{i\}} \M(i,j) \geq  (1-3\lambda)\Delta_{n_{r+1} +1}.$$ 
Thus, for $\lambda<1/3$, we have $\min_{j\in A_r\backslash\{i\}} \M(i,j)>0$. Therefore, no active arm at round $r$ dominates $i$ (based on their true means), which, together with proposition $\cP_\infty$, yields that $i$ is a Pareto-optimal arm (otherwise, we would have $i^\star \in A_r$ that dominates $i$). 

All put together, we have proved that for any $\lambda<1/5$ (we need $\lambda<1/5$ for $\cP_\infty$ to hold), Algorithm~\ref{alg:gegefb} does not make any error on the event $\cE_\text{fb}^\lambda$. It then follows that the probability of error of \gegeFb{} is at most 
\begin{equation}
\label{eq:eq-ff1}
	\inf_{\lambda\in (0, 1/5)} \bP\left((\cE_{\text{fb}}^\lambda)^c\right)
\end{equation}
Now we upper-bound \myeqref{eq:eq-ff1}, which will conclude the proof. 
Let $\lambda\in (0, 1/5)$ be fixed. We have by union bound 
\begin{eqnarray*}
	\bP\left((\cE_{\text{fb}}^\lambda)^c\right) &\leq& \sum_{r=1}^{\lceil \log_2\dimfeat\rceil } \bE\left[\bP\left((\cE_{\text{fb}}^{r,\lambda})^c \lvert A_r\right)\right]\\
	&\leq& \sum_{r=1}^{\lceil\log_2\dimfeat\rceil}  \bE\left[ \sum_{i\in A_r} \bP(\|(\widehat\Theta_r - \Theta)^\T x_i\|_\infty > \frac12\lambda\Delta_{n_{r+1}+1}\lvert A_r) \right] 
\end{eqnarray*}
Note that for $i$ fixed, we can use
Lemma~\ref{lem:lem-concentr-design} with $\kappa=1/3$ and the conditions of this theorem are satisfied as the budget per phase is $T/\log_2(\dimfeat) \geq 45\dimfeat$ (recall from the theorem that \gege{} is run with $T\geq 45\dimfeat\log_2(\dimfeat)$). Thus, applying this theorem yields
\begin{eqnarray*}
	\bP\left((\cE_{\text{fb}}^\lambda)^\complement\right)
	&\leq& 2\dimvec \sum_{r=1}^{\lceil \log_2\dimfeat\rceil} n_r \bE\left[\exp\left( -\frac{\lambda^2\Delta_{n_{r+1}+1}^2 T}{24\sigma^2h_r\log_2\dimfeat\rceil}\right) \right]\\
	&\leq& 2\dimvec \sum_{r=1}^{\lceil \log_2\dimfeat\rceil} n_r \exp\left(-\frac{\lambda^2T\Delta_{n_{r+1}+1}^2}{24\sigma^2 \min(\dimfeat, n_r)\lceil \log_2\dimfeat\rceil}\right), \quad \text{as $h_r \leq \min(n_r, h)$}. 
\end{eqnarray*} 
Then, note that 
\begin{eqnarray*}
\frac{\Delta_{n_{r+1}+1}^2}{\min(\dimfeat, n_r)} &=&
\frac{\Delta_{\lceil{\dimfeat}/{2^r}\rceil+1}^2}{\lceil\dimfeat/2^{r-1}\rceil} \\
&=&\frac{\Delta_{\lceil{\dimfeat}/{2^r}\rceil+1}^2}{\lceil\dimfeat/2^{r}\rceil+1} \frac{\lceil\dimfeat/2^{r}\rceil+1}{\lceil\dimfeat/2^{r-1}\rceil} \\
&\geq& \frac{\Delta_{\lceil{\dimfeat}/{2^r}\rceil+1}^2}{\lceil\dimfeat/2^{r}\rceil+1} \frac{\dimfeat/2^r + 1}{\dimfeat/2^{r-1} +1}\\
&\geq& \frac{\Delta_{\lceil{\dimfeat}/{2^r}\rceil+1}^2}{\lceil\dimfeat/2^{r}\rceil+1} \frac12, 
\end{eqnarray*}
which follows as $(x+1)/(2x+1) = (1/2) + (1/2)/(2x+1)\geq  \frac12$ for $x\geq 0$. 
Therefore, 
\begin{eqnarray*}
\frac{\Delta_{n_{r+1}+1}^2}{\min(\dimfeat, n_r)} &\geq& \frac12  \frac{\Delta_{\lceil{\dimfeat}/{2^r}\rceil+1}^2}{\lceil\dimfeat/2^{r}\rceil+1}\\	
&\geq& \frac{1}{2H_{2, \text{lin}}}. 
\end{eqnarray*}
Thus, 
\begin{eqnarray*}
\bP\left((\cE_{\text{fb}}^\lambda)^c\right) &\leq&	2\exp\left(-\frac{\lambda^2T}{48\sigma^2H_{2, \text{lin}}\lceil \log_2\dimfeat\rceil}+\log(\dimvec) \right) \sum_{r=1}^{\lceil \log_2\dimfeat\rceil} n_r  \\
&\leq& 2\left({K} + {\dimfeat} + {\lceil \log_2\dimfeat\rceil }\right) \exp\left(-\frac{\lambda^2T}{48\sigma^2H_{2, \text{lin}}\lceil \log_2\dimfeat\rceil}+\log(\dimvec) \right) 
\end{eqnarray*}
Finally, it follows that 
$$\inf_{\lambda\in (0, 1/5)} \bP\left((\cE_{\text{fb}}^\lambda)^c\right) \leq 2\left({K} + {\dimfeat} + {\lceil \log_2\dimfeat\rceil }\right) \exp\left(-\frac{T}{1200\sigma^2H_{2, \text{lin}}\lceil \log_2\dimfeat\rceil}+\log(\dimvec) \right),$$
which concludes the proof. 
\end{proof}

\section{UPPER BOUND ON THE SAMPLE COMPLEXITY}
\label{sec:analysis_fc}
We prove the theoretical guarantees in the fixed-confidence setting. We prove the correctness of Algorithm~\ref{alg:gegeFc} and we prove the sample complexity bound of Theorem~\ref{thm:thm-gege-fc-sc} and some key lemmas. 
 We first prove the correctness of the fixed-confidence variant of \gege{}. 
 
 \subsection{Proof of the correctness}

We need to prove that the final recommendation of Algorithm~\ref{alg:gegeFc} is correct: that is we should show that : at any round $r$, $B_r \subset \cS^\star$ and $D_r\subset (\cS^\star)^\complement$.
 
\begin{restatable}{lemma}{lemCorrectFc}
\label{lem:lem-correctness-fc}	
On the event  $\cE_\text{fc}$, Algorithm~\ref{alg:gegeFc} identifies the correct Pareto set.
\end{restatable}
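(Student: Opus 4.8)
The plan is to establish, by induction on the round $r$ and on the event $\cE_{\mathrm{fc}}$, the invariant that $B_r \subseteq \cS^\star$ and $D_r \subseteq (\cS^\star)^\complement$; correctness of the returned set $B_\tau \cup A_\tau$ then follows from a short termination argument. The base case $r=1$ is immediate since $B_1 = D_1 = \emptyset$. Throughout I rely on Proposition~\ref{prop:propFbFc}, which guarantees that $\cP_\infty$ holds on $\cE_{\mathrm{fc}}$, and on Lemma~\ref{lem:lem-ineq}, which on $\cE_{\mathrm{fc}}^r$ yields $|\Mh(i,j;r) - \M(i,j)| \leq \veps_r/2$ and $|\mh(i,j;r) - \m(i,j)| \leq \veps_r/2$ for all $i,j \in A_r$.

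For the inductive step I treat the two elimination rules separately. If an arm $i$ is added to $D_{r+1}$, then $i \in A_r \backslash S_r$ and $\widehat{\Delta}_{i,r} \geq \veps_r/2$; since $i$ is empirically sub-optimal, $\widehat{\Delta}_{i,r} = \widehat{\Delta}_{i,r}^\star = \max_{j \in A_r} \mh(i,j;r)$, so some $j \in A_r$ has $\mh(i,j;r) \geq \veps_r/2$, and Lemma~\ref{lem:lem-ineq} gives $\m(i,j) > 0$, i.e. $i \prec j$ and hence $i \notin \cS^\star$ (this is exactly the argument used in the proof of Lemma~\ref{lem:lem-gap}). If instead $i$ is added to $B_{r+1}$, then $i \in S_r$ and $\widehat{\Delta}_{i,r} = \widehat{\delta}_{i,r}^\star \geq \veps_r$; since $\widehat{\delta}_{i,r}^\star$ is a minimum over $j$ of terms each dominated by $\Mh(i,j;r)$, we get $\Mh(i,j;r) \geq \veps_r$ for every $j \in A_r \backslash \{i\}$, and Lemma~\ref{lem:lem-ineq} yields $\M(i,j) \geq \veps_r/2 > 0$, i.e. $i$ is dominated by no currently active arm. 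The delicate, PSI-specific point is to upgrade this local statement to genuine Pareto-optimality: if $i$ were sub-optimal, $\cP_\infty$ would force its dominator $i^\star \in \cS^\star$ (with $i^\star \neq i$) to still be active and to dominate $i$, contradicting $\M(i,i^\star) > 0$. Hence $i \in \cS^\star$, which closes the induction.

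It remains to certify the output. Since the arms are partitioned into $A_\tau$, $B_\tau$ and $D_\tau$, the invariant shows that every sub-optimal arm outside $A_\tau$ lies in $D_\tau$ and every Pareto-optimal arm outside $A_\tau$ lies in $B_\tau$. When $A_\tau = \emptyset$ this gives $B_\tau = \cS^\star$; when $A_\tau = \{a\}$ with $a \in \cS^\star$ it gives $B_\tau = \cS^\star \backslash \{a\}$; in both cases $B_\tau \cup A_\tau = \cS^\star$. The only other possibility, $A_\tau = \{a\}$ with $a$ sub-optimal, is excluded by $\cP_\infty$, which would require the distinct dominator $a^\star$ to remain active while $A_\tau = \{a\}$. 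I expect this final use of $\cP_\infty$, together with its use in the $B_{r+1}$ analysis, to be the crux: it is precisely the mechanism preventing a dominator from being discarded before the arm it dominates, and it is what makes the correctness argument for PSI genuinely different from the best-arm-identification setting.
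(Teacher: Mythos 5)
Your proof is correct and follows essentially the same route as the paper's: the same case analysis on whether a discarded arm lies in $S_r$ (using Lemma~\ref{lem:lem-ineq} to transfer the empirical thresholds $\veps_r/2$ and $\veps_r$ to the true quantities $\m$ and $\M$, and $\cP_\infty$ from Proposition~\ref{prop:propFbFc} to upgrade ``undominated among active arms'' to genuine Pareto optimality), together with the same use of $\cP_\infty$ to certify a possibly remaining single active arm at termination. Your explicit partition argument for the output $B_\tau \cup A_\tau$ is simply a more detailed write-up of the paper's closing remark.
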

\begin{proof}[Proof of Lemma~\ref{lem:lem-correctness-fc}]
In this part let $\tau$ denotes the stopping time of Algorithm~\ref{alg:gegeFc}. We assume $\cE_\text{fc}$ holds. 

Using Proposition~\ref{prop:propFbFc} : for any round $r\leq \tau$ for any (Pareto) sub-optimal $i\in A_r$ we have $i^\star \in A_r$. We then prove the correctness of the algorithm as follows.

Let $i$ be an arm that is removed at the end of some round $r$. Assume $i \in S_r$ then, as $i$ is discarded and empirically optimal we have $\widehat\Delta_{i,r} = \widehat\delta_i^\star \geq \veps_r.$ In particular, it holds that 
	$$ \min_{j\in A_r\backslash\{ i\}} \Mh(i,j;r) \geq \veps_r $$  which using 
	Lemma~\ref{lem:lem-ineq} on the event $\cE_\text{fc}$ yields $$\min_{j\in A_r\backslash\{ i\} }\M(i,j)>\eps_r/2>0,$$ that is no active arm dominates $i$. Put together with proposition $\cP_\infty$ (cf Lemma~\ref{lem:lem-gap}) the latter inequality yields $i\in \cS^\star$. Now assume we have {$i\notin S_r$}: $i$ is discarded and it is empirically sub-optimal. Then 
$$ \widehat\Delta_{i,r} = \max_{j\in A_r} \mh(i,j;r)\geq \veps_r/2, $$ so using Lemma~\ref{lem:lem-ineq} again on event $\cE_\text{fc}$ it follows that there exists $j\in A_r$ such that $\m(i,j)>0$: that is
$i\notin \cS^\star$.  Put together, we have proved that if $\cE_\text{fc}$ holds then for any arm $i$ discarded at some round $r$, $$i\in B_{r+1} \equi i \in \cS^\star.$$
Note that if $A_\tau$ is non-empty, then it contains a single arm and because $\cP_\infty$ holds, this arm is also Pareto optimal. \end{proof}

Thus, Algorithm~\ref{alg:gegeFc} is correct on $\cE_\text{fc}$. Before proving Theorem~\ref{thm:thm-gege-fc-sc} we need 
Lemma~\ref{lem:lem-card-active} to control the size of the active set $A_r$ in the fixed-confidence setting. 
 \subsection{Controlling the size of the active set}
 \label{sec:proof_lem_card}
  We prove the following result that controls the size of the active set. 
 \lemCardActive*
 \begin{proof}[Proof of Lemma~\ref{lem:lem-card-active}]
 	By Lemma~\ref{lem:lem-gap} we on the event $\cE_{\text{fc}}$: for any round $r$ and for any arm $i\in A_r$, $$ \widehat{\Delta}_{i,r} - \Delta_i \geq  \begin{cases}
		-\veps_r & \text{ if } i\in \cS^\star \\
		-\veps_r/2 & \text{ else.}
	\end{cases}$$ 
Then let $p\in [K]$ and let assume an arm $i\in \{p, \dots, K\}$ is still active at round $r= \lceil \log_2(1/\Delta_p)\rceil$. We have 
$\widehat{\Delta}_{i, r} \geq \Delta_{i} - \eps_r$ with $\eps_r = 1/2^{r+1}$ and $\Delta_i \geq \Delta_p$ which combined with  
$\widehat{\Delta}_{i,r} \geq \Delta_i - \eps_r $ yields 
\begin{equation}
\label{eq:eq-ww1}
 \widehat{\Delta}_{i,r} \geq \Delta_p - \veps_r. 
\end{equation}
As $r=\lceil \log_2(1/\Delta_p)\rceil$, it holds that $2\veps_r \leq \Delta_p$ so 
\myeqref{eq:eq-ww1} yields $\widehat{\Delta}_{i,r} \geq \veps_r$ thus $i$ will be discarded at the end of round $r$ that is any arm $i \in \{p, \dots, K\}$ will be discarded at the end of round $\lceil \log_2(1/\Delta_p)\rceil$.  
 \end{proof}
 
We now prove the main lemma on the sample complexity of \gege{} in the fixed-confidence setting. 
\subsection{Proof of Theorem~\ref{thm:thm-gege-fc-sc}}
\label{sec:full_proof}
We provide an upper bound on the sample complexity of the algorithm.  
\thmGegeFcSc*
 \begin{proof}
 We assume $\cE_\text{fc}$ holds. The correctness of Algorithm~\ref{alg:gegeFc} is then proven in Lemma~\ref{lem:lem-correctness-fc} and 
 Lemma~\ref{lem:lem-card-active} upper-bounds the number of rounds before termination. It remains to bound the sample complexity of the algorithm on $\cE_\text{fc}$ and compute $\bP(\cE_\text{fc})$ to conclude. 
 
By Lemma~\ref{lem:lem-card-active} an upper-bound on $\lvert A_r\lvert$ for some specific rounds. Interestingly we can bound the sample complexity between consecutive "checkpoints rounds''. 
  In what follows, we rewrite the complexity as a sum of number of pulls between these intermediate "checkpoints rounds''.  
Let us introduce the sequence $\{\alpha_s: s\geq 0\}$ defined 
as $\alpha_0=0$ and for any $s\geq 1$, $\alpha_s = \lceil \log_2(1/\Delta_{\lfloor \dimfeat/2^s\rfloor})\rceil$.
We assume {\it w.l.o.g} that the sequence is non-decreasing and that the gaps are bounded in $(0, 1)$ (otherwise, we could start the sequence $(\alpha)_s$ from arms with gap smaller than $1$). Simple calculation shows that $\alpha_{\lfloor\log_2(\dimfeat)\rfloor} = \lceil \log_2(1/\Delta_{1})\rceil$  and 
\begin{equation}
\label{eq:xcvq}
\{1,\dots, \lceil \log_2(1/\Delta_{1})\rceil\} = \bigcup_{s=1}^{\lfloor \log_2(\dimfeat)\rfloor} \llbracket 1+\alpha_{s-1}, \alpha_s \rrbracket.	
\end{equation}
Introducing $$ T_r  := \frac{32(1+3\veps_r)\sigma^2 h_r}{\eps_r^2} \log\left(\frac{\dimvec n_r}{\delta_r}\right),  
$$ where $n_r =\lvert A_r\rvert$, we have $t_r = \lceil T_r \rceil$, so $t_r \leq T_r + 1$. Using \eqref{eq:xcvq} then leads to 
\begin{eqnarray*} 
\sum_{r=1}^{\lceil \log_2(1/\Delta_{1})\rceil} T_r &=& \sum_{s=0}^{\lfloor \log_2(\dimfeat)\rfloor -1} \sum_{r=\alpha_{s}+1}^{\alpha_{s+1}} T_r 	\\
&=:&  \sum_{s=0}^{\lfloor \log_2(\dimfeat)\rfloor -1} N_s
\end{eqnarray*} where 
$N_s = \sum_{r=\alpha_{s}+1}^{\alpha_{s+1}} T_r $ is "the number of arms pulls'' between round $(\alpha_{s}+1)$ and $\alpha_{s+1}$. 

Next we bound the term $N_s$ for $s\in\{0, \dots, \lfloor \log_2(\dimfeat)\rfloor -1\}$. We recall that $\dimfeat_r \leq \min(\dimfeat, n_r)$ as, $n_r=\lvert A_r\lvert$ is the number of active arms at round $r$ and $h_r$ is the dimension of the space spanned by the features of the active arms. Using Lemma~\ref{lem:lem-card-active} on $\cE_\text{fc}$, it holds that for $r\geq \alpha_{s}+1$ 
\begin{equation}
	n_r \leq \begin{cases} 
	K & \text{ if } s = 0\\
	
	\lfloor \dimfeat / 2^s \rfloor -1 & \text{ if } s\geq1
	\end{cases}
\end{equation}
Therefore for $s\in\{0, \dots, \lfloor \log_2(\dimfeat)\rfloor -1\}$ and for any $r\geq \alpha_s +1$, we simply have $\min(h, n_r) \leq \lfloor \dimfeat / 2^s \rfloor$, so $h_r \leq \lfloor \dimfeat / 2^s \rfloor$ and even $h_r \leq \lfloor \dimfeat / 2^s \rfloor - 1$ if $s>0$. In particular, it holds that $$h_r \leq 2 \lfloor \dimfeat / 2^{s+1} \rfloor \quad \text{for } r\geq \alpha_s +1.$$ It then follows that 
\begin{eqnarray}
	\widetilde N_s = {N_s/(32(1+3\veps_1)\sigma^2)}{} &=& \sum_{r=\alpha_{s}+1}^{\alpha_{s+1}} T_r/(32(1+3\veps_1)\sigma^2) \\
	&\leq&  2\lfloor \dimfeat / 2^{s+1} \rfloor\log\left(\frac{K\dimvec}{\delta_{\alpha_{s +1}}}\right)\sum_{r=\alpha_{s}+1}^{\alpha_{s+1}} \frac{1}{\eps_r^2} \\
	&=& 8 \lfloor \dimfeat / 2^{s+1} \rfloor \log\left( \frac{K\dimvec}{\delta_{\alpha_{s+1}}}\right)\sum_{r=\alpha_{s}+1}^{\alpha_{s+1}} 4^r\\
	&\leq&  8 \lfloor \dimfeat / 2^{s+1} \rfloor \log\left( \frac{K\dimvec}{\delta_{\alpha_{s}+1}}\right)\sum_{r=1}^{\alpha_{s+1}} 4^r\\
	&=& \frac{32 \lfloor \dimfeat / 2^{s+1} \rfloor}{3} \log\left( \frac{K\dimvec}{\delta_{\alpha_{s+1}}}\right)
\left({4^{\alpha_{s+1}}} - 1\right)\label{eq:eq-ws1}
\end{eqnarray}
then further using that 
$$ \alpha_s \geq \begin{cases}
	\log_2(1/\Delta_{\lfloor \dimfeat/2^s\rfloor}) & \text{ if } s\geq 1 \\ 0 &\text{ if } s=0 
\end{cases} $$
yields 
$$  {4^{\alpha_{s+1}}} \leq \frac{1}{\Delta_{\lfloor \dimfeat/2^{s+1}\rfloor}^2}$$
which combined with \eqref{eq:eq-ws1} yields \begin{equation}
	\widetilde N_s \leq \frac{32\sigma^2\lfloor \dimfeat / 2^{s+1} \rfloor}{3\Delta_{\lfloor \dimfeat/2^{s+1}\rfloor}^2}  \log\left( \frac{K\dimvec}{\delta_{\alpha_{s+1}}}\right).
\end{equation}
We can now bound $N = \sum_s N_s$ in terms of the sub-optimality gaps: 
\begin{eqnarray}
	\widetilde N &=&\sum_{s=0}^{\lfloor \log_2\dimfeat\rfloor -1} \widetilde N_s\\
	&\leq&  \frac{32\sigma^2}{3} 
	\sum_{s=0}^{\lfloor \log_2\dimfeat\rfloor -1} \frac{ \lfloor \dimfeat / 2^{s+1} \rfloor}{\Delta_{\lfloor \dimfeat / 2^{s+1}\rfloor}^2} \log\left( \frac{\pi^2K\dimvec \lceil\log_2(1/\Delta_{\lfloor \dimfeat / 2^{s+1} \rfloor})\rceil^2}{6\delta}\right) \label{eq:eq-ws3}, \\
	&=& \frac{32\sigma^2}{3} \sum_{s=1}^{\lfloor \log_2\dimfeat\rfloor} \frac{ \lfloor \dimfeat / 2^{s} \rfloor}{\Delta_{\lfloor \dimfeat / 2^{s}\rfloor}^2} \log\left( \frac{\pi^2K\dimvec \lceil\log_2(1/\Delta_{\lfloor \dimfeat / 2^{s} \rfloor})\rceil^2}{6\delta}\right) \label{eq:eq-ws3}
\end{eqnarray}
Then, recalling that by assumption $\Delta_1\leq,\dots,\leq \Delta_K$, one can observe that the mapping from $[K]$ to $(0,\infty)$,  
$$ u \mapsto \frac{ 1}{\Delta_{u}^2} \log\left(\frac{\pi^2K\dimvec \lceil\log_2(1/\Delta_{u})\rceil^2}{6\delta}\right) $$ is non-increasing and it is easy to check that 
$$ \lfloor \dimfeat/2^s \rfloor - \lceil \lfloor \dimfeat/2^s \rfloor/2 \rceil + 1 \geq \frac12 \lfloor \dimfeat/2^s \rfloor $$
 therefore 
 \begin{equation}
 \label{eq:eq-ws2}
 \frac{ \lfloor \dimfeat / 2^s \rfloor}{\Delta_{\lfloor \dimfeat / 2^s\rfloor}^2} \log\left( \frac{\pi^2K\dimvec \lceil\log_2(1/\Delta_{\lfloor \dimfeat / 2^s \rfloor})\rceil^2}{6\delta}\right) \leq 2\sum_{u=\lceil \lfloor \dimfeat/2^s \rfloor/2 \rceil}^{\lfloor \dimfeat/2^s \rfloor}\frac{ 1}{\Delta_{u}^2} \log\left(\frac{\pi^2K\dimvec \lceil\log_2(1/\Delta_{u})\rceil^2}{6\delta}\right) 
 \end{equation}
 Combining \eqref{eq:eq-ws3} and \eqref{eq:eq-ws2} yields 
 \begin{eqnarray} 
 \label{eq:eq-ws4}   
 	N\leq \frac{64\sigma^2}{3} 	\sum_{s=1}^{\lfloor \log_2\dimfeat \rfloor} \sum_{u=\lceil \lfloor \dimfeat/2^s \rfloor/2 \rfloor}^{\lfloor \dimfeat/2^s \rfloor}\frac{1}{\Delta_{u}^2} \log\left(\frac{\pi^2K\dimvec\lceil\log_2(1/\Delta_{u})\rceil^2}{6\delta}\right)
 \end{eqnarray}
 Now let us introduce for any $s$, the set of integers  
 $\cI_s = \llbracket\lceil \lfloor \dimfeat/2^s \rfloor/2 \rceil, \lfloor \dimfeat/2^s \rfloor \rrbracket$. We have $$ \bigcup_{s=1}^{{\lfloor \log_2\dimfeat\rfloor}}\cI_s \subset \{2,\dots, \dimfeat\}.$$   
 We show that for any $p,q\in \{1, \dots, \lfloor \log_2(\dimfeat) \rfloor \}$ if $\lvert p - q \lvert \geq 2$ then $\cI_p \cap \cI_q = \emptyset$. Assuming $p\leq q$ we claim that 
 \begin{equation}
 \label{eq:eq-ws5}
 \lfloor \dimfeat / 2^{p+2} \rfloor < \lceil \lfloor \dimfeat/2^p\rfloor/2\rceil	 \end{equation}
 Assume otherwise, then $
 	\lfloor \dimfeat / 2^{p+2} \rfloor \geq \lceil \lfloor \dimfeat/2^p\rfloor/2\rceil\geq \lfloor \dimfeat/2^p\rfloor/2$ so 
 	$$ \dimfeat /2^{p+1} \geq \lfloor \dimfeat/2^p\rfloor$$
which is impossible since for any $p\in \{0, \dots, \lfloor \log_2(\dimfeat) \rfloor -1 \}$, $\dimfeat /2^p \geq 1$. Therefore we have proved \eqref{eq:eq-ws5} and for any $q\geq p+2$ it holds that 
$$\lfloor \dimfeat / 2^{q} \rfloor  \leq \lfloor \dimfeat / 2^{p+2} \rfloor < \lceil \lfloor \dimfeat/2^p\rfloor/2\rceil$$ thus $\cI_q \cap \cI_p = \emptyset$ and for any $i\in \{2,\dots, \dimfeat\}$, $i$ belongs to no more than $2$ of the subsets $\cI_1,\dots \cI_{\lfloor\log_2h\rfloor}$, thus we have 
\begin{eqnarray}
	\widetilde N&\leq& \frac{64}{3}\sigma^2 	\sum_{s=1}^{\lfloor \log_2\dimfeat \rfloor} \sum_{u=\lceil \lfloor \dimfeat/2^s \rfloor/2 \rfloor}^{\lfloor \dimfeat/2^s \rfloor}\frac{1}{\Delta_{u}^2} \log\left(\frac{\pi^2K\dimvec\lceil\log_2(1/\Delta_{u})\rceil^2}{6\delta}\right)\\
	&\leq& \frac{128}{3}\sigma^2 \sum_{i=2}^\dimfeat \frac{1}{\Delta_{i}^2} \log\left(\frac{\pi^2K\dimvec\lceil\log_2(1/\Delta_{i})\rceil^2}{6\delta}\right)\\
	&\leq& \frac{128}{3} \sigma^2 \sum_{i=2}^\dimfeat \frac{1}{\Delta_{i}^2} \log\left(\frac{\pi^2K\dimvec\log_2(2/\Delta_{i})^2}{6\delta}\right)\\
	&\leq& \frac{256}{3} \sigma^2  \sum_{i=2}^\dimfeat \frac{1}{\Delta_{i}^2} \log\left(\frac{K\dimvec}{\delta} \log_2\left(\frac2{\Delta_{i}}\right)\right)
\end{eqnarray}  
  Then, from Lemma~\ref{lem:lem-gap} it holds that with probability at least $1-\delta$ the sample complexity $N^\delta$ of \gegeFc{} is upper-bounded as 
  $$ \log_2(2/\Delta_1) +  O\lp \sum_{i=2}^\dimfeat \frac{ \sigma^2 }{\Delta_{i}^2} \log\left(\frac{K\dimvec}{\delta} \log_2\left(\frac1{\Delta_{i}}\right)\right) \rp,$$
 \end{proof} where $O(\cdot)$ hides universal multiplicative constant.  
 Therefore, we have shown the sample complexity bound and the correctness on $\cE_\text{fc}$. Thus, proving that $\bP(\cE_\text{fc}) \geq 1-\delta$ will conclude the proof. 
 \newcommand{\gefc}{\ensuremath{\cE_\text{fc}}}
 \newcommand{\gefcr}{\ensuremath{\cE_\text{fc}^r}}
 
\subsection{Probability of the good event $\cE_\text{fc}$.}
 At round $r$, 
 \begin{eqnarray*}
 	\bP\left( (\gefcr)^c  \mid  A_r\right) &\leq& \sum_{i \in A_r} \bP\left(\|(\widehat\Theta_r - \Theta)^\T x_i\|_\infty > \veps_r/4\lvert A_r\right)
 \end{eqnarray*}
 Then, note that at round $r$, Algorithm~\ref{alg:gegeFc} calls \optDesign{} with precision $\veps_r/2$ and budget $t_r$ and  by design we have $t_r \geq 20 h_r /\veps_r^2$, so using Lemma~\ref{lem:lem-concentr-design}, it follows 
 
 \begin{eqnarray*}
 	\bP\left( (\gefcr)^c  \mid  A_r\right) &\leq& 2d  \exp\left( - \frac{t_r \veps_r^2}{32(1+3\veps_r)\sigma^2 \dimfeat_r}\right) \\ 
 	&\leq& \delta_r /\lvert A_r \lvert 
 \end{eqnarray*}
 which follows by plugging in the value of $t_r$. 
 Therefore, by union bound over $A_r$ and $r$ it holds that 
 $\bP\left( \gefc \right) \geq 1-\sum_{r\geq }\delta_r \geq 1-\delta$. This conludes the proof of Theorem~\ref{thm:thm-gege-fc-sc}.  
\subsection{\gegeFc{} for $\varepsilon$-PSI}
\label{ssec:varpsi}
Note that sub-optimal arms with small gaps are close to the Pareto Set. Indeed, given $\varepsilon >0$ and a sub-optimal arm $i$ such that $\Delta_i<\varepsilon$. From the definition of the gap (cf Equation~\ref{eq:def-gap}, Section~\ref{sec:setting}), it follows that for any arm $j \neq i$, $m(i,j)<\varepsilon$, which by definition rewrites as $\min_{c \in \{1,\dots, d\}}[\mu_j(c) - \mu_i(c)]< \varepsilon$. Thus, for any arm $j\neq i$, there exists an objective $c_j$ such that $ \mu_i(c_j) + \varepsilon > \mu_j(c_j)$ , that is $\mu_i + (\varepsilon,\dots, \varepsilon)$ is not dominated by any of the arms $\{ \mu_j: j \neq i \}$.

\citet{auer_pareto_2016} proposed the concept of $\varepsilon$-PSI, which allows practitioners to specify a parameter $\varepsilon \geqslant 0$ to define an indifference zone around the Pareto Set. Be given an instance $\mu := (\mu_1, \dots, \mu_K)$ and its Pareto Set $S^\star$, a set $S_{\varepsilon} \subset [K]$ is an $\varepsilon$-Pareto Set if : $S^\star \subset S_{\varepsilon}$, and for any $i \in S_{\varepsilon}$, $\mu_i + (\varepsilon,\dots, \varepsilon)$ is not dominated by any of $\{ \mu_j: j\neq i\}$. Intuitively, such a set contains all the Pareto-optimal arms but may also include some arms that are close to be Pareto-optimal. 

We prove below that with the modification suggested in the main, the recommended set will be an $\varepsilon$-Pareto Set and, with tiny modifications, our proof extends to cover this case and we could prove that Theorem~\ref{thm:thm-gege-fc-sc} holds with each gap $\Delta_i$ replaced with $\Delta_{i, \varepsilon}:= \max(\Delta_i, \varepsilon/2)$; that is the sample complexity is now upper bounded by $$ \log_2(2/\Delta_{1,\veps}) +  O \left( \sum_{i=1}^h \frac{\sigma^2}{\Delta_{i, \varepsilon}^2} \log\lp \frac{Kd}{\delta}\log\lp\frac{1}{\Delta_{i, \varepsilon}}\rp\rp \right).$$

\paragraph{Sketch of proof} At its core, Lemma~\ref{lem:lem-card-active} proves that (with high probability) any arm $i$ such that $\Delta_i \geqslant 2 \varepsilon_r$ will not be active after round $r$ (cf proof in Section~\ref{sec:proof_lem_card}).  Assume the algorithm stops at some round $\tau$ because $\varepsilon_\tau \leqslant \varepsilon/4$. Then, from the previous observation, if an arm $i$ is still alive at round $\tau$ then $\Delta_i < 2 \varepsilon_{\tau -1}$ and since $ \varepsilon_\tau = \varepsilon_{\tau-1} / 2$ (cf $\varepsilon_r = 1/(2\cdot 2^r)$ in Algorithm~\ref{alg:gegeFc}), $\Delta_i < 4\varepsilon_{\tau }$. By assumption on the stopping, $\varepsilon_\tau \leqslant \varepsilon/4$, so any arm still alive at stopping satisfies $\Delta_i < 4 \varepsilon_{\tau } \leqslant \varepsilon$. Coupling this with the proof of Lemma~\ref{lem:lem-correctness-fc} and the discussion above will prove that the recommended set is an $\varepsilon$-Pareto Set. The case where the stopping occurs because $|A_\tau|\leqslant 1$ is already covered by Lemma~\ref{lem:lem-correctness-fc}.

To prove the sample complexity bound, we note that because of this relaxed stopping condition, Lemma~\ref{lem:lem-card-active} simply holds by replacing every gap $\Delta_i$ with $\Delta_{i, \varepsilon}:= \max(\Delta_i, \varepsilon/2)$, then propagating this change into the proof of Theorem~\ref{thm:thm-gege-fc-sc} (Appendix~\ref{sec:full_proof}) yields the claimed result.
\section{CONCENTRATION RESULTS}
\label{sec:key_lemmas}
In this section we prove some concentration inequalities that are essential to the proofs of others results.  

\lemIneq*
\begin{proof}
We have 
	\begin{eqnarray*}
		\lvert \Mh(i,j;r) - \M(i,j) \lvert  &=&\left \lvert \max_{c}\left[\muh_{i, r}(c) - \muh_{j,r}(c) \right] - \max_c \left[\mu_i(c) - \mu_j(c)\right]\right\lvert, \\
		&\overset{(i)}{\leq}& \max_c \left\lvert (\muh_{i, r}(c) - \muh_{j,r}(c)) - (\mu_i(c) - \mu_j(c))
		 \right\lvert, \\
		 &=& \left\| (\vmuh_{i, r} - \vmuh_{j,r}) - (\vmu_i - \vmu_j)\right\|_\infty, \\
		&{=}& \|(\widehat\Theta_r - \Theta)^\T (x_i - x_j)\|_\infty.
	\end{eqnarray*}
where $(i)$ follows from reverse triangle inequality. The second part of the lemma is a direct consequence of the relation $\M(i,j) = -\m(i,j)$ as well as $\Mh(i,j;r)= -\mh(i,j;r)$ that holds for any pair of arms $i,j$.
\end{proof}

\lemDom*
\begin{proof}
Since $i^\star$ does not empirically dominate $i$ it holds that $\Mh(i,i^\star; r)>0$ so 
$\Mh(i,i^\star; r) - \M(i,i^\star) > -\M(i,i^\star)$. Then noting that $$-\M(i,i^\star) = \m(i,i^\star) = \Delta_i$$
yields $\Mh(i,i^\star; r) - \M(i,i^\star) > \Delta_i$. Therefore 
\begin{eqnarray*}
	\Delta_i = \Delta_i^\star &<& \Mh(i,i^\star; r) - \M(i,i^\star) \\
	&\leq& \| (\widehat \Theta_r - \Theta)^\T (x_i - x_{i^\star})\|_\infty,
\end{eqnarray*}
where the last inequality is a consequence of Lemma~\ref{lem:lem-ineq}. 
\end{proof} 

We recall the following lemma from the main paper. 
\lemPseudoInv*
We actually prove a stronger statement that is stated below.
\begin{lemma}
\label{lem:lem-psinv-gene}
	If the noise $\eta_t$ has covariance $\Sigma \in \bR^{d\times d}$ and $a_1,\dots, a_{N}$ are deterministically. Assuming the set of active arms is $x_1, \dots, x_K$ then for any $x \in \text{span}(\{x_1,\dots, x_K\})$, 
  $\emph{Cov}(\widehat{\Theta}_N ^\T x) =   \| x\|_{ V_N^{\dagger}}^2\Sigma.$ 
\end{lemma}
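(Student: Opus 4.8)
The plan is to substitute the matrix form of the model into the estimator, isolate the stochastic part, and reduce the covariance computation to a single quadratic-form identity for $V_N^\dagger$.

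First I would write $Y_N = X_N\Theta + H_N$ and plug into $\widehat{\Theta}_N = V_N^\dagger X_N^\T Y_N$. Since $V_N = X_N^\T X_N$, this splits as
$$\widehat{\Theta}_N = V_N^\dagger V_N \Theta + V_N^\dagger X_N^\T H_N,$$
where the first term is deterministic and the second carries all the randomness. Taking the transpose and applying it to $x$, and using that $V_N$ and $V_N^\dagger$ are symmetric (the latter because $B^\T V_N B$ is symmetric positive definite), I obtain
$$\widehat{\Theta}_N^\T x = \Theta^\T V_N V_N^\dagger x + H_N^\T X_N V_N^\dagger x.$$
Only the second summand is random, so $\mathrm{Cov}(\widehat{\Theta}_N^\T x)$ equals the covariance of $H_N^\T z$, where $z := X_N V_N^\dagger x \in \bR^N$ is a deterministic vector (as $a_1,\dots,a_N$ are fixed).

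Next I would expand $H_N^\T z = \sum_{t=1}^N z_t\,\eta_t$, using that the columns of $H_N^\T$ are exactly the noise vectors $\eta_t$. Since the $\eta_t$ are independent with common covariance $\Sigma$, linearity of covariance gives
$$\mathrm{Cov}(H_N^\T z) = \sum_{t=1}^N z_t^2\, \mathrm{Cov}(\eta_t) = \|z\|_2^2\, \Sigma.$$
It then remains to identify $\|z\|_2^2$ with $\|x\|_{V_N^\dagger}^2$. Writing $\|z\|_2^2 = x^\T V_N^\dagger X_N^\T X_N V_N^\dagger x = x^\T V_N^\dagger V_N V_N^\dagger x$, the computation reduces to the pseudo-inverse identity $V_N^\dagger V_N V_N^\dagger = V_N^\dagger$. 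This follows directly from $V_N^\dagger = B(B^\T V_N B)^{-1} B^\T$: inserting $V_N$ between two copies produces the factor $B^\T V_N B\,(B^\T V_N B)^{-1} = I_m$, collapsing the product back to $V_N^\dagger$. Hence $\|z\|_2^2 = x^\T V_N^\dagger x = \|x\|_{V_N^\dagger}^2$, which yields the claim, and $\mathrm{Cov}(\widehat{\Theta}_N^\T x) = \|x\|_{V_N^\dagger}^2\,\Sigma$.

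I expect no serious obstacle; the argument is essentially mechanical once the signal/noise split is made. The only points requiring care are the bookkeeping of transposes in the matrix-valued model (ensuring the $d\times d$ covariance is taken in the correct space) and justifying that $B^\T V_N B$ is invertible, so that $V_N^\dagger$ is well defined and symmetric — this is where the spanning property of the sampled features, guaranteed by \texttt{ROUND} applied to the G-optimal design, enters. I note that the restriction $x\in\mathrm{span}(\{x_1,\dots,x_K\})$ is not actually needed for the covariance identity itself, since the quadratic-form computation holds for any $x$; it is rather the natural domain on which the estimator is \emph{unbiased}, because there $V_N V_N^\dagger x = x$ forces the deterministic term to equal $\Theta^\T x$.
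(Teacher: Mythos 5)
Your proof is correct and follows essentially the same route as the paper's: split the estimator into a deterministic term plus the noise term $V_N^{\dagger}X_N^\T H_N$, reduce to the covariance of $H_N^\T z$ with $z = X_N V_N^{\dagger} x$ using independence of the $\eta_t$, and collapse $x^\T V_N^{\dagger} V_N V_N^{\dagger} x$ to $\|x\|_{V_N^{\dagger}}^2$. Your only departure is a valid shortcut: you invoke shift-invariance of the covariance instead of the paper's projection identity $BB^\T x = x$ on the span, which also justifies your closing remark that the spanning hypothesis is needed only for unbiasedness of $\widehat{\Theta}_N^\T x$, not for the covariance formula itself.
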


\begin{proof}
In what follows we let $E:= \text{span}(\{x_1, \dots, x_K\})$ be the space spanned the vectors $x_1, \dots x_K$. As the columns of $B$ forms an orthogonal basis of $E$, $P = B(B^\T B)^{-1}B^\T = B B^\T$  is a matrix that project onto $E$. Therefore, for any $x\in E$ 
$$ \Theta^\T x = \Theta^\T BB^\T x = (B^\T \Theta)^\T B^\T x.$$ Thus recalling that $X_N = (x_{a_1}, \dots, x_{a_N})^\T$ it holds that $X_N \Theta = (X_N B)(B^\T \Theta)$. Rewriting the solution of the least squares leads to 
	\begin{eqnarray*}
		\widehat{\Theta}_N &=& B(B^\T V_N B)^{-1}B^\T X_N^\T (X_N\Theta + H_N) \\
		&=& B(B^\T V_N B)^{-1}B^\T X_N^\T (X_N\Theta) + V_N^{\dagger} X_N^\T H_N \\
		&=& B(B^\T V_N B)^{-1}B^\T X_N^\T (X_N B)(B^\T \Theta) + V_N^{\dagger} X_N^\T H_N \\
		&=& B(B^\T V_N B)^{-1}(B^\T V_N  B) (B^\T \Theta) + V_N^{\dagger} X_N^\T H_N\\
		&=& BB^\T \Theta + V_N^{\dagger} X_N^\T H_N
	\end{eqnarray*}

then for any $x\in E$, as $BB^\T x = x$ it follows that 
\begin{eqnarray*}
	\widehat{\Theta}_N^\T x &=&  \Theta^\T BB^\T x +  (V_N^{\dagger} X_N^\T H_N)^\T x\\
	&=& \Theta^\T x + (V_N^{\dagger} X_N^\T H_N)^\T x
\end{eqnarray*}
thus we have for $x\in E$, 
\begin{equation}
	(\widehat{\Theta}_N - \Theta)^\T x = (V_N^{\dagger} X_N^\T H_N)^\T x. 
\end{equation}
Computing the covariance follows as 
\begin{eqnarray}
	\text{Cov}((\widehat{\Theta}_N - \Theta)^\T x) &=& \bE\left[ (V_N^{\dagger} X_N^\T H_N)^\T x x^\T(V_N^{\dagger} X_N^\T H_N) \right] \\
	&=& \bE\left[ H_N^\T  \tilde x \tilde x ^\T H_N\right]
	\label{eq:eq-mm0}
\end{eqnarray}
where $\tilde x := X_N V_N^{\dagger} x $. Letting $h_i^\T$ denotes the $i$-th row of $H_N^\T$, for each $i,j$ 
\begin{eqnarray}
	\bE[h_i^\T \tilde x \tilde x^\T h_j] &=& \tilde x^\T \bE[h_i h_j^\T] x\\
	&=& \tilde x^\T \sigma_{i,j} \tilde x \label{eq:eq-mm1}
\end{eqnarray}
where $\Sigma := (\sigma_{r,s})_{r,s\leq \dimvec}$ and the last line follows since for any $t,t'\leq N$ by independence of successive observations we have 
$ \bE[h_i(t) h_j(t')] = \delta^\text{kro}_{t, t'}\sigma_{i,j}$. 
Combining \myeqref{eq:eq-mm1} with \myeqref{eq:eq-mm0}
 yields 
 $$\text{Cov}((\widehat{\Theta}_N - \Theta)^\T x)  = \Sigma \tilde x^\T \tilde x $$ 
 then further noting that  
 \begin{eqnarray*}
 	\tilde x^\T \tilde x &=& x^\T V_N^{\dagger} X_N^\T X_N V_N^{\dagger} x\\
 	&=& x^\T B(B^\T V_N B)^{-1}B^\T V_N B(B^\T V_N B)^{-1}B^\T x \\
 	&=& x^\T V_N^{\dagger}  x = \|x\|^2_{V_N^{\dagger}}
 \end{eqnarray*} 
 concludes the proof. 
 \end{proof}
 The following results is proven in Appendix~\ref{sec:compte_round}.
 \lemConcentrDesign*
\section{LOWER BOUNDS}
\label{sec:lower_bounds}

Before proving the lower bounds, we illustrate the PSI and the quantities $\M, \m$ on Fig.5

\definecolor{yellow2}{rgb}{0.8862745 , 0.84313726, 0.}
\definecolor{red2}{rgb}{0.9607843 , 0.3137255 , 0.07450981}
\definecolor{green3}{rgb}{0.        , 0.43529412, 0.52156866}
\definecolor{blue}{rgb}{0.08627451211214066, 0.125490203499794, 0.23529411852359772}
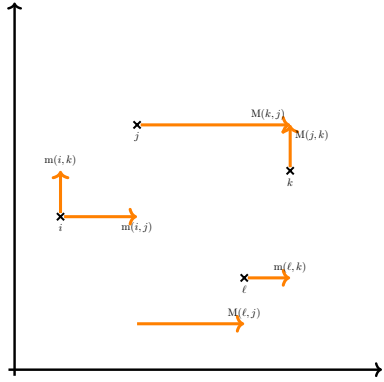
\begin{figure}[ht]
\label{fig:illusGaps}
\centering
\resizebox{0.3\linewidth}{!}{
\begin{tikzpicture}
   \draw [line width=0.8mm, draw=black, ->] (-6.2,-6) -- (6,-6) node[right, black] {};
  \draw [line width=0.8mm, draw=black, ->] (-6,-6.2) -- (-6,6) node[above, black] {};
  
\node[cross out, draw, outer sep =0mm,line width=0.65mm, label=below:{$i$}] (i) at (-4.5, -1) {};
\node[cross out,  draw, outer sep =0mm,line width=0.65mm, label=below:{$j$}] (j) at (-2.0, 2) {};
\node[cross out,  draw, outer sep =0mm,line width=0.65mm, label=below:{$k$}] (k) at (3., 0.5) {};
\node[cross out,  draw, outer sep =0mm,line width=0.65mm, label=below:{$\ell$}] (l) at (1.5, -3){};

\draw[->,line width=1mm, draw = orange](i) -- (-2, -1) node[below] {$\m(i,j)$};
\draw[->,line width=1mm, draw = orange](i) -- (-4.5, 0.5) node[above] {$\m(i,k)$};
\draw[->,line width=1mm, draw = orange](l) -- (3, -3) node[above] {$\m(\ell,k)$};
\draw[->,line width=1mm, draw = orange](k) -- (3, 2) node[anchor=north west] {$\M(j,k)$};
\draw[->,line width=1mm, draw = orange](j) -- (3, 2) node[anchor=south east] {$\M(k,j)$};
\draw[->,line width=1mm, draw = orange](-2.0, -4.5) -- (1.5, -4.5) node[above] {$\M(\ell, j)$};
\end{tikzpicture}}
\captionof{figure}{PSI gaps and distances}
\label{fig:lbd-illusxx}	
\end{figure}

We note that, in this instance $\Delta_i = \m(i, j)$ and by increasing $i$ by $\Delta_i$ on both $x$ and $y$ axes it will become non-dominated. 

We also have $\Delta_\ell = \m(\ell, j)$. As $\ell$ is only dominated by $j$, if is it translated by $\m(\ell, j)$ on the $x$-axis it will become Pareto optimal. 

For Pareto-optimal arms $k,j$, $\delta^+_{k} = \delta^+_{j} = \M(j, k)$. As $k$ dominates both $i$ and $\ell$ its margin to sub-optimal arms is $\delta_k^- = \min(\Delta_i, \Delta_\ell)$ and we have 
$\delta_j^- = \min(\M(\ell, j) + \Delta_\ell, \Delta_i)$. 

Observe that for both $j, k$, $\Delta_j = \Delta_k = \M(j, k)$. If $k$ is translated by $\M(j, k)$ on the $y$-axis it will dominate $j$. Similarly, if $j$ is translated by $-\M(j, k)$ on the $y$-axis, it will be dominated by $k$.

We now prove minimax lower bounds in both fixed-confidence and fixed-budget settings. We recall the lower-bound below for un-structured PSI in the fixed confidence setting. \begin{theorem}[Theorem~17 of \cite{auer_pareto_2016}]
\label{thm:thm-lbd-auer}
	For any set of operating points $\vmu_i \in [1/4, 3/4]^\dimvec$, $i=1,\dots,K$, there exist distributions $(\cD_i)_{1\leq i\leq K}$ such that with probability at least $1-\delta$, any $\delta$-correct algorithm for PSI requires at least 
	$$ \Omega\left( \sum_{i=1}^K \frac{1}{\widetilde{\Delta}_i^2} \log(\delta^{-1})\right)$$
	samples to identify the Pareto set. Where for any sub-optimal arm $\widetilde{\Delta}_i = \Delta_i$ and for an optimal arm $\widetilde{\Delta}_i = \delta_i^+$. 
\end{theorem}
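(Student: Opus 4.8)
The plan is to establish this minimax lower bound by the standard change-of-measure (transportation) argument for fixed-confidence pure exploration, specialized to the Pareto-set answer and to the PSI gaps $\widetilde\Delta_i$. Fix the operating points $\vmu_i\in[1/4,3/4]^\dimvec$ and, exploiting the freedom ``there exist distributions $(\cD_i)$'', equip each arm with a product of per-coordinate $\sigma$-sub-Gaussian marginals (e.g.\ Gaussian, or a suitable Bernoulli product for which the box $[1/4,3/4]^\dimvec$ keeps every relative entropy within a universal constant of the Gaussian one), so that replacing the mean vector of arm $i$ by $\vmu_i+v$ costs $\KL(\cD_i\|\cD_i')=\|v\|_2^2/(2\sigma^2)$. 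Write $\nu$ for the instance so obtained. The backbone is: for any alternative $\nu'$ with a different Pareto set, and any $\delta$-correct algorithm with stopping time $\tau$,
\[
\sum_{a=1}^{K}\bE_{\nu}[N_a(\tau)]\,\KL(\cD_a\|\cD_a')\;\geq\;\kl(\delta,1-\delta)\;\geq\;\log\frac{1}{2.4\delta},
\]
where $N_a(\tau)$ counts the pulls of arm $a$. I would use alternatives that differ from $\nu$ in a \emph{single} arm, so that the left-hand sum collapses to one term and yields a per-arm lower bound.

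The core step is, for each arm $i$, to exhibit a single-arm alternative $\nu^{(i)}$ that perturbs only $\vmu_i$ by an amount of order $\widetilde\Delta_i$ and flips the membership of $i$ in the Pareto set. If $i\notin\cS^\star$, then $\widetilde\Delta_i=\Delta_i^\star=\max_{j}\m(i,j)$, and I would raise $\vmu_i$ past the threshold at which no arm dominates it: by the very definition of $\Delta_i^\star$, increasing $\vmu_i$ by $\Delta_i^\star+\eta$ makes $\m(i,j)<0$ for every $j$, so $i$ enters the Pareto set and the answer changes. If $i\in\cS^\star$, then $\widetilde\Delta_i=\delta_i^+$, the distance to the nearest \emph{other} optimal arm; letting $j^\star$ realize this minimum, I would lower $\vmu_i$ by $\delta_i^+ +\eta$ so that $\vmu_i\preceq\vmu_{j^\star}$, after which $i$ is dominated by $j^\star$ and leaves the Pareto set. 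In both cases the perturbation is of magnitude $\widetilde\Delta_i$, so $\KL(\cD_i\|\cD_i')\le C\,\widetilde\Delta_i^2/\sigma^2$ once the shift is suitably localized (see the obstacle below).

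Feeding either construction into the transportation inequality gives, for every $i$, the per-arm bound $\bE_{\nu}[N_i(\tau)]\geq \tfrac{\sigma^2}{C\,\widetilde\Delta_i^2}\log\tfrac{1}{2.4\delta}$. Summing over $i\in[K]$ and using $\tau=\sum_i N_i(\tau)$ yields $\bE_{\nu}[\tau]\geq\Omega\!\big(\sum_{i=1}^{K}\widetilde\Delta_i^{-2}\log(\delta^{-1})\big)$. The stated high-probability form then follows from the usual upgrade of such bounds: applying the two-instance likelihood-ratio (Wald-type) argument on the intersection of the correctness events of $\nu$ and of $\nu^{(i)}$, each of probability at least $1-\delta$, forces $N_i(\tau)\gtrsim\widetilde\Delta_i^{-2}\log(\delta^{-1})$ on a high-probability event, and summing these per-arm events gives the claim.

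The main obstacle is the localization in the middle step. A naive way to make a sub-optimal $i$ non-dominated is to shift $\vmu_i$ uniformly by $\Delta_i^\star$ across all $\dimvec$ coordinates, but that shift has Euclidean norm $\sqrt{\dimvec}\,\Delta_i^\star$, injecting a spurious factor $\dimvec$ into the KL and weakening the bound. To match the stated $\dimvec$-free rate one must confine the perturbation to a controlled number of coordinates (ideally the single coordinate realizing the binding $\m$/$\M$ constraint) while still changing the Pareto set; this is exactly where the freedom in choosing $(\cD_i)$ and the confinement to $[1/4,3/4]^\dimvec$ are used. The optimal-arm case is the subtler one: lowering $i$ below a \emph{specific} competitor $j^\star$ must not inadvertently alter the status of a third arm, and it is precisely the ``min over competitors'' in the definition of $\delta_i^+$, together with the range restriction, that guarantees a clean, localized, membership-flipping alternative of norm of order $\widetilde\Delta_i$.
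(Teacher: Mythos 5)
First, a point of reference: the paper does not prove this statement at all --- it is imported verbatim as Theorem~17 of \cite{auer_pareto_2016} and used as a black box in the proof of Theorem~\ref{thm:thm-lbd-fc}. So your attempt can only be compared with the original argument of Auer et al., whose skeleton you have correctly reproduced: single-arm alternatives $\nu^{(i)}$ that flip the Pareto membership of arm $i$ by a shift of order $\widetilde\Delta_i$, fed into the transportation inequality, then a per-arm high-probability upgrade.

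The genuine gap is in your resolution of the obstacle you yourself identify. The claim that the perturbation can be confined ``to the single coordinate realizing the binding $\m$/$\M$ constraint'' is false: to make a sub-optimal arm $i$ non-dominated, the shifted mean must beat \emph{every} arm currently dominating it in \emph{some} coordinate, and different dominators may bind in different coordinates. Take $\dimvec=2$, $\vmu_i=(0.3,0.3)$, $\vmu_{j_1}=(0.7,0.35)$, $\vmu_{j_2}=(0.35,0.7)$: then $\Delta_i=0.05$, yet any single-coordinate up-shift must have magnitude at least $0.4=8\Delta_i$ (the ratio can be made arbitrarily large), since raising coordinate $2$ alone leaves $i$ dominated by $j_2$ and vice versa. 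The only shift of per-coordinate magnitude $O(\Delta_i)$ that works is the essentially uniform one, $+(\Delta_i+\eta)\mathbf{1}$, whose squared Euclidean norm is $\Theta(\dimvec\,\Delta_i^2)$; symmetrically, in the optimal-arm case the down-shift must touch every coordinate in which $i$ beats $j^\star$. With the \emph{product} sub-Gaussian marginals you fix in your first paragraph, for which $\KL=\|v\|_2^2/(2\sigma^2)$, the per-arm bound therefore degrades to $\bE_\nu[N_i(\tau)]\gtrsim \sigma^2/(\dimvec\,\widetilde\Delta_i^2)\log(1/\delta)$, a factor $\dimvec$ short of the stated rate. The correct device --- and the actual content of the clause ``there exist distributions $(\cD_i)$'' --- is not to localize the shift but to make the coordinates of each $\cD_i$ perfectly correlated (the marginal sub-Gaussianity assumption permits this), e.g.\ observations of the form $\vmu_i + Z\,\mathbf{1}$ with a single scalar $\sigma$-sub-Gaussian (or two-point, as in Auer et al.) variable $Z$: the membership-flipping uniform shift $t\,\mathbf{1}$ then moves the law along its one-dimensional support and costs $\KL = t^2/(2\sigma^2)$ independently of $\dimvec$. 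Two minor remarks: your worry about inadvertently altering third arms in the optimal-arm case is unnecessary, since the transportation argument only requires $\cS^\star(\nu^{(i)})\neq\cS^\star(\nu)$, and flipping the membership of $i$ itself already guarantees this whatever happens to other arms; and the final ``summing the per-arm high-probability events'' step is looser than you suggest, as a union bound over the $K$ arms yields confidence $1-O(K\delta)$ rather than $1-\delta$, so the regime or constants under which this is absorbed into the $\Omega(\cdot)$ must be made explicit.
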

In particular, there exist instances where $\Delta_i = \delta_i^+$ for any Pareto-optimal arm $i$. Thus, this result shows that $H_1$ is a good proxy to measure the complexity of PSI in the fixed-confidence setting.

The proof of such results often rely on the celebrated change of distribution technique (see e.g \cite{kaufmann_complexity_2014}) which given the instance $\nu:= (\nu_1, \dots, \nu_K)$ shifts the mean of $\nu_i$ for an arm $i$ while keeping the others fixed constant. However, in linear PSI, the arms' means are correlated through $\Theta$. So, in general, Theorem~\ref{thm:thm-lbd-auer} does not directly apply to linear PSI. We recall below our lower bound for linear PSI in the fixed-confidence setting. 

\thmLbdFc*
\begin{proof}[Proof of Theorem~\ref{thm:thm-lbd-fc}]
The idea of the proof is to transform an unstructured bandit instance into a linear PSI instance. Let $\nu$ be a bandit instance with $K\geq 2$ arms and dimension $\dimvec\geq 1$ and with means $\vmu_1, \dots, \vmu_K \in [0, 1]^\dimvec$. Let $e_1,\dots e_\dimfeat$ denote the canonical basis of $\bR^\dimfeat$. We define a linear PSI instance $\nu_\text{lin}$ with features 
$$ 	x_i =  \begin{cases}
e_i & \text{ if } i\leq \dimfeat \\ \boldsymbol{0} & \text{ else.}
\end{cases}$$
We assume that the learner knows that $\vmu_i \in [0, 1]^\dimvec$ for any arm $i$. We claim that with this information an "efficient'' algorithm for PSI should not pull arms from $\{\dimfeat+1, \dots, K\}$. To see this, first note that these arms will be sub-optimal so $\cS^\star \subset [\dimfeat]$. Moreover, even if an arm $i \in \{\dimfeat+1,\dots, K\}$ dominates another arm $j\in \{1, \dots, \dimfeat\}$, as $j$ is not Pareto-optimal there exits another arm $j^\star \in \cS^\star \subset \{1,\dots, \dimfeat\}$ which dominates $j$ with a larger margin, so is "cheaper'' to pull. Therefore the complexity of $\nu_\text{lin}$ reduces to the complexity of a linear bandit $\tilde{\nu}_\text{lin}$ with only $\dimfeat$ arms. As the features in $x_1,\dots, x_\dimfeat$ forms the canonical $\bR^\dimfeat$ basis, $\tilde{\nu}_\text{lin}$ reduces to an un-structured bandit instance with (un-correlated) means $\tilde{\vmu}_i = \Theta^\T x_i$, $i=1,\dots, \dimfeat$. Therefore, by choosing $\vmu_1, \dots, \vmu_\dimfeat \in [1/4, 3/4]^\dimvec$, we can apply Theorem~\ref{thm:thm-lbd-auer} to $\tilde{\nu}_\text{lin}$.
\end{proof}
 
The result proven above holds for a class of instances $\cB(K, d, h)$ with the covariates defined as above and with matrix coefficients in $[1/4, 3/4]^d$.  

For the fixed-budget setting, \cite{kone2023bandit} proved a lower bound for a class of instances. We recall their result below after introducing some notation. 

Their lower bound applies to the class of instances $\cB$ defined as follows. 
$\cB$ contains the instances such that each sub-optimal arm $i$ is only dominated by a Pareto-optimal arm denoted by $i^\star$ and that for each optimal arm $j$ there exists a unique sub-optimal arm which is dominated by $j$, denoted by $\underline j$. Moreover, for any instance in $\cB$ the authors require its Pareto-optimal arms not to be close to the sub-optimal arms they don't dominate: for any sub-optimal arm $i$ and Pareto-optimal arm $j$ such that $\vmu_i\nprec \vmu_j$, 
$$ \M(i, j)\geq 3\max(\Delta_i, \Delta_{\underline j}).$$

Let $\nu:= (\nu_1, \dots, \nu_K)$ be an unstructured instance whose means belongs to $\cB$ and with isotropic multi-variate normal arms $\nu_i \sim \cN(\vmu_i, \sigma^2 I)$. For every $i\in [K]$, define the alternative instance $ \nu^{(i)} := (\nu_1, \dots \nu_i^{(i)}, \dots, \nu_K)$ in which {\it only} the mean of arm $i$ is shifted:
\begin{equation}
	\vmu_i^{(i)} := \begin{cases}
		\vmu_i - 2\Delta_i
		\tilde e_{d_{\underline i}} & \text{ if } i \in \cS^\star(\nu) ,\\
		\vmu_i + 2\Delta_i \tilde e_{d_i}&\text{ else},
	\end{cases}
\end{equation}
where $\tilde e_1,\dots, \tilde e_\dimvec$ denotes the canonical basis of $\bR^\dimvec$ and for any arm $i$, $d_i :=\argmin_{c\in [d]} [\mu_{i^\star}(c) - \mu_i(c)]$. Defining $\nu^{(0)}:=\nu$, the theorem below holds.
\begin{theorem}[Theorem~5 of \cite{kone2023bandit}]
\label{thm:thm-lbd-kone}
Let $\nu=(\nu_1, \dots, \nu_K)$ be an instance in $\cB$ with means $\vmu:=(\vmu_1 \dots \vmu_K)^\T$ and $\nu_i \sim \cN(\vmu_i, \sigma^2I)$. For any algorithm $\cA$, there exists $i\in \{0, \dots, K\}$ such that $ H(\nu^{(i)}) \leq  H(\nu)$ and the probability of error $\cA$ on $\nu^{(i)}$ is at least  
$$  \frac14 \exp\left( - \frac{2T}{\sigma^2 H(\nu^{(i)})}\right).$$ 
\end{theorem}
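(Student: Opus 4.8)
The plan is to run a change-of-measure argument over the $K+1$ instances $\nu^{(0)} = \nu, \nu^{(1)}, \dots, \nu^{(K)}$, in the spirit of the classical fixed-budget lower bounds for best-arm identification, adapted to the Pareto setting. Write $p_i := \bP_{\nu^{(i)}}(\widehat S_T \neq \cS^\star(\nu^{(i)}))$ for the error of $\cA$ on $\nu^{(i)}$ and let $N_i := \bE_\nu[T_i(T)]$ be the expected number of pulls of arm $i$ under the central instance, so that $\sum_{i=1}^K N_i = T$.

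First I would establish the two structural facts asserted in the statement: for every $i\in[K]$ the single-coordinate shift both changes the Pareto set, $\cS^\star(\nu^{(i)}) \neq \cS^\star(\nu)$, and leaves the alternative no harder, $H(\nu^{(i)}) \leq H(\nu)$. For a sub-optimal arm $i$, the choice $d_i = \argmin_{c}[\mu_{i^\star}(c) - \mu_i(c)]$ gives $\Delta_i = \mu_{i^\star}(d_i) - \mu_i(d_i)$, so after adding $2\Delta_i\,\tilde e_{d_i}$ the arm strictly exceeds $i^\star$ in coordinate $d_i$ and is no longer dominated by $i^\star$. Because membership in $\cB$ forces $i$ to be dominated only by $i^\star$ and, for every Pareto-optimal $j$ that does not dominate $i$, to satisfy $\M(i,j) \geq 3\Delta_i$, the shifted arm remains dominated by no arm and becomes Pareto-optimal; symmetrically, lowering a Pareto-optimal arm by $2\Delta_i$ along $\tilde e_{d_{\underline i}}$ turns it into an arm dominated by the (unique) arm $\underline i$ it previously dominated. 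In each case exactly one arm switches status, so the Pareto set changes. The separation $\M(i,j) \geq 3\max(\Delta_i, \Delta_{\underline j})$ is exactly what I would use to verify that the displaced arm's new gap, together with the perturbed gaps of its neighbours, remains at least $\Delta_i$, so that no inverse-square term in $H$ increases and $H(\nu^{(i)}) \leq H(\nu)$.

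Next I would carry out the information-theoretic core. Since $\nu$ and $\nu^{(i)}$ differ only in the law of arm $i$, and those laws are $\cN(\cdot, \sigma^2 I)$ Gaussians separated by $2\Delta_i$ in a single coordinate, the standard per-arm KL decomposition for bandits gives
$$ \KL(\bP_\nu, \bP_{\nu^{(i)}}) = N_i \cdot \KL\big(\cN(\vmu_i, \sigma^2 I), \cN(\vmu_i^{(i)}, \sigma^2 I)\big) = \frac{2 N_i \Delta_i^2}{\sigma^2}. $$
Applying the Bretagnolle--Huber inequality to the event $E := \{\widehat S_T = \cS^\star(\nu)\}$ --- whose complement is the error event under $\nu$, and which is contained in the error event under $\nu^{(i)}$ since $\cS^\star(\nu^{(i)}) \neq \cS^\star(\nu)$ --- gives, for every $i \in [K]$,
$$ p_0 + p_i \;\geq\; \bP_\nu(E^c) + \bP_{\nu^{(i)}}(E) \;\geq\; \tfrac12 \exp\big(-\KL(\bP_\nu, \bP_{\nu^{(i)}})\big) = \tfrac12 \exp\Big(-\tfrac{2 N_i \Delta_i^2}{\sigma^2}\Big). $$

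Finally I would pick the index that minimises the exponent. Weighting by $\Delta_i^{-2}$ and using $\sum_{i=1}^K N_i = T$ together with $\sum_{i=1}^K \Delta_i^{-2} = H(\nu)$ shows that the $\Delta_i^{-2}$-weighted average of $N_i\Delta_i^2$ equals $T/H(\nu)$, so some index $i^\star$ satisfies $N_{i^\star}\Delta_{i^\star}^2 \leq T/H(\nu)$. Substituting into the previous display yields $p_0 + p_{i^\star} \geq \tfrac12 \exp(-2T/(\sigma^2 H(\nu)))$, hence $\max(p_0, p_{i^\star}) \geq \tfrac14 \exp(-2T/(\sigma^2 H(\nu)))$. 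Taking $i$ to be whichever of $0$ and $i^\star$ attains this maximum gives an instance whose error is at least $\tfrac14 \exp(-2T/(\sigma^2 H(\nu)))$; since $H(\nu^{(0)}) = H(\nu)$ and $H(\nu^{(i^\star)}) \leq H(\nu)$ from the first step, this quantity is itself at least $\tfrac14 \exp(-2T/(\sigma^2 H(\nu^{(i)})))$, which is the claimed bound. The hard part will be the structural step: verifying from the defining constraints of $\cB$ alone that a shift of size exactly $2\Delta_i$ flips the status of arm $i$ while keeping every sub-optimality gap at least as large as in $\nu$; the KL decomposition, the Bretagnolle--Huber step, and the $\Delta_i^{-2}$-weighted pigeonhole are the routine fixed-budget machinery.
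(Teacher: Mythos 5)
First, a point of reference: the paper does not prove this statement --- it is quoted verbatim as Theorem~5 of \cite{kone2023bandit} and used as an external ingredient in the proof of Theorem~\ref{thm:thm-lbd-fb} --- so your attempt can only be judged against the cited argument and on its own merits. Your information-theoretic core is correct and is the standard fixed-budget machinery: the per-arm divergence decomposition with $N_i = \bE_\nu[T_i(T)]$ and per-pull KL $(2\Delta_i)^2/(2\sigma^2) = 2\Delta_i^2/\sigma^2$ for a single-coordinate Gaussian shift, Bretagnolle--Huber applied to $E = \{\widehat S_T = \cS^\star(\nu)\}$ (valid precisely because $\cS^\star(\nu^{(i)}) \neq \cS^\star(\nu)$), and the $\Delta_i^{-2}$-weighted pigeonhole giving an index with $N_{i^\star}\Delta_{i^\star}^2 \leq T/H(\nu)$. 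This yields $\max(p_0, p_{i^\star}) \geq \tfrac14 \exp(-2T/(\sigma^2 H(\nu)))$, which, as you note, implies the displayed bound since $H(\nu^{(i)}) \leq H(\nu)$; this stronger form is also what the paper actually exploits when invoking the theorem in Appendix~\ref{sec:lower_bounds}.

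There is, however, a genuine gap in your structural step for Pareto-optimal arms. Your claim that lowering an optimal arm $i$ by $2\Delta_i$ along $\tilde e_{d_{\underline{i}}}$ ``turns it into an arm dominated by $\underline{i}$'' is impossible as stated: since $i$ dominates $\underline{i}$, we have $\mu_i(c) \geq \mu_{\underline{i}}(c)$ for every $c$, so decreasing a \emph{single} coordinate can never make $i$ dominated by $\underline{i}$ unless $\mu_i(c) = \mu_{\underline{i}}(c)$ for all $c \neq d_{\underline{i}}$. The mechanism that actually changes the Pareto set must be that $\underline{i}$ --- dominated by $i$ alone, by membership in $\cB$ --- ceases to be dominated and becomes Pareto-optimal; this requires $\mu_i(d_{\underline{i}}) - 2\Delta_i < \mu_{\underline{i}}(d_{\underline{i}})$, i.e.\ $2\Delta_i > \Delta_{\underline{i}}$. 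But from \eqref{eq:def-gap}, the $j = \underline{i}$ term in $\delta_i^\star$ equals $\M(i,\underline{i}) \land (\M(\underline{i},i)_+ + \Delta_{\underline{i}}) = \M(i,\underline{i}) \land \Delta_{\underline{i}}$, so one only knows $\Delta_i \leq \Delta_{\underline{i}}$, and the conditions on $\cB$ as recalled in Appendix~\ref{sec:lower_bounds} do not rule out $\Delta_i \leq \Delta_{\underline{i}}/2$: the $3\max(\cdot,\cdot)$ margin condition constrains only sub-optimal/optimal pairs, and $\Delta_i$ can be driven arbitrarily small by a second Pareto-optimal arm $j$ through the term $\M(i,j) \land \M(j,i)$, in which case the $2\Delta_i$ down-shift changes nothing and Bretagnolle--Huber gives no contradiction with correctness. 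Closing this case requires additional structure of $\cB$ from \cite{kone2023bandit} beyond what is restated here (e.g.\ that each optimal arm's gap is realized by its paired sub-optimal arm); likewise your verification that $H(\nu^{(i)}) \leq H(\nu)$ remains a plan rather than a proof. You correctly flagged the structural step as the hard part, but your sketched resolution of the optimal-arm case is wrong as written, while your sub-optimal-arm case and the whole change-of-measure/pigeonhole pipeline are sound.
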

As explained above for the fixed-confidence setting. The proof of this lower bound also uses the change of distribution lemma. In  the instances $\nu^{(i)}$ introduced above, it is crucial that only the mean of arm $i$ has changed w.r.t $\nu^{(0)}$. Therefore, Theorem~\ref{thm:thm-lbd-kone} does not apply to general instances in linear PSI. We recall our lower-bound for linear PSI in the fixed-budget. 
\thmLbdFb*
\begin{proof}[Proof of Theorem~\ref{thm:thm-lbd-fb}]
Let $H$ be fixed and recall that $\mathbb{W}_H : \{ \nu_\text{lin} : H_{2, \text{lin}}(\nu)\leq H \}$ is the set of linear PSI instances with complexity less than $H$. The proof of Theorem~\ref{thm:thm-lbd-fb} follows similar lines to Theorem~\ref{thm:thm-lbd-fc}. Let $\nu$ be an un-structured bandit instance with $K\geq 2$ arms, dimension $\dimvec\geq 1$, with means $\vmu_1, \dots, \vmu_K \in [0, 1]^\dimvec$ and such that $H_2(\nu)\leq H$. We construct a linear PSI instance $\nu_{\text{lin}}$ from an unstructured multi-dimensional instance $\nu$ by setting $x_i := e _i$ for any $i\leq \dimfeat$ and for $i>\dimfeat$, $x_i = \boldsymbol{0}$ where $e_1, \dots , e_\dimfeat$ is the  canonical $\bR^\dimfeat$-basis. 
We also assume that the agent knows that $\vmu_i \in [0, 1]^\dimvec$ for any arm $i$. For $\nu_\text{lin}$ the arms $\{\dimfeat+1, \dots, K\}$ are necessarily sub-optimal so $\cS^\star \subset [\dimfeat]$ thus to identify the Pareto set and efficient algorithm should reduce to pull arms in $\{1,\dots, \dimfeat\}$.  Indeed, as explained in the proof of Theorem~\ref{thm:thm-lbd-fc} even if an arm $i \in \{\dimfeat+1,\dots, K\}$ dominates another arm $j\in \{1, \dots, \dimfeat\}$, as $j$ is not Pareto-optimal there exits another arm $j^\star \in \cS^\star \subset \{1,\dots, \dimfeat\}$ which is "cheaper'' to pull as it dominates $j$ with a larger margin. $\nu_\text{lin}$ reduces to a linear bandit $\tilde{\nu}_\text{lin}$ with only $\dimfeat$ arms and since the features $x_1,\dots, x_\dimfeat$ forms the canonical basis of $\bR^\dimfeat$, $\tilde{\nu}_\text{lin}$ is an un-structured bandit instance with (un-correlated) means $\tilde{\vmu}_i = \Theta^\T x_i$, $i = 1,\dots, \dimfeat$. Therefore, by choosing $\tilde{\nu} := (\nu_1, \dots, \nu_\dimfeat)$ that belongs to $\cB$, we can apply Theorem~\ref{thm:thm-lbd-kone} which yields 
$$ \max_{i\in \{0, \dots, K\}} \bP_{\tilde\nu^{(i)}}(S_T^\cA \neq \cS^\star(\tilde \nu^{(i)}))\geq \frac14\exp\left(-\frac{2T}{H\sigma^2}\right) $$
where by construction $\tilde\nu^{(i)}$ (see construction above) is also a linear PSI instance. Then further  noting that 
$H\geq H_2(\nu) \geq H_2(\tilde \nu)$ and by Theorem~\ref{thm:thm-lbd-kone} for any $i\leq \dimfeat$
$H_{2, \text{lin}}(\tilde \nu) \geq H_2(\tilde\nu^{(i)})$. Then recalling that $\nu_\text{lin}$ is equivalent to $\tilde \nu$ it comes 
$$ \inf_{\cA}\sup_{\nu \in \mathbb{W}_H}\bP_\nu(S_T^\cA \neq \cS^\star(\nu))\geq \frac14\exp\left(-\frac{2T}{H\sigma^2}\right),$$  which is the claimed result. \end{proof}
 \section{COMPUTATION AND ROUNDING OF G-OPTIMAL DESIGN}
\label{sec:compte_round}
In this section, we discuss the results related to the G-design and the rounding. 
In what follows, let $S \subset [K]$ be a set of arms. To ease notation we re-index the arms of $S$ by assuming $S := \{ 1, \dots, \lvert S \rvert \}$. Let $N$ be the allocation budget (the total number of pulls of arms in $S$) and $\kappa \in (0, 1/3]$ the parameter of the rounding algorithm to be fixed. $h_S = \dimspan{S}$ is the dimension of the space spanned by the covariates of $S$. $\cX_S := (x_{i}, i\in S)^\T$ and $B_S:= (u_1, \dots, u_m)$ is the matrix formed with the first $m = h_S = \rank{S}$ columns of $U$, the matrix of left singular vectors of $\cX_S^\T$ obtained by singular value decomposition. We recall that  for $N$ pulls of arms in $[S]$, letting $T_i(N)$ be number of samples collected from arm $i$, 
\begin{equation}
V_N^{\dagger} := B_S (B_S^\T V_NB_S)^{-1}B_S^\T \quad \text{and} \quad V_N := \sum_{i=1}^{K} {T_i(N)} { x_{i}} x_{i}^\T. 
\end{equation}

As from Lemma~\ref{lem:lem-pseudo-inv} the statistical uncertainty from estimating the mean of arm $i$ scales with $\|x_i\|_{V_N^\dagger}$, a call to \optDesign{}($S$, $N$, $\kappa$) is meant to estimate the hidden parameter $\Theta$ by collecting $N$ samples from arms in $S$ according to an approximation of the solution of the following problem (ordinal $G$-optimal design):

\begin{equation}
\label{eq:eq-px1}
\begin{aligned}
\argmin_{s \in [0,\dots,N]^{\lvert S\lvert}}\max_{i\in S} \quad & \| x_i\|_{(V^s)^{\dagger}}\\
\textrm{s.t.} \quad & \sum_{i\in S} s(i) = N\;.
\end{aligned}
\end{equation}

Finding such an optimal design with integer values is an NP-hard problem \citep{zhu}. Instead, its continuous relaxation (obtained by normalizing by $N$), amounts to finding an allocation $\omega$ that minimizes 
\begin{equation}
\label{eq:eq-oo1}
\max_{i\in S} (B_S^\T x_i)^\T \left(\sum_{i\in S} \omega(i) B_S^\T x_i x_i^\T B_S \right)^{-1}B_S^\T x_i, 	
\end{equation}
which reduces to compute a G-optimal allocation on the covariates $B_S ^\T x_i, i \in S:$ 
\begin{equation}
\label{eq:eq-pp1}
w^\star_S \in \argmin_{\omega \in \boldsymbol{\Delta}_{\lvert S\lvert}}\max_{i\in S} \|\widetilde x_i\|_{(\widetilde V^\omega)^{-1}}^2,	 \text{ and } \widetilde V^\omega := \sum_{i\in S} \omega(i) \widetilde x_i \widetilde x_i^\T.
\end{equation}
This is a convex optimization problem on the probability simplex of $\bR^{\lvert S \rvert}$. Efficient solvers have been used in the literature for linear BAI and experiment design optimization see (e.g \cite{tanner, soare}). In this work, we follow \cite{zhu} and we recommend an entropic mirror descent algorithm to solve \myeqref{eq:eq-pp1}, which is recalled as Algorithm~\ref{alg:entropic_md} for the sake of completeness. 

Then, computing an integer allocation whose value is close to the optimal value of \myeqref{eq:eq-pp1} can be done in different ways. \cite{tao} and \cite{camilleri21_kernel} use a stochastic rounding: they sample N arms from $S$ following the distribution $\omega^\star_S$ and use a novel estimator different from the least-squares estimate. \cite{yang, azizi} use floors and ceilings of $N\omega^\star_S$. Although practical, it is known that the value of such rounded allocations can deviate a lot from the optimal value of  \myeqref{eq:eq-px1} \citep{tao}. 

\begin{algorithm}[htb] 
\caption{Entropic mirror descent algorithm for computing $w^\star_S$ \cite{tao}}
   \label{alg:entropic_md}

   {\bfseries Input:} {A set of arms $S$ and covariates $(\widetilde x_i, i\in S)$, tolerance $\veps$ and Lipschitz constant $L_f$}
  
  {\bfseries Initialize:} {$t\gets 1$ and $w^{(1)} \gets (1/\lvert S\lvert, \dots, 1/\lvert S\lvert)$}
  
   \While{$\lvert \max_{ i \in S} \widetilde x_i^{\T} (\widetilde V^{w^{(t)}})^{-1}\widetilde x_i - \dimfeat_S\lvert \geq \veps$}{
   
    {\bfseries set} {$\eta_t \gets \frac{\sqrt{2 \ln N}}{L_f} \frac{1}{\sqrt{t}}$}
    
  {Compute gradient} {$g_i^{(t)} \leftarrow \operatorname{Tr}\left(\widetilde V\left(w^{(t)}\right)^{-1}\left(\widetilde x_i \widetilde x_i^T\right)\right)$}
  
  { Update} {$w_i^{(t+1)} \leftarrow \frac{w_i^{(t)} \exp \left(\eta_t g_i^{(t)}\right)}{\sum_{i=1}^N w_i^{(t)} \exp \left(\eta_t g_i^{(t)}\right)}$}
  
  {$t\gets t+1$}
  }
 
    {\bfseries return:} {$w^{(t)}$}

\end{algorithm}

\cite{zhu} proposed an efficient rounding procedure that guarantees that the value of the returned integer allocation is within a small factor of the optimal value of \myeqref{eq:eq-pp1}. Before recalling their result, we introduce the notation $F_S(s) := \max_{i\in S} \|x_i\|_{(V^{s})^{\dagger}}^2$.  

We recall the celebrated Kiefer-Wolfowitz equivalence theorem below.

\begin{theorem}[Restatement of \cite{kiefer_wolfowitz_1960}]
\label{thm:thm-kiefer-equiv} Let covariates $\{x_i:i\in S \} \subset \bR^\dimfeat$ and for any $\omega \in \boldsymbol{\Delta}_{\lvert S \rvert}$ define $V^\omega = \sum_{i\in S} \omega(i) x_ix_i^\T$ and when $V^\omega$ is non-singular $f(x; \omega):= x^\T (V^\omega)^{-1} x$.  
 The following two extremum problems: 
 \begin{enumerate}[a)]
 	\item $\omega$ maximizing $\text{det}(V^\omega)$ 
 	\item $\omega$ minimizing $\max_{i \in S} f(x_i; \omega)$\label{eq:eq-pw1}
 \end{enumerate}
 are equivalent and a sufficient condition to satisfy \myeqref{eq:eq-pw1} is $\max_{i\in S} f(x_i, \omega) =\dimfeat$, which is satisfied when the covariates $\{x_i:i\in S \} $ span $\bR^h$. 
\end{theorem}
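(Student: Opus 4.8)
The plan is to prove this classical Kiefer--Wolfowitz equivalence by exploiting the concavity of the map $\omega \mapsto \log\det V^\omega$ and matching its first-order optimality conditions on the simplex with the $G$-optimality criterion in (b). First I would record the trace identity valid whenever $V^\omega$ is non-singular: since $\sum_{i\in S}\omega(i)\,x_i x_i^\T = V^\omega$,
\[
\sum_{i\in S}\omega(i)\, f(x_i;\omega) = \Tr\!\left((V^\omega)^{-1}\sum_{i\in S}\omega(i)\,x_i x_i^\T\right) = \Tr(I_h) = h .
\]
Because a weighted average of the numbers $f(x_i;\omega)$ equals $h$, we always have $\max_{i\in S} f(x_i;\omega)\geq h$. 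This already yields the ``sufficient condition'' part of the statement: any $\omega$ attaining $\max_{i\in S} f(x_i;\omega)=h$ reaches the smallest possible value of the $G$-criterion, hence solves (b).

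Next I would establish the directional-derivative formula linking the two problems. Writing $\omega_\alpha := (1-\alpha)\omega + \alpha e_j$ for the perturbation towards the vertex $e_j$, so that $V^{\omega_\alpha} = (1-\alpha)V^\omega + \alpha\, x_j x_j^\T$, a direct differentiation of $\log\det$ gives
\[
\left.\frac{d}{d\alpha}\right|_{\alpha=0^+} \log\det V^{\omega_\alpha} = \Tr\!\left((V^\omega)^{-1}\big(x_j x_j^\T - V^\omega\big)\right) = f(x_j;\omega) - h .
\]
Since $\omega \mapsto \log\det V^\omega$ is the composition of the linear map $\omega\mapsto V^\omega$ with the concave function $\log\det$ on positive-definite matrices, it is concave on the convex set of weights for which $V^\omega$ is non-singular. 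By the first-order optimality theory for concave maximisation over the simplex, a feasible $\omega^\star$ maximises it if and only if every directional derivative towards a vertex is non-positive, i.e. $f(x_j;\omega^\star)\leq h$ for all $j\in S$. Thus $\omega^\star$ solves (a) if and only if $\max_{i\in S} f(x_i;\omega^\star)\leq h$.

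Finally I would close the loop. Combining this characterisation with the universal lower bound $\max_{i\in S} f(x_i;\omega)\geq h$ shows that $\omega^\star$ is $D$-optimal if and only if $\max_{i\in S} f(x_i;\omega^\star)=h$; in particular the optimal value of (b) equals $h$ and is attained exactly at the $D$-optimal designs, giving (a) $\Leftrightarrow$ (b). The spanning hypothesis enters to ensure that the feasible set is non-empty and that a maximiser of $\log\det$ produces a non-singular $V^{\omega^\star}$, so that $f(\cdot;\omega^\star)$ is well defined; existence itself follows from compactness of the simplex together with upper semicontinuity of $\log\det$ (extended by $-\infty$ on singular matrices). The main obstacle is precisely this boundary and non-singularity bookkeeping: one must argue that when $\{x_i:i\in S\}$ spans $\bR^h$ the $D$-optimal weights charge enough points to make $V^{\omega^\star}$ invertible, since otherwise $\log\det=-\infty$ and the directional-derivative argument is vacuous. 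Once that is secured, the algebraic identities above are routine.
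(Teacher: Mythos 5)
The paper never proves this statement: it is quoted as a restatement of the classical Kiefer--Wolfowitz equivalence theorem and simply cited to the 1960 paper, so there is no internal proof to compare against. Your proposal supplies the standard self-contained argument, and it is correct. The three ingredients are all sound: (i) the trace identity $\sum_{i\in S}\omega(i)f(x_i;\omega)=\Tr\bigl((V^\omega)^{-1}V^\omega\bigr)=h$, which gives the universal lower bound $\max_{i\in S}f(x_i;\omega)\geq h$ and hence the sufficiency of $\max_i f(x_i;\omega)=h$ for G-optimality; (ii) the directional derivative $\left.\tfrac{d}{d\alpha}\right|_{0^+}\log\det V^{(1-\alpha)\omega+\alpha e_j}=f(x_j;\omega)-h$, correct since $\Tr\bigl((V^\omega)^{-1}x_jx_j^\T\bigr)=f(x_j;\omega)$; and (iii) the first-order characterization of maximizers of the concave map $\omega\mapsto\log\det V^\omega$ over the simplex, where checking only vertex directions suffices because any feasible direction is a nonnegative combination of directions $e_j-\omega^\star$. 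Closing the loop as you do (D-optimal $\Rightarrow$ all $f(x_j;\omega^\star)\leq h$ $\Rightarrow$ G-value $h$ $\Rightarrow$ G-optimal; conversely a G-optimal design must also attain the value $h$, hence satisfies the stationarity condition and is D-optimal by concavity) does establish the equivalence of the two solution sets. You are also right that the only delicate bookkeeping is existence and non-singularity: compactness of the simplex with the $-\infty$ extension of $\log\det$ gives a maximizer, and the spanning hypothesis guarantees the optimal value is finite so $V^{\omega^\star}$ is invertible and the derivative computations apply. This is essentially the textbook proof (as in the original reference and in standard bandit texts), so where the paper uses the theorem as a black box to conclude $F_S(\omega_S^\star)=h_S$ for the transformed covariates $B^\T x_i$, your argument makes the appendix self-contained at no real cost.
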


\begin{theorem}[reformulated; rounding method of \cite{zhu}]
\label{thm:thm-rounding}
 Suppose $\kappa \in(0,1 / 3]$ and  $N \geq 5 
  \dimfeat_S / \kappa^2$. Let $\omega^\star_S = \argmin_{\omega \in \boldsymbol{\Delta}_{S}} F_S(\omega)$. Then, there exists an algorithm that outputs an integer allocation $s^\star$ satisfying $$
{s}^\star \in \boldsymbol{\mathcal{D}}_{S, N} \quad \text { and } \quad F_S({s}^\star) \leq(1+6 \kappa) \frac{F_S(\omega^\star_S)}{N} $$
 where $\boldsymbol{\cD}_{S, N} := \{ s\in \{ 0, \dots, N\}^{\lvert S\lvert} : \sum_{i\in S}s(i) = N \}$. This algorithm runs in time complexity $\widetilde{O}\left(N\lvert S\lvert \widetilde\dimfeat^2\right)$. 
\end{theorem}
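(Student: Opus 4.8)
The plan is to reduce the rank-deficient design problem to a genuinely full-rank G-optimal design over the projected features $\widetilde x_i := B_S^\T x_i \in \bR^{h_S}$ and then invoke the rounding guarantee of \cite{zhu}, which is stated for non-degenerate designs. First I would record the key algebraic identity. By the definition $(V^s)^{\dagger} = B_S(\widetilde V^s)^{-1} B_S^\T$ with $\widetilde V^s := B_S^\T V^s B_S = \sum_{i \in S} s(i)\,\widetilde x_i \widetilde x_i^\T$, one has for every integer allocation $s$ and every $i \in S$,
$$\|x_i\|^2_{(V^s)^{\dagger}} = (B_S^\T x_i)^\T (\widetilde V^s)^{-1} (B_S^\T x_i) = \|\widetilde x_i\|^2_{(\widetilde V^s)^{-1}},$$
so that $F_S(s) = \max_{i\in S} \|\widetilde x_i\|^2_{(\widetilde V^s)^{-1}}$, and the same identity holds when $s$ is replaced by a continuous weight $\omega$. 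Since the columns of $B_S$ form an orthonormal basis of $E := \text{span}(\{x_i : i\in S\})$, the features $\widetilde x_i = B_S^\T x_i$ span $\bR^{h_S}$; hence both the integer problem~\eqref{eq:eq-px1} and its relaxation~\eqref{eq:eq-pp1} are exactly the \emph{non-degenerate} G-optimal design problems for the point set $\{\widetilde x_i : i\in S\} \subset \bR^{h_S}$.

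Next I would pin down the optimal value and apply the rounding. Because $\{\widetilde x_i\}$ spans $\bR^{h_S}$, the continuous optimum $\omega^\star_S$ is a bona fide G-optimal design for full-rank features, so the Kiefer--Wolfowitz equivalence theorem (Theorem~\ref{thm:thm-kiefer-equiv}, applied in dimension $h_S$) gives $F_S(\omega^\star_S) = h_S$. With the problem now in non-degenerate form, I would run \cite{zhu}'s rounding algorithm directly on $\{\widetilde x_i : i\in S\}$: for $\kappa \in (0,1/3]$ and $N \geq 5 h_S/\kappa^2$, it returns $s^\star$ with $\sum_{i\in S} s^\star(i) = N$, $\widetilde V^{s^\star}$ invertible, and
$$\max_{i\in S} \|\widetilde x_i\|^2_{(\widetilde V^{s^\star})^{-1}} \leq (1+6\kappa)\,\frac{F_S(\omega^\star_S)}{N}.$$
Transporting this back through the identity of the first step yields $F_S(s^\star) \leq (1+6\kappa) F_S(\omega^\star_S)/N$, which is the claimed bound; the complexity statement follows from the per-iteration cost of \cite{zhu}'s procedure applied to $\lvert S\rvert$ support points in dimension $h_S = \widetilde\dimfeat$, together with the one-time cost of forming $B_S$ and the $\widetilde x_i$.

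I expect the main obstacle to be the careful bookkeeping around the rank deficiency: one must verify that the pseudo-inverse genuinely collapses to an honest inverse on the reduced space (so that the identity of the first paragraph is exact and $\widetilde V^s$ is well defined), and that the rounded allocation $s^\star$ keeps $\widetilde V^{s^\star}$ non-singular. The latter is precisely where the hypothesis $N \geq 5 h_S/\kappa^2$ is used, since it is what lets \cite{zhu}'s guarantee force the support of $s^\star$ to still span $\bR^{h_S}$. Once these two points are secured, the reduction makes \cite{zhu}'s non-degenerate result applicable verbatim, so no new rounding analysis is required and the theorem follows.
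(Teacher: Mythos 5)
Your proposal is correct and matches the paper's treatment: the paper states this theorem as a reformulation of the rounding result of \cite{zhu} without a separate proof, the reformulation being exactly your reduction — the identity $\|x_i\|^2_{(V^s)^{\dagger}} = \|B_S^\T x_i\|^2_{(\widetilde V^s)^{-1}}$ turning the rank-deficient problem into a non-degenerate G-optimal design over $\{B_S^\T x_i\}$ spanning $\bR^{h_S}$, to which Zhu's guarantee applies verbatim (the paper sets up the same $B_S$ projection and applies Kiefer--Wolfowitz to these projected features in the surrounding text of Appendix~\ref{sec:compte_round}). The only inessential difference is that your Kiefer--Wolfowitz step ($F_S(\omega^\star_S)=h_S$) is not needed for this statement itself — the bound is relative to $F_S(\omega^\star_S)$ — but only for the subsequent Lemma~\ref{lem:lem-alloc-psinv}.
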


We refer to a call to this algorithm as \texttt{ROUND}($N, \{\widetilde x_i, i \in S\}, \omega_S^\star, \kappa$). It returns an integer allocation $s^\star = (s^*(1),\dots,s^*(|S|))$ from which we can immediately deduce a list of arms to pull (the first arm in $S$ replicated $s^*(1)$ times, the second replicated $s^*(2)$ times, etc.).  
 
Simple arguments from linear algebra show that the $\dimfeat_S$ columns of $B_S$ form a basis of span($\{x_i: i\in S\}$), hence $\{ B_S^\T x_i: i\in S \}$  spans $\bR^{\dimfeat_S}$. Using Theorem~\ref{thm:thm-kiefer-equiv} applied to the covariates $\{B_S^\T x_i: i\in S\}$ yields $$ F_S(\omega^\star_S) = \dimfeat_S$$ 
and thus the integer allocation $s^\star$ output by \texttt{ROUND}($N, \{\widetilde x_{i},i\in S\}, \omega_S^\star, \kappa$) satisfies for $N\geq 5{\dimfeat_S}/\kappa^2$, $$ F(s^\star) \leq (1+6\kappa) \frac{\dimfeat_S}{N},$$
which is stated below.  
\begin{lemma}
\label{lem:lem-alloc-psinv}
Let $S\subset [K]$, $\kappa \in (0, 1/3]$ and $N\geq 5\dimfeat_S/\kappa^2$ where 
$\dimfeat_S = \text{dim(span($\{x_i: i\in S\}$))}$.  The  
allocation $\{ T_i(N) : i\in S\}$ computed by \optDesign($S$, $N$, $\kappa$) to estimate $\Theta$ satisfies 
 $$ \max_{i\in S} \|x_i\|^2_{V_N^\dagger} \leq (1+6\kappa)\frac{\dimfeat_S}{N}.$$
\end{lemma}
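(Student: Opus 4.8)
The plan is to combine the Kiefer--Wolfowitz equivalence theorem (Theorem~\ref{thm:thm-kiefer-equiv}) with the rounding guarantee of \cite{zhu} (Theorem~\ref{thm:thm-rounding}). The bridge between the two is a reduction of the pseudo-inverse design value $\|x_i\|_{(V^\omega)^\dagger}^2$ to an ordinary (invertible) $G$-design value on the transformed covariates $\widetilde x_i := B_S^\T x_i$.

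First I would record the identity $\|x_i\|_{(V^\omega)^\dagger}^2 = \|\widetilde x_i\|_{(\widetilde V^\omega)^{-1}}^2$, valid for every allocation $\omega$ supported on $S$. This follows directly from the definition $(V^\omega)^\dagger = B_S(B_S^\T V^\omega B_S)^{-1}B_S^\T$ together with $B_S^\T V^\omega B_S = \sum_{i\in S}\omega(i)\widetilde x_i \widetilde x_i^\T = \widetilde V^\omega$, so that $x_i^\T (V^\omega)^\dagger x_i = (B_S^\T x_i)^\T (\widetilde V^\omega)^{-1}(B_S^\T x_i)$. In particular $F_S(\omega) = \max_{i\in S}\|\widetilde x_i\|_{(\widetilde V^\omega)^{-1}}^2$, so minimizing $F_S$ over the simplex is exactly the $G$-optimal design problem \eqref{eq:eq-pp1} on the features $\{\widetilde x_i : i\in S\}$.

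Next I would verify the span hypothesis needed for Kiefer--Wolfowitz: since the $\dimfeat_S$ columns of $B_S$ form a basis of $\mathrm{span}(\{x_i : i\in S\})$, the transformed covariates $\{\widetilde x_i : i\in S\}$ span $\bR^{\dimfeat_S}$ and $\widetilde V^\omega$ is nonsingular for any full-support $\omega$. Applying Theorem~\ref{thm:thm-kiefer-equiv} to these covariates in dimension $\dimfeat_S$ then gives $F_S(\omega_S^\star) = \min_{\omega}\max_{i\in S}\|\widetilde x_i\|_{(\widetilde V^\omega)^{-1}}^2 = \dimfeat_S$.

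Finally, since $N \geq 5\dimfeat_S/\kappa^2$ and $\kappa \in (0,1/3]$, Theorem~\ref{thm:thm-rounding} guarantees that \texttt{ROUND} returns an integer allocation $s^\star \in \boldsymbol{\cD}_{S,N}$ with $F_S(s^\star) \leq (1+6\kappa)F_S(\omega_S^\star)/N = (1+6\kappa)\dimfeat_S/N$. As $s^\star$ is precisely the allocation $\{T_i(N) : i\in S\}$ used by \optDesign, we have $V_N = V^{s^\star}$, and hence $\max_{i\in S}\|x_i\|_{V_N^\dagger}^2 = F_S(s^\star) \leq (1+6\kappa)\dimfeat_S/N$, which is the claim. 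The only genuinely nontrivial step is the reduction in the first paragraph, i.e.\ checking that the pseudo-inverse weighted norm through $B_S$ coincides with the ordinary inverse norm on the projected features; everything else is a direct invocation of the two cited theorems once the span condition is confirmed.
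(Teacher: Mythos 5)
Your proof is correct and follows essentially the same route as the paper: reduce the pseudo-inverse design value to an ordinary $G$-optimal design on the projected covariates $B_S^\T x_i$, apply the Kiefer--Wolfowitz theorem to get $F_S(\omega_S^\star)=\dimfeat_S$, then invoke the rounding guarantee of \cite{zhu}. The paper handles your explicit identity $\|x_i\|_{(V^\omega)^\dagger}^2 = \|\widetilde x_i\|_{(\widetilde V^\omega)^{-1}}^2$ implicitly via its formulation of the relaxed design problem in \eqref{eq:eq-oo1}--\eqref{eq:eq-pp1}, so making it explicit is a presentational difference, not a different argument.
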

Building on this result, we derive the following concentration result. 
\lemConcentrDesign*
\begin{proof}[Proof of Lemma~\ref{lem:lem-concentr-design}]
We recall that, by assumption, the vector noise has $\sigma$-sub-gaussian marginals. 
	From the proof of Lemma~\ref{lem:lem-psinv-gene} it is easy to see that for any $i\in S$, the marginals of $(\Theta-\widehat\Theta)x_i$ are $\sigma \|X_N^\T V_N^{\dagger}x_i\|_2$-sub-gaussian. Then, direct calculation shows that 
	\begin{eqnarray*}
		\|X_N^\T V_N^{\dagger}x_i\|_2^2 &=& x_i^\T V_N^{\dagger} V_N V_N^{\dagger}x_i\\ \\
		&=& x_i^\T \left(B_S(B_S^\T V_N B_S)^{-1}B_S^\T\right) V_N \left(B_S(B_S^\T V_N B_S)^{-1}B_S^\T\right) x_i \\
		&=& x_i^\T B_S(B_S^\T V_N B_S)^{-1}B_S^\T x_i \\
		&=& x_i^\T V^{\dagger}_N x_i  = \|x_i\|_{V_N^\dagger}^2.
	\end{eqnarray*}
	Therefore, by concentration of sub-gaussian variables (see e.g \cite{lattimore_bandit_2020}) we have for $i$ fixed, 
	\begin{eqnarray*}
	\bP(\|(\Theta - \widehat\Theta)^\T x_i \|_\infty \geq \veps) &\leq& 2\dimvec \exp\left( - \frac{\eps^2}{2\sigma^2 \|x_i\|_{V_N^\dagger}^2}\right)	\\
	&\leq& 2\dimvec \exp\left( - \frac{\eps^2}{2\sigma^2 \max_{k \in S} \|x_k\|_{V_N^\dagger}^2}\right)
	\end{eqnarray*}
	then the G-optimal design and the rounding (Lemma~\ref{lem:lem-alloc-psinv}) ensure that 
	$$ \max_{k \in S} \|x_k\|_{V_N^\dagger}^2 \leq (1+6\kappa) \dimfeat_S / N.$$  
	Thus
	$$ \bP\left(\|(\Theta - \widehat\Theta)^\T x_i\|_\infty \geq \veps\right) \leq 2\dimvec \exp\left( - \frac{N\eps^2}{2(1+6\kappa)\sigma^2\dimfeat_S}\right).$$
\end{proof}
\section{IMPLEMENTATION DETAILS AND ADDITIONAL EXPERIMENTS}
\label{sec:add_experiments}
In this section, we detail our experimental setup and provide additional experimental results. 
\subsection{Complexity and setup}
\label{sec:impl}
\paragraph{Time and memory complexity} The main computational cost of \gege{} (excepting calls to \optDesign{}) is the computation of the empirical gaps. This requires computing $\Mh(i,j;r)$ for any tuple $(i, j)$ of active arms and to temporarily store them. Computing the gaps results in a total $\cO(K^2\dimvec)$ time complexity and $\cO(K^2)$ memory complexity. Note that for the memory allocation, we can maintain the same arrays for the whole execution of the algorithm; thus, only cheap memory allocations are made after initialization. The overall computational complexity is reasonable as \gege{} is an elimination algorithm the computational cost reduces after rounds and we have proven that no more than $\lceil\log_2(1/\Delta_1)\rceil$ rounds are required in the fixed-confidence regime and only $\lceil\log_2(\dimfeat)\rceil$ rounds in the fixed-budget setting. For this reason, the computational complexity of a call to \optDesign{} has a limited impact in practice. We report below the average runtime on a personal computer with an ARM CPU 8GB RAM and 256GB SSD storage. The values are averaged over 50 runs. 
 
\begin{table}[htb]
\caption{Runtime of \gege{} recorded different instances.}
\centering
\begin{tabular}{|c| c |c|} 
 \hline
 $[K,\dimfeat, \dimvec]$  & \gege{}[$\delta=0.1$] & \gege{}[$T=500$] \\ [0.5ex] 
 \hline\hline
 $[10, 2, 2]$ & 6ms & 217ms  \\ \hline{}
 $[50, 8, 2]$ & 7ms & 464ms \\\hline{}
 $[100, 8, 4]$ & 545ms & 791ms  \\ \hline{}
 $[200, 8, 8]$ & 768ms & 1139ms\\ \hline{}
 $[500, 8, 8]$ & 1013ms & 2425ms \\ [1ex] 
 \hline
\end{tabular}
\label{table:runtime}
\end{table}
\paragraph{Setup} We have implemented the algorithms mainly in \texttt{python3} and \texttt{C++}. For each experiment, the value reported (sample complexity or probability of error) is averaged over 500 runs. For the experiments on synthetic instances we generate an instance satisfying the conditions reported in the main by first picking the $\dimfeat$ vectors (and thus $\Theta$) then the remaining arms are generated by sampling and normalizing some features from $\mathcal{U}([0, 1]^\dimfeat)$ to satisfy the contraints. For the real-world datasets, we normalize the features and (when mentioned) we use a least square to estimate a regression parameter $\widehat\Theta$ or we use the dataset as such (mis-specified setting i.e., without linearization). \pal{} is run with the same confidence bonus used in \cite{zuluaga_e-pal_2016} (which are tuned empirically) and for \algauername{}, we follow \cite{kone2023adaptive} and we use their confidence bonuses on a pair of arms, which was already suggested by \cite{auer_pareto_2016}.

\subsection{Additional experiments}
\label{sec:add_exp}
We provide additional experiments on synthetic and real-world datasets. \gege{} is evaluated both in the fixed-confidence and fixed-budget regimes.  
\paragraph{Multi-objective optimization of energy efficiency}
We use the energy efficiency dataset of \cite{misc_energy_efficiency_242}. This dataset is made for buildings' energy performance optimization. The efficiency of each building is characterized by $\dimvec=2$ quantities: the cooling load and the heating load. The heating load is the amount of energy that should be brought to maintain a building at an acceptable temperature, and the cooling load is the amount of energy that should be extracted from a building to sustain a temperature in an acceptable range. Ideally, both heating and cooling loads should be low for energy efficiency, and they are characterized by different factors like glazing area and the orientation of the building, amongst other parameters. \cite{misc_energy_efficiency_242} 
reported the simulated heating and cooling loads of $K=768$ buildings together with $\dimfeat = 8$ features characterizing 
each building, including surface, roof and wall areas, the relative compactness, overall height, etc.  The dataset was primarily made for multivariate regression, but we use it for linear PSI as the goal is to optimize simultaneously heating and cooling loads, which in general (and in this case), results in a Pareto front of 3 arms. 
 
We evaluate Algorithm~\ref{alg:gegefb} with a budget 
$T=10000$ and in the fixed-confidence we set $\delta=0.1$ for Algorithm~\ref{alg:gegeFc}. We report the results average over 500 runs on \myfig{fig:fig-ds1-supp-fb} and \myfig{fig:fig-ds1-supp-fc}.  In the fixed-confidence experiment, "Racing'' is the algorithm of \cite{auer_pareto_2016} for unstructured PSI. 
 \begin{minipage}{0.92\textwidth}
      \centering
  \begin{minipage}{0.42\linewidth}
          \begin{figure}[H]
              \includegraphics[width=\linewidth]{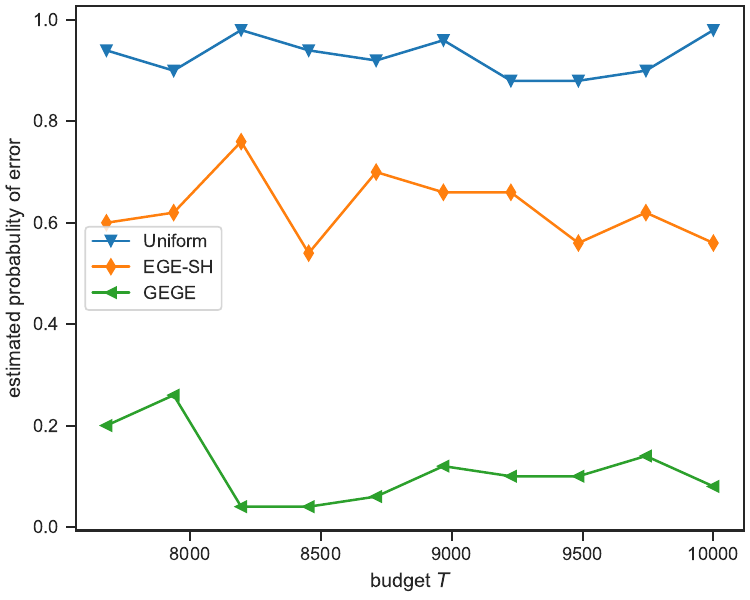}	
              \caption{Average probability of error on the energy efficiency dataset.}
              \label{fig:fig-ds1-supp-fb}
          \end{figure}
      \end{minipage}
\hspace{0.5cm}
      \begin{minipage}{0.42\linewidth}
          \begin{figure}[H]
      \includegraphics[width=\linewidth]{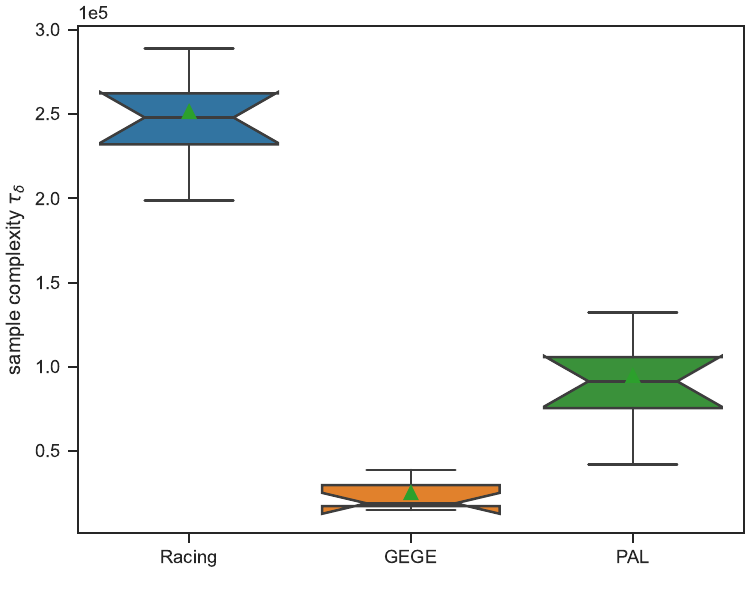}
              \caption{Sample complexity distribution on the energy efficiency dataset. }
              \label{fig:fig-ds1-supp-fc}
          \end{figure}
      \end{minipage}
      \end{minipage}

We observe that in both fixed-confidence and fixed-budget, \gege{} largely outperforms its competitors. It is worth noting that in the fixed-budget setting, as $K=768$, Uniform Allocation requires $T\geq 768$ to be run correctly, while EGE-SH requires $T\geq 7360$ to be able to initialize each arm, as they both ignore the linear structure. On the contrary \gegeFb{} just requires $T\geq \lceil \log(\dimfeat) \rceil$ which is negligible w.r.t $K=768$. Moreover, we observed that its probability of error is reasonable even for a budget $T<K$. 	
 \end{appendix}

\end{document}